\documentclass{ieeetmlcn}
\usepackage{cite}
\usepackage{amsmath,amssymb,amsfonts}
\usepackage{graphicx,color}
\usepackage{textcomp}
\usepackage{xcolor}
\usepackage{hyperref}
\hypersetup{hidelinks=true}
\def\BibTeX{{\rm B\kern-.05em{\sc i\kern-.025em b}\kern-.08em
    T\kern-.1667em\lower.7ex\hbox{E}\kern-.125emX}}
\AtBeginDocument{\definecolor{tmlcncolor}{cmyk}{0.93,0.59,0.15,0.02}\definecolor{NavyBlue}{RGB}{0,86,125}}

\def\authorrefmark#1{\ensuremath{^{\textbf{#1}}}}

\usepackage{xcolor}
\usepackage{booktabs}
\usepackage{multirow}
\usepackage{amsmath,amsfonts,bm}

\usepackage{enumitem}
\usepackage{mathtools}
\usepackage{algorithm}
\usepackage{algpseudocode}
\usepackage{hyperref}
\usepackage{caption}
\usepackage{subcaption}
\usepackage{tikz} 
\usetikzlibrary{positioning}
\usepackage{amsthm}
\usepackage{amsfonts}
\frenchspacing
\usetikzlibrary{external}
\usepackage{tabularx}
\newtheorem{theorem}{Theorem}[section]
\newtheorem{lemma}[theorem]{Lemma}
\newtheorem{corollary}{Corollary}[theorem]

\newcommand{\ie}{{\em i.e., }}
\newcommand{\eg}{{\em e.g., }}

\renewcommand\vec{\mathbf}
\newcommand{\xbar}[1][t]{\bar{\vec{x}}_{#1}}
\newcommand{\xv}[1][t]{\vec{x}^v_{#1}}
\newcommand{\xtild}[1][t]{\tilde{\vec{x}}_{#1}}

\DeclareMathOperator{\EX}{\mathbb{E}}
\newcommand{\norm}[1]{\| #1 \|^2}
\newcommand{\normm}[1]{\| #1 \|}
\newcommand{\inpr}[2]{\langle #1 , #2 \rangle}
\newcommand{\fiv}[1][t]{f_{i_{#1}^v}}
\newcommand{\xz}{\vec{x}_0}
\newcommand{\xstar}{\vec{x}^*}
\newcommand{\xhat}{\hat{\vec{x}}_T}
\newcommand{\sumov}{\sum_{v=1}^{V}}

\newcommand{\gstar}{\vec{g}^*_t}
\newcommand{\g}{\vec{g}_t}
\newcommand{\gbar}{\bar{\vec{g}}_t}
\newcommand{\DvtoD}[1][v]{\frac{D_{#1}}{D}}
\newcommand{\data}[1]{\mathcal{D}_#1}

\newcommand{\lv}[1][t]{l^v_{#1}}
\newcommand{\uv}[1][t]{u^v_{#1}}
\newcommand{\sv}[1][t+1]{s^v_{#1}}
\newcommand{\tav}[1][t]{\tau^v_{#1}}
\usepackage{xspace,exscale,relsize}

\newcommand{\ours}{DIGEST\xspace}
\newcommand{\gos}{Gossip\xspace}

\usepackage{hyperref}
\usepackage{url}


\begin{document}


\title{DIGEST: Fast and Communication Efficient  Decentralized Learning with Local Updates}

\author{Peyman Gholami\authorrefmark{1}, Member, IEEE, Hulya Seferoglu\authorrefmark{1}, Senior Member, IEEE}
\affil{Department of ECE, University of Illinois at Chicago}
\corresp{Corresponding author: Peyman Gholami (email: pghola2@uic.edu).}


\begin{abstract}

Two widely considered decentralized learning algorithms are \gos and random walk-based learning. \gos algorithms (both synchronous and asynchronous versions) suffer from high communication cost, while random-walk based learning experiences increased convergence time. In this paper, we design a fast and communication-efficient asynchronous decentralized learning mechanism \ours by taking advantage of both \gos and random-walk ideas, and focusing on stochastic gradient descent (SGD). 
%
\ours is an asynchronous decentralized algorithm building on local-SGD algorithms, which are originally designed for communication efficient centralized learning. We design both single-stream and multi-stream \ours, where the communication overhead may increase when the number of streams increases, and there is a convergence and communication overhead trade-off which can be leveraged. We analyze the convergence of single- and multi-stream \ours, and prove that both algorithms approach to the optimal solution asymptotically for both iid and non-iid data distributions. We evaluate the performance of single- and multi-stream \ours for logistic regression and a deep neural network ResNet20.  The simulation results confirm that multi-stream \ours has nice convergence properties; \ie its convergence time is better than or comparable to the baselines in iid setting, and outperforms the baselines in non-iid setting.

%
%
%
%
%
%
\end{abstract}

\begin{IEEEkeywords}
Machine learning, distributed learning, decentralized learning, local stochastic gradient descent (SGD), federated learning. 
\end{IEEEkeywords}


\maketitle

\vspace{-10pt}
\section{Introduction}
Emerging applications such as Internet of Things (IoT),  mobile healthcare, self-driving cars, etc. dictate learning be performed on data predominantly originating at edge and end user devices \cite{gubbi2013internet, li2018learning,iot}. 
A growing body of research work, \eg federated learning \cite{FL, kairouz2021advances, Konecn2015FederatedOD,Mc2017FL,Li2019FL,Li2020OnTC} has focused on engaging the edge in the learning process, along with the cloud, by allowing the data to be processed locally instead of being shipped to the cloud.  
Learning beyond the cloud can be advantageous in terms of better utilization of network resources, delay reduction, and resiliency against cloud unavailability and catastrophic failures.
%
However,  the proposed solutions, like federated learning,  predominantly suffer from having a critical centralized component referred to as the Parameter Server (PS) organizing and aggregating the devices' computations. Decentralized learning, advocating the elimination of PSs, emerges as a promising solution to this problem. 

Decentralized algorithms have been extensively studied in the literature, with \gos algorithms receiving the lion's share of research attention \cite{Boyd2006RandomizedGA, Nedic2009DistributedSM,Koloskova2019DecentralizedSO,Aysal2009BroadcastGA,Duchi2012DualAF, kempe2003Gossip,Xiao2003FastLI,Boyd2006RGossip}. In \gos algorithms, each node (edge or end user device) has its own locally kept model on which it effectuates the learning by talking to its neighbors. 
This makes \gos attractive from a failure-tolerance perspective. However, this comes at the expense of high network resource utilization. 
%
%
{As shown in Fig. 1a, nodes in the synchronous \gos algorithm use a synchronous clock to perform local model update and aggregation where aggregation demands receiving model updates from the neighbors.
Until their synchronization clocks expire, the nodes receive model updates from their neighbors. As seen, there should be data communication among all nodes after each model update, which is a significant communication overhead.
Furthermore, some nodes may be a bottleneck for the synchronization as these nodes (which are also called stragglers) can be delayed due to computation and/or communication delays, which increases the convergence time. This is due to the synchronous clock time that is determined according to the slowest node (or a set of fastest nodes).   
}
%

\begin{figure*}[thb]
     \centering
     \begin{subfigure}[b]{0.46\textwidth}
         \centering
         \includegraphics[width=\textwidth]{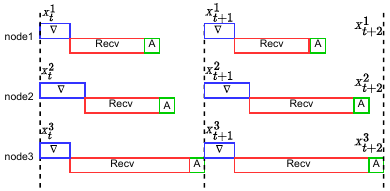}
         \caption{Sync-\gos}
         \label{syncgos}
     \end{subfigure}
     \hfill
     \begin{subfigure}[b]{0.46\textwidth}
         \centering
         \includegraphics[width=\textwidth]{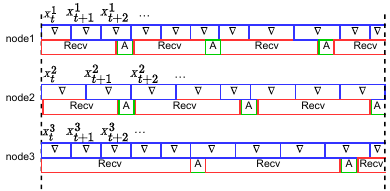}
         \caption{Async-\gos}
         \label{asyncgos}
     \end{subfigure}
     \begin{subfigure}[b]{0.46\textwidth}
         \centering
         \includegraphics[width=\textwidth]{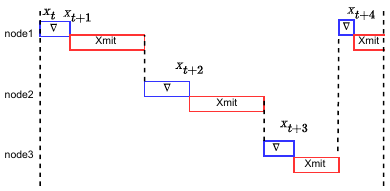}
         \caption{Random-Walk}
         \label{RW}
     \end{subfigure}
     \hfill
     \begin{subfigure}[b]{0.46\textwidth}
         \centering
         \includegraphics[width=\textwidth]{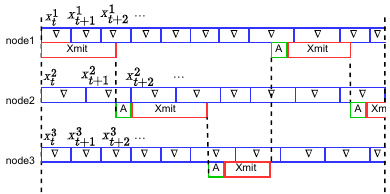}
         \caption{\ours}
         \label{digest}
     \end{subfigure}
        \caption{\ours in perspective as compared to existing decentralized learning algorithms; (a) synchronous \gos, asynchronous \gos, and random-walk. Note that ``$\nabla$'' represents a model update. ``Xmit'' represents the transmission of a model from a node to one of its neighbors. ``Recv'' represents the communication duration while receiving model updates from all of a node's neighbors.  
        ``A'' represents model aggregation. $\xv[t]$ shows the local model of node $v$ at iteration $t$. For random walk algorithm, the global model iterates are denoted as $\vec{x}_t$. {We note that the absence of blue boxes in all figures means that nodes do not continue their computations. On the other hand, the absence of red boxes means that there is no communication among neighboring nodes. We also note that communication (``Xmit'') and computation (``$\nabla$'')  are parallel in \ours and asynchronous \gos, but aggregation (``A'') and computation are sequential. The figure shows them as parallel tasks for the sake of easier presentation and considering that the duration of aggregation (``A'') is negligible as compared to communication (``Xmit'') and computation (``$\nabla$'').}
        }  
        \label{diagram}
    \vspace{-5pt}
\end{figure*}

Asynchronous \gos algorithms, where nodes communicate asynchronously and without waiting for others are promising to reduce idle nodes and 
eliminate the stragglers, \ie delayed nodes \cite{Lian2018Async,Assran2019StochasticGP, Li2018PipeSGDAD, Avidor2022LocallyAS}.
Indeed, asynchronous algorithms significantly reduce the idle times of nodes by performing model updates and model exchanges simultaneously as illustrated in Fig. \ref{asyncgos}.
For example, node 1 can still update its model from $\vec{x}_t^1$ to $\vec{x}_{t+1}^1$ and $\vec{x}_{t+2}^1$ while receiving model updates from its neighbors.
When it receives from all (or some) of its neighbors, it performs model aggregation.
{However, nodes still rely on iterative \gos averaging of their models, so  
updates propagate gradually across the network.}
Such delayed updates, also referred as gradient staleness in asynchronous \gos may lead to high error floors \cite{Dutta2021SlowAS}, or require very strict assumptions to converge to the  optimum solution \cite{Lian2018Async}. 
Moreover, such methods must be implemented with caution to prevent the occurrence of deadlocks \cite{Assran2019StochasticGP}.

{In both synchronous and asynchronous \gos, models propagate over the nodes and are updated by each node gradually as seen in Fig. \ref{infoa}. This may lead to a notion that we name ``diminishing updates'', where a node's update (e.g., node 1 in Fig. \ref{infoa}), even though crucial for convergence, may be averaged and mixed with other models in the next node (e.g., node 2 in Fig. \ref{infoa}). The diminishing updates are more emphasized when a model passes through high degree nodes, and detrimental to the convergence when data distribution is heterogeneous across the nodes.  }


If \gos algorithms are one side of the spectrum of decentralized learning algorithms, the other side is random-walk based decentralized learning \cite{Bertsekas96anew,GhadirRW2021, Sun2018MarkovSGD, Needell2014WeightedSampling}.
The random-walk algorithms advocate activating a node at a time, which would update the global model with its local data as illustrated in Fig. \ref{RW}. Then, the node selects one of its neighbors randomly and sends the updated global model. The selected neighbor becomes a newly activated node, so it updates the global model using its local data.
This continues until convergence. Random-walk algorithms reduce the communication cost as well as computation with the price of increased convergence time due to idle times at nodes. 

The goal of this work is to take advantage of both \gos and random-walk ideas to design a fast and communication-
efficient decentralized learning. 
Our key intuitions are; (i) Nodes do not need to communicate as much as \gos to update their models, i.e., a sporadic exchange of model updates is sufficient; {(ii) the diminishing updates inherent to \gos algorithms can be eliminated by employing a global model (as shown in Fig. \ref{infob}) as nodes do not average out multiple models received from multiple neighbors; they only add their models to the global model}; 
and (iii) nodes do not need to wait idle as in random walk. 

We design a fast and communication-efficient asynchronous decentralized learning mechanism \ours by particularly focusing on stochastic gradient descent (SGD).
\ours is an \emph{asynchronous decentralized learning} algorithm building on local-SGD algorithms, which are originally designed for communication efficient \emph{centralized learning} \cite{Stich2019LSGD, Wang2021LocalSGD, Lin2020DontUM}. 
%
%
%
In local-SGD, each node performs multiple model updates before sending {the model} to the PS. 
The PS aggregates the updates received from multiple nodes and transmits the updated global model back to nodes.
The sporadic communication between nodes and the PS reduces the communication overhead. We exploit this idea for \emph{asynchronous decentralized learning}. The following are our contributions.

\begin{itemize}[leftmargin=*]
\vspace{-5pt}
   \item \textbf{Design of \ours.} We design a fast and communication-efficient  asynchronous decentralized learning mechanism; \ours by particularly focusing on stochastic gradient descent (SGD).
   \ours works as follows. Each node keeps updating its local model all the time as in local-SGD. Meanwhile, there is an ongoing stream of global model update among nodes, Fig. \ref{digest}. For example, node 1 starts transmitting the global model to node 2 at time $t$. When node 2 receives the global model from node 1, it aggregates it with its local model. The aggregated global model is transmitted to node 3 next.
   We note that the exchanged models are global models as each node adds its own local updates to the received model.
   A node that has the global model selects the next node randomly among its neighbors for global model transmission.  
   After all the nodes update their {models} with a global model, \ours pauses global model exchange, while local SGD computations still continue. The global model exchange is repeated at every $H$ iterations. We name this algorithm single-stream \ours.  

\begin{figure}[t]
     \centering
     \begin{subfigure}[b]{0.22\textwidth}
         \centering
         \scalebox{0.75}{
         \begin{tikzpicture}[main/.style = {draw, circle,minimum size = 0.3cm,inner sep=0pt}]
            \node[main] (1) {};
            \node[above = .15] at (1) {node 1};
            \node[main] (2) [below = 1.cm of 1] {};
            \node[left = .3] at (2) {node 2};
            \node[main] (3) [below = 1.2cm of 2] {};
            \node[below = .15] at (3) {node 3};
            \node (21) [above left= .7cm and .5 cm of 2] {};
            \node (22) [below left= .7 cm and .5cm of 2] {};
            \node (27) [above left= .3cm and .9 cm of 2] {};
            \node (28) [below left= .3cm and .9 cm of 2] {};
            \node (29) [above right= .3cm and .9 cm of 2] {};
            \node (30) [below right= .3cm and .9 cm of 2] {};
            \node (23) [above right= .7cm and .5 cm of 2] {};
            \node (24) [below right= .7 cm and .5cm of 2] {};
            \draw[->,line width=.5mm,dotted] (21) -- (2);
            \draw[->,line width=.5mm,dotted] (22) -- (2);
            \draw[->,line width=.5mm,dotted] (23) -- (2);
            \draw[->,line width=.5mm,dotted] (24) -- (2);
            \draw[->,line width=.5mm,dotted] (27) -- (2);
            \draw[->,line width=.5mm,dotted] (28) -- (2);
            \draw[->,line width=.5mm,dotted] (29) -- (2);
            \draw[->,line width=.5mm,dotted] (30) -- (2);
            \node (a) [rectangle, draw, text = olive, fill = blue!100] [right = .4] at (1){};
            \node (b) [rectangle, draw, text = olive, fill = blue!15] [right = .4] at (2){};
            \node (c) [rectangle, draw, text = olive, fill = blue!15] [right = .4] at (3){};
            \draw[->,line width=.5mm,dotted] (1) -- (2);
            \draw[->,line width=.5mm,dotted] (2) -- (3);
            \draw[->,line width=.7mm] (a) -- (b);
            \draw[->,line width=.7mm] (b) -- (c);
        \end{tikzpicture}}
         \caption{\gos}
         \label{infoa}
     \end{subfigure}
     \begin{subfigure}[b]{0.22\textwidth}
     \centering
        \scalebox{0.75}{
         \begin{tikzpicture}[main/.style = {draw, circle,minimum size = 0.3cm,inner sep=0pt}]
            \node[main] (1) {};
            \node[above = .15] at (1) {node 1};
            \node[main] (2) [below = 1.2cm of 1] {};
            \node[left = .15] at (2) {node 2};
            \node[main] (3) [below = 1.2cm of 2] {};
            \node[below = .15] at (3) {node 3};
            \node (a) [rectangle, draw, text = olive, fill = blue!100] [right = .4] at (1){};
            \node (b) [rectangle, draw, text = olive, fill = blue!100] [right = .4] at (2){};
            \node (c) [rectangle, draw, text = olive, fill = blue!100] [right = .4] at (3){};
            \draw[->,line width=.5mm,dotted] (1) -- (2);
            \draw[->,line width=.5mm,dotted] (2) -- (3);
            \draw[->,line width=.7mm] (a) -- (b);
            \draw[->,line width=.7mm] (b) -- (c);
        \end{tikzpicture}}
        \caption{\ours}
        \label{infob}
     \end{subfigure}
        \caption{Spread of information across a decentralized network.}\label{info-spread}
        \vspace{-15pt}
\end{figure}
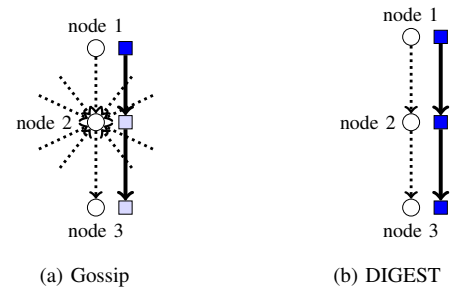

\item \textbf{Multi-Stream \ours.}   We further improve the convergence time of single-stream \ours by enabling multiple streams of global model updates.
   For example, there are $4$ streams working together at Fig. \ref{fig:multi_st}, where
   each stream operates on a smaller part of the network, so global model updates can be completed quickly. We identify the multiple streams using a rooted tree, which is determined in a decentralized manner via a distance vector algorithm \cite{Hedrick1988RoutingIP}. Note that two or more streams may intersect at one node which is how the streams aggregate their global models.
   The communication overhead may increase when the number of streams increases, and there is a nice convergence and communication overhead trade-off which can be exploited by adjusting $H$.
    \item \textbf{Convergence analysis of \ours.} 
    %
    We analyze the convergence of single- and multi-stream \ours, and prove that both algorithms approach the optimal solution asymptotically. We show that \ours's approach of simultaneous global model updating and local SGD iterations does not hurt the convergence rate and does not create any convergence gap. Furthermore, \ours is not affected by the network topology, \ie even high degree nodes do not create any learning bias. 
    We also indicate how frequently global model updates should be made, i.e., what the value of $H$ should be to achieve linear speed-up $O(\frac{1}{VT})$ in both iid and non-iid cases, where $V$ is the number of nodes in the network and $T$ is total number of iterations. 
    
    

    \item \textbf{Evaluation of \ours.} 
    We evaluate the performance of single- and multi-stream \ours for logistic regression and a deep neural network ResNet20 \cite{He2015resnet} for datasets \textit{w8a} \cite{w8a} and \textit{MNIST} \cite{mnist}, and \textit{CIFAR-10} \cite{cifar10}. We consider both iid and non-iid data distributions over various network topologies with different number of nodes. 
    %
    %
    The simulation results confirm that \ours has nice convergence properties; \ie its convergence time is better than or comparable to the baselines in iid setting, and outperforms the baselines in non-iid setting.
    

    

\end{itemize}

\begin{figure}[t]
     \centering
        \scalebox{0.75}{
         \begin{tikzpicture}[main/.style = {draw, circle,minimum size = 0.3cm,inner sep=0pt}]
            \node[main] (1) {};
            \node[left=.1] at (1) {node 1};
            \node[main] (2) [above right=0.7cm and 2cm of 1] {};
            \node[above right = .01] at (2) {node 2};
            \node[main] (5) [below right=0.7cm and 2cm of 2] {};
            \node[right=.2] at (5) {node 5};
            \node[main] (4) [below left=1cm and .2cm of 2] {};
            \node[above left =.09 and .001] at (4) {node 4};
            \node[main] (3) [below left=.7cm and 1cm of 4] {};
            \node[left=.1] at (3) {node 3};
            \node[main] (6) [below right=.7cm and 1cm of 4] {};
            \node[below = .1] at (6) {node 6};
            \node[main] (7) [below right=.1cm and 1.5cm of 6] {};
            \node[right=.2] at (7) {node 7};
            \draw[line width=.5mm] (1) -- (2);
            \draw[line width=.5mm] (1) -- (3);
            \draw[line width=.5mm] (2) -- (5);
            \draw[line width=.5mm] (4) -- (5);
            \draw[line width=.5mm] (4) -- (3);
            \draw[line width=.5mm] (4) -- (6);
            \draw[line width=.5mm] (1) -- (4);
            \draw[line width=.5mm] (3) -- (6);
            \draw[line width=.5mm] (6) -- (5);
            \draw[line width=.5mm] (7) -- (6);
            \draw[line width=.5mm] (7) -- (5);
            \draw[<->,red,line width=.5mm,dotted] (1.north) .. controls  +(north east:1.6)  and +(north west:1.2)   .. (5) node[midway,above left = .1 and .1] {stream 1};
            \draw[<->,purple,line width=.5mm,dotted] (3) .. controls  +(south east:1cm )  and +(south west:1.2)   .. (5) node[midway, below left = .2] {stream 3};
            \draw[<->,blue,line width=.5mm,dotted] (4) .. controls  +(north east:.5)  and +(north west:.5)   .. (5) node[midway, above left= .001] {stream 2};
            \draw[<->,cyan,line width=.5mm,dotted] (7) .. controls  +(north east:.5)  and +(south east:.5)   .. (5) node[midway, right] {stream 4};
        \end{tikzpicture}
        }
    \caption{Example multi-stream \ours.}
    \label{fig:multi_st}
    \vspace{-10pt}
\end{figure}
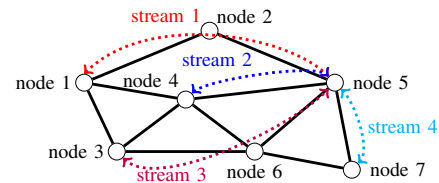

\section{Related Work}

Decentralized optimization algorithms have been widely studied in the literature, where nodes interact with their neighbors to solve an optimization problem \cite{Tsits1984DCN, Nedic2009DisSubGrad, Chen2012Diffusion, Duchi2012DualAveraging}. Despite their potential, these algorithms suffer from a bias in non-iid data \cite{yuan2016OnConverge}, and they require synchronization and  orchestration among nodes, which is costly in terms of communication overhead. 



Decentralized algorithms based on \gos involve a mixing step where nodes compute their new models by mixing their own and neighbors' models \cite{Koloskova2020Unified,Kevin2019OptimalConverge,Xiao2003FastLI}. However, this is costly in terms of communication as every node requires $O(\deg(G))$ data exchange for every model update for a graph $G$. 
Furthermore, model updates propagate gradually over the network due to iterative gossip averaging. Such gradual model propagation reduces the convergence time and makes the learning mechanism very sensitive to data distribution over nodes. 
%
%
Finally, \gos algorithms tend to favor higher-degree nodes while updating models, which causes slower convergence for non-iid data \cite{Giaretta2019BeatenPath}.
Our goal in this paper is to reduce the communication cost in decentralized learning for any data distribution without hurting convergence.

Asynchronous \gos algorithms are designed to improve synchronous \gos algorithms. The main focus of asynchronous \gos is the reduction of synchronization costs in a gossip setting by utilizing non-blocking communication \cite{Lian2018Async,Assran2019StochasticGP,swarm}. This means that nodes could potentially receive and use the stale versions of neighboring nodes' models to update their own models. Despite this modification, asynchronous \gos algorithms continue to rely on the traditional \gos to spread the information, where each node sends its model to all of its neighbors, which still introduces high communication cost. Our goal in this paper is to reduce the communication cost in an asynchronous manner for decentralized learning.  

A random walk-based decentralized learning is considered in \cite{GhadirRW2021}, which is similar to work on random walk data sampling for stochastic gradient descent, \eg \cite{Sun2018MarkovSGD, Needell2014WeightedSampling}. 
Reducing the global averaging rounds as compared to \gos-based mechanisms is considered in \cite{spiridonoff2021communicationefficient} by one-shot averaging. However, the global averaging rounds require long synchronization duration for large networks, which increases the convergence time. Also, strong assumptions and only iid data is considered \cite{spiridonoff2021communicationefficient}. As compared to \gos and random walk-based algorithms, \ours designs a communication efficient decentralized learning without hurting convergence rate for both iid and non-iid data.





\section{\label{sec:digest}Design of \ours}
\subsection{Preliminaries}
\textbf{Network Topology.} We model the underlying network topology with a connected graph $G=(\mathcal{V},\mathcal{E})$, where $\mathcal{V}$ is the set of vertices (nodes) and $\mathcal{E}$ is the set edges.
The vertex set contains $V$ nodes, \ie $ |\mathcal{V}|= V$, and $|.|$ shows the size of the set.
The computing capabilities of nodes are arbitrary and heterogeneous. 
If node $i$ is connected to node $j$ through a communication link and can transmit data, then link $\{i, j\}$ is in the edge set, \ie $\{i, j\} \in \mathcal{E}$.
The set of the nodes that node $i$ is connected to and can transmit data is called the neighbors of node $i$, and the neighbor set of node $i$ is denoted by $\mathcal{N}_i$.
We do not make any assumptions about the behavior of the communication links; there can be an arbitrary, but finite amount of delay over the links.



\textbf{Data.} We consider a setup where nodes have access to a subset of data samples $\mathcal{D}$.
Each node $v$ has a local dataset $\data{v}$, where $D_v = |\data{v}|$ is the size of the local dataset and $D=\sumov D_v$.
The distribution of data across nodes is not identical and independently distributed (non-iid). 

\textbf{Stochastic Optimization.} Assume that the nodes in the network jointly minimize a $d$-dimensional function  $f: \mathbb{R}^d \rightarrow \mathbb{R}$.
The goal of the nodes is to converge on a model $\xstar$, which minimizes the empirical loss over $D$ samples, \ie $ \xstar := \arg \min_{\vec{x} \in 	\mathbb{R}^d}\Big[f(\vec{x}) := \frac{1}{D} \sum_{i=1}^D f_i(\vec{x})\Big]\label{e1}$,
where $f_i (\vec{x}): \mathbb{R}^d \rightarrow \mathbb{R}$ is the loss function of $\vec{x}$ associated with the data sample $i$.
The optimum solution is denoted by $f^*$.
The loss function on local dataset $\data{v}$ at node $v$ is $f^v(\vec{x}) = \frac{1}{D_v} \sum_{i \in \data{v}} f_i(\vec{x})$. 

\textbf{Notation.} We provide our notation table in Appendix \ref{appendixa}. 

%





\subsection{Single-Stream \ours}


\subsubsection{Overview} 
\textbf{Local Model Update.} We assume that the time is slotted, and at each slot/iteration, a local model  is updated.
However, a calculation of a gradient may take more than one slot, vary over time, or not fit into slot boundaries. 
Thus, at each iteration $t$, any gradients which have been delayed up to iteration $t$, and not used in previous local updates are used to update the local model.
We note that time slots across nodes do not need to be synchronized in \ours as each node can have its own iteration sequence and update local and global models over its own sequence. The only assumption we make is that the slot sizes are the same across nodes, which can be decided a priori.

Let us consider that ${L}^v_T=\{\lv\}_{0\leq t <T}$ is the set of 
the delayed gradient calculations at node $v$, where $\lv$ shows that the local-SGD update of iteration $t$ is delayed until iteration $\lv$.
For instance, $\lv[t']=t$ means that the local-SGD of iteration $t'$ is lagged behind and performed in iteration $t$, $t\geq t'$.
Then, we define $\uv = \{ t' \mid \lv[t']=t\}$ to show all the updates completed at iteration $t$ in node $v$.
If we consider that there is no global update at node $v$, the local model is updated as $\xv[t+1] = \xv- \sum_{z\in \uv} \eta_{z} \nabla \fiv[z](\xv[z])$, where $\eta_{z}$ is the learning rate, $i^v_z$ is a sample uniformly chosen from $\data{v}$ in iteration $z$, and $\nabla \fiv[z](\xv[z])$ is the gradient.
However, there may be global model updates at node $v$, \ie node $v$ could receive a global model update from one of its neighbors at iteration $t$.
Such a global model reception should be reflected in local model updates, which we discuss next.

\textbf{Global Model Update and Exchange.} 
Let $\xtild$ be the global model that is being transferred from one node to another at time slot $t$. If node $v$ receives the global model $\xtild$ from one of its neighbors, a global model update indicator $s_{t}^v$ is set to $s_{t}^v = 1$. Otherwise, \ie when node $v$ does not receive the global model from its neighbors, we set $s_{t}^v = 0$.  

If $s_{t}^v = 0$, then node $v$ updates its model locally according to the update mechanism presented earlier in the ``Local Model Update'' section. If $s_{t}^v = 1$, \ie when a global model is received by node $v$ from one of its neighbors, then the global model should be incorporated in the calculations.
\ours sets the local model to the global model when there is a global model update as follows.
\begin{equation}
\xv[t]=\begin{cases}
          \xv[t-1]- \sum_{z\in \uv[t-1]} \eta_{z} \nabla \fiv[z](\xv[z]) \quad &\text{if} \, \sv[t]=0\\
          \xtild[t] \quad &\text{if} \, \sv[t]=1 \\
     \end{cases} \label{basic}
\end{equation} 

The global model is updated as
\begin{align}\label{eq:globMod1}
\xtild &= \xtild[t-1]+\DvtoD\bigg( \hspace{-.25em}\big(\xv[t-1]- \hspace{-.5em}\sum_{z\in \uv[t-1]} \hspace{-.5em} \eta_{z} \nabla \fiv[z](\xv[z])\big) - \xv[\tau^v_{t-1}]\bigg),
\end{align}
where $\xtild[t-1]$ is the global model received by node $v$ at slot $t-1$. The global model, \ie $\xtild$ is updated by using  $\xtild[t-1]$ as well as the local updates of node $v$. 
We use $\tav$ to denote the last time slot up to $t$, when node $v$'s model was updated with the global model, \ie $\tau^v_t = \max \{t'\mid  t'\leq t, s_{t'}^v=1\}$. The equivalent of (\ref{eq:globMod1}) is $\xtild =\xz - \sumov \sum_{t'=0}^{\tav -1} \sum_{z \in \uv[t']} \DvtoD \eta_z \nabla \fiv[z](\xv[z])$, 
where $\vec{x}_0$ is the initial model. As seen, the global model is updated across all nodes by taking into account all delayed gradient calculations. We use $\DvtoD$ ratio to give more weight to the gradients with larger data sets. Now that we provided an overview of \ours, we provide details on how \ours algorithms operate next. 
\begin{algorithm}[t!]
    \caption{ Local and global model update of \ours at node $v \in \mathcal{V} $.}\label{alg:Dec}
\begin{algorithmic}[1]
    \State
    \textbf{Initialization: }$\xv[0]=\xz$, $\xv[-1]=\xz$, $\xtild[0]=\xz$, $visited=\{\}$, $pre\_node = v$, $\mathcal{S}^v_T = \{0\}_{0<t\leq T} $, $s_1^{v_0}=1$, $\mathcal{V}_0=\mathcal{V}$.
\For{$t$ in $0,...,T-1$}
\State Sample $i^v_t$ uniformly from $\data{v}$.
\State Compute the gradient $\nabla \fiv(\xv)$.
\State $\xv[t+1] = \xv- \sum_{z\in \uv} \eta_{z} \nabla \fiv[z](\xv[z])$ \Comment{Local model update.}
\If{Received new $message$ from another node}
    \State $(\xtild[t], visited, pre\_node, 0) \leftarrow message$
    \State  $s^v_{t+1}=1$
\EndIf
\If{$s^v_{t+1}=1$} 
    \State $\xtild[t+1] = \xtild[t] + \frac{D_v}{D} (\xv[t+1]-\xv[-1])$
    \State $\xv[t+1] = \xtild[t+1]$  \Comment{Local model is updated using global model.}
    \State $\xv[-1]=\xv[t+1]$ 
    \If{$\mod(t,H)=0$ or $visited \neq \mathcal{V}$}
    \State Send $message$ $=$  $(\xtild[t+1],$ $visited,$ $ pre\_node,0)$ to a neighbor node via Alg.  \ref{alg:traverse}
    %
    \Else
    \State $s^v_{t+H-mod(t,H)}=1$ \Comment{Pause global model update and exchange until $\mod(t,H)=0$ holds. 
    }
    \State $visited=\{\}$
    \EndIf
\EndIf
\EndFor
\end{algorithmic}
\end{algorithm}

\subsubsection{Algorithm Design} 

\ours is comprised of two algorithms; (i) local and global model update at node $v$, and (ii) sending a global model from a node to its neighbor.

\textbf{Local and Global Model Update.} The local and global model update of \ours is presented in  Alg. \ref{alg:Dec}. Every node $v$ keeps its local model $\xv$ as well as $\xv[-1]$, which is a copy of the local model in the latest global model update at node $v$.
$\xtild$ is the global model. All of these models are initialized with the same initial model $\xz$. We note that only one of the nodes, let us say node $v_0$, has the global model $\xtild$ at the start of the algorithm.

We define $visited$ as the set of nodes that are recently visited for the global model updates.
It is initialized as an empty set at node $v$. We define a period of time, during which all the nodes in $\mathcal{V}$ are visited at least once, as a synchronization round.
During a synchronization round, all nodes update their local models with a global model as they are visited at least once. More details regarding the $visited$ set will be provided as part of Alg. \ref{alg:traverse}.

The node that node $v$ receives the global model from is defined by $pre\_node$, where its initial value is set to $v$ as there is no previous node at the start. The set of global model update indicators, \ie $\mathcal{S}^v_T = \{\sv[t]\}_{0<t\leq T} $ is initialized as an empty set, where $T$ is the number of slots that Alg. \ref{alg:Dec} runs. Assuming that $v_0$ is the node where the global model update starts, $s_1^{v_0}$ is set to 1, \ie $s_1^{v_0}=1$. 




\begin{algorithm}[t!]
\caption{Sending global model from node $v \in \mathcal{V}$.}\label{alg:traverse} 
\begin{algorithmic}[1]
\State \textbf{Input: } $message = (\xtild[t+1],visited, pre\_node,m)$ 
\If{$v \notin visited$}
    \State $visited = visited \cup \{v\}$
    \State $p^v_m=pre\_node$
\EndIf
\State $\mathcal{C}=\{ v' \in \mathcal{N}_v \cap \mathcal{V}_m\mid v' \notin visited \}$
\If{$\mathcal{C} \neq \emptyset$}
    \State Select $v'$ randomly from $\mathcal{C}$.
    \State Send $message = (\xtild[t+1], visited, v, m)$ to node $v'$.
    \Else
    \State Send $message = (\xtild[t+1], visited, v, m)$ to $p^v_m$.
    \EndIf
\end{algorithmic}
\end{algorithm}

At every iteration $t$, node $v$ first gets one data sample from the local dataset  randomly (line $3$), and computes a stochastic local gradient (line $4$) based on the selected data sample and the current model at node $v$, \ie $\xv$. Then, node $v$ uses all the gradients whose computations are delayed until iteration $t$, and that are not used in local model updates so far for the local model update (line $5$). 

If node $v$ receives a ``$message$'' from one of its neighbors at slot $t$, then it should update the global model. Each $message$ contains information on the global model $\xtild$, the set of visited nodes, \ie $visited$, the id of the node that sends this message to node $v$, \eg $v'$, and a parameter $m$, which is always set to $0$ in single-stream \ours, but may take different values for multi-stream \ours.
After the message is extracted (line $7$), the global model update indicator is set to $1$ (line $8$), and the global model is updated (lines $11 - 13$).
The global model is updated using the most recent local model of node $v$ (line $11$). The local model is updated with the global model (line $12$). The current local model is stored at node $v$ and will be used in the next global update (line $13$).

If the global model is updated at node $v$, \ie if $s_1^{v_0}=1$, then node $v$ creates a $message$ and sends it to one of its neighbors if (i) $visited \neq \mathcal{V}$: when not all nodes are visited in the current synchronization round; or (ii) $\mod(t,H)=0$: this is an indicator of the start of a new synchronization round, which happens periodically at every $H$ iterations.
In other words, global model synchronization continues until all nodes in $\mathcal{V}$ are visited. Then, global model update is paused until a new synchronization round (satisfied by line $17$), which starts at every $H$ iteration. We will describe how $H$ should be selected later in the paper as part of our convergence analysis and evaluations.
If one of the conditions in line $14$ is satisfied, then node $v$ sends the global model to one of its neighbors by calling Alg. \ref{alg:traverse}.

\textbf{Sending Global Model.} Alg. \ref{alg:traverse} describes the logic of \ours at node $v$ for sending a global model to a neighboring node. 
%
%
Alg. \ref{alg:traverse} implements a Depth-First Search (DFS) to traverse all the nodes in the network in a synchronization round. 
%
If $v$ is not visited before in this synchronization round, it is added to $visited$ (line $3$) and its parent node $p^v_m$ is set to $pre\_node$ (line $4$). 
%
The parent node is the node that node $v$ receives the global model from for the first time in this synchronization round. 
%
$\mathcal{C}$ is a set of nodes that node $v$ can possibly transmit.
It includes all of the neighboring nodes which are not in the $visited$ set. 
If $\mathcal{C}$ is not empty, one of its elements $v'$ is chosen randomly (line $8$), and a $message$ including the global model is transmitted to node $v'$ from node $v$.  
%
If $\mathcal{C}$ is empty, \ie all the neighbors of node $v$ are visited in the current synchronization round, the $message$ is sent to the parent of node $v$ ($p^v_m$) (line $11$). We note that if all the nodes are visited in the network, Alg. \ref{alg:Dec} pauses global model update (line $17$), and Alg. \ref{alg:traverse} is not called. Note that Alg. \ref{alg:traverse} and Alg. \ref{alg:Dec} operate simultaneously; one does not need to stop and wait for the other as also illustrated in Fig. \ref{digest}.

\subsection{Multi-Stream \ours}

\subsubsection{Tree Construction and Multiple Streams}


{Our goal in multi-stream \ours is to find a shortest-path tree so that model updates can be distributed quickly. We use a classical distance vector routing algorithm such as Bellman-Ford to construct a shortest-path tree. Belmann-Ford algorithm is optimal \cite{Bellman1958ONAR} in the sense that it results a shortest path tree when a root node is fixed. Multi-stream \ours finds the best tree among all shortest path trees (constructed for each node using Bellman-Ford). The best tree is the tree that has the smallest range (distance from the root to the farthest leaf node). In particular, our  first step is to create a rooted tree from our undirected graph $G$ via Bellman-Ford, where each node $v$ in graph $G$ learns its delay distance $d_{vu}^G$ to node $u$ in a decentralized manner and via message passing.
%
We define  the radius of node $v$ as $R^G_v$, which is the largest distance from node $v$; i.e., $R^G_v = \max_u \{d^G_{vu}\}$. The root of the network is the node with the smallest $R^G_v$, i.e., $r = \arg \min_v \{R^G_v\}$, where $r$ is the root node. The shortest delay tree $ST_r$ rooted with $r$ is constructed in a decentralized manner as each node keeps $d_{vu}^G$ information.} 

{After the tree is constructed, multiple streams are created to exchange the global model in the network. First, the root node creates a number of streams which is equal to the number of its children. 
Each of these streams has a range, which starts from the root node and ends at a child node if the child node itself has more than one child. In that case, the child node behaves exactly as a root node and creates multiple streams towards its children by following the same rule that we just described for the root node. 
Eventually, there will be $M$ streams in the tree, and the set of the streams that go through node $v$ is $\mathcal{M}_v$.}
The set of nodes that are in the range of stream $m$ is $\mathcal{V}_m$.
The set of the streams between root $r$ and node $v$ is defined as $P^r_v$.

  \begin{algorithm}[t!]
\caption{\ours on node $v \in \mathcal{V}$ with multiple streams.}\label{alg:Dec_multi_st}
\begin{algorithmic}[1]
\State \textbf{Initialization: }$\xv[0]=\xz$, $\xv[-1]=\xz$, $queue = ()$.
\For{$m$ in $\mathcal{M}_v$}
\State $\xtild[0][m]=\xz$, $\xtild[-1][m]=\xz$, $visited[m]=\{\}$, $pre\_node[m] = v$, $\mathcal{S}^v_T[m] = \{0\}_{0<t\leq T}$, $s_1^{v_m}[m]=1$.
\EndFor
\For{$t$ in $0,...,T-1$}
\State Sample $i^v_t$ uniformly from $\data{v}$.
\State Start computing the gradient $\nabla \fiv(\xv)$.
\State $\xv[t+1] = \xv- \sum_{z\in \uv} \eta_{z} \nabla \fiv[z](\xv[z])$ 
\If{$queue \neq ()$}
    \For {any $message$ in $queue$}
    \State $(\xtild[t][m], visited[m], pre\_node[m],m)\hspace{-6pt}\leftarrow message$
    \State  $s^v_{t+1}[m]=1$
    \State Remove $message$ from $queue$
    \EndFor
\EndIf
\For {$m$ in $\mathcal{M}_v$}
\If{$s^v_{t+1}[m]=1$} 
    \State $\xtild[t+1][m] = \xtild[t][m] + \frac{D_v}{D} (\xv[t+1]-\xv[-1]) + (\xv[-1] - \xtild[-1][m])$
    \State $\xv[t+1] =\xtild[t+1][m]$ 
    \State$\xv[-1] = \xv[t+1] $ \Comment{Last updated model at  $v$}
    \State $\xtild[-1][m]=\xtild[t+1][m]$ \Comment{Last updated model at node $v$ corresponding to stream $m$}
    \If{$\mod(t,H_m)=0$ or $visited[m] \neq \mathcal{V}_m$}
    \State Send $message$ $=$ $\xtild[t+1][m],$ $visited[m],$   $pre\_node[m],$ $r$ to a neighboring node by calling Alg.\ref{alg:traverse}. 
    %
    \Else
    \State $s^v_{t+H-mod(t,H_m)}[m]=1$
    \State $visited[m]=\{\}$
    \EndIf
\EndIf
\EndFor
\EndFor
\end{algorithmic}
\end{algorithm}

\vspace{-15pt}
\subsubsection{Algorithm Design} 

Multi-stream \ours is  summarized in Alg. \ref{alg:Dec_multi_st}. The following are the main differences between Algs. \ref{alg:Dec_multi_st} and \ref{alg:Dec}.

There are multiple global models in different streams, \ie $\xtild[t][m]$ corresponds to the global model in stream $m$ out of $M$ streams.
There are $|\mathcal{M}_v|$ models stored in each node, \ie $\xtild[-1][m]$ to represent the global model corresponding to the last synchronization of stream $m$ at node $v$. 
We define $visited[m]$, $pre\_node[m]$, and $s_t^{v}[m]$ for each stream $m$.

Each node $v$ has a $queue$  to store all the messages that a node receives from its neighbors. It is initialized as an empty queue at the start. Whenever  node $v$ receives a $message$ from one of its neighbors, it is added to the queue. 
Each node can receive up to $|\mathcal{M}_v|$ messages related to different streams, so the size of the $queue$ is $|\mathcal{M}_v|$. 
In each message, there is a stream index $m$ (line $10$). 

Node $v$ extracts all the messages in its queue (lines 9-12). Then, it updates its global and local models as in Alg. \ref{alg:Dec} if $s^v_{t+1}[m]=1$.
The global model is updated using the most recent local updates of node $v$ and global updates of other streams (line $15$). The global model synchronization continues until all nodes in $\mathcal{V}_m$ are visited for stream $m$. Then, global model update is paused until a new synchronization round, which starts at every $H_m$ iteration. The policy for selecting $H_m$ is explained in the next section.


\section{Convergence Analysis of \ours}
We use the following assumptions for the convergence analysis of single- and multi-stream \ours. 

\begin{enumerate}[leftmargin=*]
    \item \textbf{Smooth local loss.} \label{as1}
$f^v$ is continuously differentiable and its gradient is $L$-Lipschitz for $1 \leq v \leq V$, \ie $\normm{\nabla f^v(\vec{y})-\nabla f^v(\vec{x})} \leq L \normm{\vec{y}-\vec{x}}, \quad \forall \vec{x},\vec{y} \in \mathbb{R}^d$.
%

%
\item \textbf{Bounded local variance.} \label{as3}
The variance of the stochastic gradient is bounded for all nodes, \ie $0 \leq t < T$, $1 \leq v \leq V$, $ \EX_{i^v_t} \norm{\nabla \fiv(\xv) - \nabla f^v(\xv)}\leq \sigma^2$.

\item \textbf{Bounded diversity.} \label{as4}
The diversity of the local loss functions and global loss function is bounded, \ie $0 \leq t < T$, $1 \leq v \leq V$, $ \norm{\nabla f^v(\xv) - \nabla f(\xv)}\leq \zeta^2$.
\item \textbf{Bounded lag.}\label{as5}
We assume bounded lag, \ie $\max\{\lv-t\}\leq E, \quad 0 \leq t < T, 1 \leq v \leq V$. 
\item \textbf{Bounded synchronization interval.}\label{as6}
For single-stream \ours, we assume that the interval between two subsequent global model synchronizations is bounded, \ie $gap(\mathcal{S}^v_T) \leq H$, $1 \leq v \leq V$,
where $gap(\mathcal{S}^v_T)$ shows the maximum gap between two subsequent $1$s in  $\mathcal{S}^v_T$. 
For multi-stream \ours, we assume different bounds 
 for each stream, i.e., $gap(\mathcal{S}^v_T[m]) \leq H_m$ for  $m \in \mathcal{M}_v$. 
\item \textbf{Convexity.} \label{as2}
$f$ is $\mu$-(strongly) convex, \ie $\forall \vec{x},\vec{y} \in \mathbb{R}^d$, $ f(\vec{y}) \geq f(\vec{x}) + \inpr{\nabla f(\vec{x})}{\vec{y}-\vec{x}}+ \frac{\mu}{2} \norm{\vec{y}-\vec{x}}$.
\end{enumerate}

\begin{theorem}\label{T1}
Let assumptions \ref{as1}-\ref{as6} hold, with a constant and small enough learning rate $\eta \leq \frac{1}{30LA}$ (potentially depending on $T$), the convergence rate of single- and multi-stream \ours is as follows:

\vspace{5pt}
\textbf{Non-convex:} $\frac{1}{T}\sum_{t=0}^{T-1}\EX\norm{\nabla f(\xhat)}$ is
\begin{align}
       & O \bigg(\frac{FLA}{T} + \sigma\sqrt{\frac{\rho LF}{T}} + (\frac{LF\sqrt{\sigma^2A + \zeta^2A^2}}{T})^{\frac{2}{3}} \bigg),\nonumber
\end{align}
where $\xhat = \sumov \sum_{t=0}^{T-1} \DvtoD \xv$.

\vspace{5pt}
\textbf{Convex:} Under assumption \ref{as2} for $\mu \geq 0$, $\EX f(\xhat) - f^*$ is
\begin{align}
     & O \bigg(\frac{RLA}{T} + \sigma\sqrt{\frac{\rho R}{T}} + (\frac{R\sqrt{L(\sigma^2A + \zeta^2A^2)}}{T})^{\frac{2}{3}} \bigg),\nonumber
\end{align}
where $\xhat = \sumov \sum_{t=0}^{T-1} \DvtoD \xv$.

\vspace{5pt}
\textbf{Strongly-convex:} Under assumption \ref{as2} for $\mu > 0$, $\EX f(\xhat) - f^*$ is
\begin{align}
     \Tilde{O} \bigg(RLA \exp(\frac{-\mu T}{LA}) + \frac{\rho \sigma^2}{\mu T}+\frac{L(\sigma^2A + \zeta^2A^2)}{\mu^2 T^2}\bigg),\nonumber
\end{align}

where $\xhat = \frac{1}{DW_T}\sumov \sum_{t=0}^{T-1} D_v\omega_t \xv$, $\omega_t = (1-\mu\eta)^{-(t+1)}$, $W_T = \sum_{t=0}^{T-1} \omega_t$. 

\vspace{5pt}
$\Tilde{O}$ hides constants and poly-logarithmic factors, $T$ represents the wall clock time, $F := f(\xz)-f^*$, $R:=\norm{\xz-\xstar}$, $A:=H'+E$, and $\rho := \sumov (\DvtoD)^2$. The convergence rate of single-stream \ours follows when $H'=H$, and the convergence rate of multi-stream \ours is obtained by putting $H'=\max_{v}\sum_{m\in P_v^r}H_m$ in A. \hfill $\Box$

\end{theorem}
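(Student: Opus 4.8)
The plan is to run the perturbed-iterate analysis that underlies modern treatments of local-SGD and decentralized SGD, with the single controlling quantity being the \emph{effective staleness} $A = H' + E$. First I would introduce a fully-aggregated shadow iterate
$\xbar[t] := \xz - \sumov \DvtoD \sum_{z=0}^{t-1} \eta_z \nabla \fiv[z](\xv[z])$,
which adds up every gradient \emph{sampled} at every node through iteration $t-1$, ignoring both the gradient lag and the synchronization schedule. Assumption~\ref{as5} guarantees that every sampled gradient is eventually applied, so $\xbar[t]$ is well defined and satisfies the clean recursion $\xbar[t+1] = \xbar[t] - \eta_t \sumov \DvtoD \nabla \fiv[t](\xv[t])$, whose conditional mean is $-\eta_t\sumov\DvtoD\nabla f^v(\xv[t])$ --- exactly $-\eta_t\nabla f(\cdot)$ at a consensus point, since $\sumov\DvtoD\nabla f^v=\nabla f$. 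Using the closed form of the global model stated just after~(\ref{eq:globMod1}) together with the local update~(\ref{basic}), I would show that each local iterate equals $\xtild[{\tav[t]}]$ plus the gradients $v$ has completed since $\tav[t]$, so that $\xv[t]-\xbar[t]$ is a signed combination of terms $\eta_z\nabla\fiv[z](\xv[z])$ whose indices $z$ all fall in a window of length at most $A$ ending at $t-1$: the synchronization gap (Assumption~\ref{as6}) costs at most $H$ --- for multi-stream, $H'=\max_v\sum_{m\in P^r_v}H_m$ --- iterations of cross-node updates, and the lag (Assumption~\ref{as5}) at most $E$ more.

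Next I would control the consensus error $\mathcal{E}_t := \sumov\DvtoD\EX\norm{\xv[t]-\xbar[t]}$. Expanding the window representation, bounding the node-average convex combination by Jensen and the at-most-$A$-term iteration sum by Cauchy--Schwarz \emph{only for the bias pieces} while keeping the stochastic-noise pieces uncorrelated across iterations, then applying Assumptions~\ref{as1}--\ref{as4} through $\normm{\nabla\fiv[z](\xv[z])}^2 \lesssim \sigma^2+\zeta^2+\norm{\nabla f(\xbar[z])}+L^2\norm{\xv[z]-\xbar[z]}$, and finally summing over $t$ and using $\eta\le\frac{1}{30LA}$ to absorb the resulting $L^2\mathcal E_z$ feedback, I expect to reach $\frac1T\sum_t\mathcal E_t \lesssim \eta^2(\sigma^2 A+\zeta^2 A^2) + \eta^2 A^2\cdot\frac1T\sum_t\EX\norm{\nabla f(\xbar[t])}$ in the non-convex case, and the same bound with $\EX f(\xbar[t])-f^*$ in place of the gradient norm in the convex cases. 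The separation ``noise scales with $A$, heterogeneity with $A^2$'' is exactly what produces the $\sigma^2 A + \zeta^2 A^2$ combination appearing in the theorem. For multi-stream \ours this is where the path sum $P^r_v$ enters: the cross-stream correction $(\xv[-1]-\xtild[-1][m])$ in Alg.~\ref{alg:Dec_multi_st} forwards each node's accumulated knowledge into every stream passing through it, so an update needs at most $\sum_{m\in P^r_v}H_m$ iterations to walk the root-to-$v$ chain of streams; certifying that this composition of per-stream gaps is the true staleness is the most delicate bookkeeping.

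With $\mathcal E_t$ in hand the remainder is the standard local-SGD machinery. For the non-convex rate I would apply $L$-smoothness to $\xbar[t+1]=\xbar[t]-\eta\g$ with $\g=\sumov\DvtoD\nabla\fiv[t](\xv[t])$: the drift of $\EX[\g\mid\mathcal{F}_t]$ from $\nabla f(\xbar[t])$ contributes $L^2\mathcal E_t$, and the variance contributes $\rho\sigma^2$ (independence across nodes gives the factor $\rho=\sumov(\DvtoD)^2$), hence $\frac1T\sum_t\EX\norm{\nabla f(\xbar[t])} \lesssim \frac{F}{\eta T}+\eta L\rho\sigma^2+\frac{L^2}{T}\sum_t\mathcal E_t$; substituting the previous step and using $L^2A^2\eta^2\le\frac12$ to absorb the gradient feedback gives $\frac1T\sum_t\EX\norm{\nabla f(\xbar[t])} \lesssim \frac{F}{\eta T}+\eta L\rho\sigma^2+L^2\eta^2(\sigma^2 A+\zeta^2 A^2)$, and optimizing $\eta$ over $(0,\frac{1}{30LA}]$ by the usual three-term step-size lemma yields the three stated terms. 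The convex cases replace the descent step by the contraction $\EX\norm{\xbar[t+1]-\xstar}\le(1-\mu\eta)\EX\norm{\xbar[t]-\xstar}-\eta(\EX f(\xbar[t])-f^*)+\eta L\mathcal E_t+\eta^2\rho\sigma^2$ (using convexity of $f$ and of each $f^v$); ordinary telescoping with the same tuning gives the $\mu=0$ bound, and weighted telescoping with $\omega_t=(1-\mu\eta)^{-(t+1)}$ gives the $\mu>0$ bound. A final application of (strong) convexity / smoothness transfers these from $\frac1T\sum_t f(\xbar[t])$ (resp.\ averaged gradient norms) to the reported average $\xhat$ of the local iterates, at the cost of one more already-controlled $\mathcal E_t$ term.

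The hard part, and the place where all the algorithm-specific work lives, is the consensus bound of the second paragraph: getting the stochastic contribution to scale with $A$ and the heterogeneity contribution with $A^2$, and --- for multi-stream \ours --- proving that the relevant staleness is precisely the tree-path sum $\sum_{m\in P^r_v}H_m$ rather than anything larger. Everything after that point is routine bookkeeping layered on top of well-established local-SGD arguments, and in particular none of it depends on the network topology or on node degrees, which is the qualitative claim the theorem is meant to support.
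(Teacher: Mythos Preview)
Your proposal is correct and follows essentially the same route as the paper: the same virtual sequence $\xbar$, the same pair of descent inequalities (one convex via $\norm{\xbar-\xstar}$, one non-convex via $f(\xbar)$) with variance contribution $\rho\sigma^2$, the same consensus bound with the $\sigma^2 A$ versus $\zeta^2 A^2$ separation obtained by treating the zero-mean noise terms with independence and the bias terms with Cauchy--Schwarz, and the same three-term step-size tuning at the end. The paper makes the consensus step explicit by splitting $\norm{\xv-\xbar}$ through the intermediate global models $\xtild[\tav]$ and $\xtild$ and by carrying $(H'+E)$-slow-increasing weights $\omega_t$ through the sums (needed for the strongly-convex averaging), but your ``window of length $A$'' description is exactly this mechanism, and your identification of the multi-stream staleness as $\max_v\sum_{m\in P^r_v}H_m$ matches what the paper argues.
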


\begin{table}[b]
\caption{Constraints on $H$ to get linear speed-up.}
\label{Tab}
    \centering
    \begin{tabular}{llll}
        \toprule
        \multirow{2}{*}{} & \multirow{2}{*}{Converge rate} & \multicolumn{2}{c}{{$H'+E$}}\\
         \cmidrule{3-4} \\
         \addlinespace[-12pt]
        {} & {} &  iid & non-iid \\
        \midrule
        Strongly convex & $\Tilde{O}(\frac{\rho}{T})$  & $\Tilde{O}(\rho T)$ & $\Tilde{O}(\rho^\frac{1}{2} T^\frac{1}{2}$)  \\
        Convex & $O(\sqrt{\frac{\rho}{T}})$  & $O(\rho^{\frac{3}{2}} T^{\frac{1}{2}})$ & $O(\rho^{\frac{3}{4}} T^{\frac{1}{4}})$  \\
        Non-convex & $O(\sqrt{\frac{\rho}{T}})$  & $O(\rho^{\frac{3}{2}} T^{\frac{1}{2}})$ & $O(\rho^{\frac{3}{4}} T^{\frac{1}{4}})$  \\
        \bottomrule
    \end{tabular}
\end{table}

{\begin{corollary} \label{remark1}
For strongly-convex case, in iid data distribution over nodes, i.e., $\zeta=0$, the convergence rate to the optimum value $f^*$ is $\Tilde{O}(\frac{\rho}{T})$ given that $H'+E= \Tilde{O}(\rho T)$ is satisfied, where $\rho = \sumov (\DvtoD)^2$ is a data concentration coefficient that can take values between $\frac{1}{V}\leq \rho < 1$. In non-iid data distribution over nodes ($\zeta \neq 0$), linear speed up $O(\sqrt{\frac{\rho}{T}})$ is achieved when $H'+E= \Tilde{O}(\sqrt{\rho T})$ holds. This restriction in other scenarios is depicted in Table \ref{Tab}. \hfill $\Box$
\end{corollary}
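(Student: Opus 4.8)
The plan is to read off Corollary~\ref{remark1} directly from Theorem~\ref{T1}: once $A = H'+E$ is fixed, each of the three bounds is a sum of a single ``statistical'' term of order $\Tilde{O}(\rho/T)$ or $O(\sqrt{\rho/T})$ that does not involve $A$, plus several $A$-dependent terms, so the whole task is to find, for every $A$-dependent term, the largest $A$ that keeps it at or below the statistical term. I would start with the strongly-convex case, where Theorem~\ref{T1} gives
\begin{align}
\EX f(\xhat)-f^* = \Tilde{O}\!\left(RLA\exp\!\Big(\tfrac{-\mu T}{LA}\Big) + \tfrac{\rho\sigma^2}{\mu T} + \tfrac{L(\sigma^2 A + \zeta^2 A^2)}{\mu^2 T^2}\right),\nonumber
\end{align}
and observe that the middle term is exactly the advertised rate $\Tilde{O}(\rho/T)$ (treating $\mu,\sigma,R,L$ as constants); hence it remains only to force the first and third terms to be $\Tilde{O}(\rho/T)$.

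For iid data ($\zeta=0$) I would argue term by term. The drift term reduces to $L\sigma^2 A/(\mu^2 T^2)$, which is $\Tilde{O}(\rho/T)$ iff $A=\Tilde{O}(\rho T)$; the warm-up term $RLA\exp(-\mu T/(LA))$ is $\Tilde{O}(\rho/T)$ as soon as $\mu T/(LA)\gtrsim\log T$, i.e. $A=\Tilde{O}(T)$, which is implied by $A=\Tilde{O}(\rho T)$ since $\rho<1$. Thus $H'+E=\Tilde{O}(\rho T)$ suffices, and $\tfrac1V\le\rho<1$ is immediate from $\rho=\sum_v(D_v/D)^2$ together with $\sum_v D_v/D=1$. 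For non-iid data ($\zeta\neq0$) I would keep the $\zeta^2A^2$ piece: $L\zeta^2A^2/(\mu^2T^2)=\Tilde{O}(\rho/T)$ iff $A=\Tilde{O}(\sqrt{\rho T})$, and for large $T$ the quantity $\sqrt{\rho T}$ is the smallest of $\{\sqrt{\rho T},\rho T,T\}$, so this is the binding constraint, giving $H'+E=\Tilde{O}(\sqrt{\rho T})$. The convex and non-convex rows of Table~\ref{Tab} follow by applying the identical comparison to the other two bounds of Theorem~\ref{T1}: the linear term $RLA/T$ (resp. $FLA/T$) requires $A=\Tilde{O}(\sqrt{\rho T})$, while the $(\cdot)^{2/3}$ term -- after noting that inside the square root $\zeta^2A^2$ eventually dominates $\sigma^2A$ -- requires $A=O(\rho^{3/2}T^{1/2})$ when $\zeta=0$ and $A=O(\rho^{3/4}T^{1/4})$ when $\zeta\neq0$; since $\rho<1$ the $(\cdot)^{2/3}$ constraint is the binding one in both columns.

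The only remaining bookkeeping is to re-express each constraint on $A$ through $A=H'+E$ with $H'=H$ for single-stream \ours and $H'=\max_v\sum_{m\in P_v^r}H_m$ for multi-stream \ours, which is precisely how the corollary is phrased. I expect the single step that is not pure power-counting to be the control of the exponential warm-up term: one has to invoke that in Theorem~\ref{T1} the learning rate $\eta$ may be chosen as a function of $T$, so that $RLA\exp(-\mu T/(LA))$ can indeed be driven to $\Tilde{O}(1/\mathrm{poly}(T))$ exactly in the regime $A=\Tilde{O}(T)$; once that is granted, every other comparison is an elementary matching of exponents of $T$ and $\rho$, so I do not anticipate a genuine obstacle.
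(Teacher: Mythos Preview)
Your proposal is correct and is exactly the argument the paper has in mind: the paper gives no separate proof of Corollary~\ref{remark1} beyond stating it as an immediate consequence of Theorem~\ref{T1} and tabulating the resulting constraints in Table~\ref{Tab}, so the term-balancing you describe (match each $A$-dependent term against the $A$-free statistical term $\rho/T$ or $\sqrt{\rho/T}$ and take the tightest resulting bound on $A$) is precisely the intended derivation. Your handling of the exponential warm-up term and your identification of which constraint binds (using $\rho<1$) are both fine.
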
}

Theorem \ref{T1} and Corollary \ref{remark1} show a nice trade-off between convergence rate and  communication overhead. It determines how much communication is needed to achieve a linear speed-up. Remark \ref{remark1} also shows the impact of non-iid data distribution, which requires smaller $H'$, hence more communications to converge and achieve linear speed-up.


\begin{corollary}
Corollary \ref{remark1} shows that the linear speed up is achieved when $T = \Tilde{\Omega}(\frac{H'}{\rho})$  and $T = \Tilde{\Omega}(\frac{H'^2}{\rho})$ for iid and non-iid data, respectively. When the network is larger, single-stream \ours needs longer $H'$ (which is equal to $H$) to visit all the nodes, which requires larger $T$ (convergence time). But in multi-stream \ours, $H'$ defined as $H'=\max_{v}\sum_{m\in P_v^r}H_m$ could be as low as $R_r^G$, which is the radius of root node $r$ (or maximum delay toward any node from root node $r$). As $R_r^G$ does not necessarily increase with the size of the network, multi-stream \ours is plausible even for large networks. 
\end{corollary}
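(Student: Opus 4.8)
The plan is to split the statement into an elementary algebraic part and a structural part about the quantity $H'$. First I would invert the constraints isolated in Corollary~\ref{remark1}. In the strongly-convex case linear speed-up requires $H'+E=\Tilde{O}(\rho T)$ for iid data and $H'+E=\Tilde{O}(\sqrt{\rho T})$ for non-iid data; since $E$ is a fixed constant by Assumption~\ref{as5}, it is absorbed into the $H'$ term as $T$ grows, so these become $H'=\Tilde{O}(\rho T)$, i.e.\ $T=\Tilde{\Omega}(H'/\rho)$, and $H'=\Tilde{O}(\sqrt{\rho T})$, i.e.\ $T=\Tilde{\Omega}(H'^2/\rho)$, respectively. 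The same inversion applied row by row to Table~\ref{Tab} gives the corresponding thresholds in the convex and non-convex cases. This step is purely algebraic.

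Next I would lower bound $H'$ for single-stream \ours, where $H'=H$. By Assumption~\ref{as6} every synchronization round --- a DFS traversal of the spanning tree that visits all of $\mathcal{V}$ and returns to its origin --- must finish within $H$ iterations. Such a DFS crosses each of the $V-1$ tree edges exactly twice, hence consists of $2(V-1)$ message hops, and every hop costs at least one slot (plus the nonnegative link delay); therefore $H=\Omega(V)$. Combined with the first step this gives $T=\Tilde{\Omega}(V/\rho)$ for iid and $T=\Tilde{\Omega}(V^2/\rho)$ for non-iid data, so the convergence time provably grows with the number of nodes.

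Finally I would upper bound $H'=\max_v\sum_{m\in P^r_v}H_m$ for multi-stream \ours by choosing each $H_m$ proportional to the range (delay length of the tree segment) of stream $m$, which is just large enough for that stream's own DFS over $\mathcal{V}_m$ to complete within $H_m$ slots by the edge-doubling argument applied to the segment. Because the segments along the root-to-$v$ path partition that path, $\sum_{m\in P^r_v}H_m$ telescopes to within a constant factor of the delay distance $d^G_{rv}$, so $H'=O(\max_v d^G_{rv})=O(R^G_r)$; substituting back yields $T=\Tilde{\Omega}(R^G_r/\rho)$ (iid) and $T=\Tilde{\Omega}((R^G_r)^2/\rho)$ (non-iid), and since the graph radius $R^G_r$ stays bounded for many families (stars, complete graphs, expanders, any bounded-diameter topology) as $V\to\infty$, the threshold need not grow with $V$. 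The main obstacle is exactly this last allocation: one must exhibit stream budgets $H_m$ that are simultaneously small enough to telescope to $O(R^G_r)$ and large enough for each stream to traverse its own $\mathcal{V}_m$ under arbitrary finite link delays, and verify that the streams meeting at branch nodes do not inflate the bound --- the remainder is bookkeeping on top of Corollary~\ref{remark1}.
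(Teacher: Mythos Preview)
Your proposal is correct and follows the same line of reasoning the paper relies on: invert the $H'+E$ constraints of Corollary~\ref{remark1} to obtain the $T=\Tilde{\Omega}(\cdot)$ thresholds, then contrast the size of $H'$ in the single- and multi-stream settings. In fact the paper offers no proof beyond the corollary's own text --- it is stated as an observation --- so your write-up supplies considerably more detail than the paper does, in particular the DFS edge-doubling lower bound $H=\Omega(V)$ for the single stream and the telescoping choice of the $H_m$'s that realizes $H'=O(R^G_r)$ in the multi-stream case; these are exactly the structural facts the paper is implicitly invoking.
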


\begin{corollary}
Let's assume that the network can be covered in $H$ iteration using single/multi-stream approach. \ours can efficiently perform synchronization while nodes are doing local-SGD, i.e., network topology, spectral gap, or the maximum and minimum degrees in the network topology do not affect the convergence rate. This is one advantage of using \ours in comparison to previous works on asynchronous decentralized learning like \cite{swarm} where the convergence rate in non-convex setting is $O \bigg(\frac{F}{\sqrt{HVT}} + \frac{\sqrt{H}(\sigma^2+H\zeta^2)}{\sqrt{VT}} + \frac{Vd_{max}L^2H^3G^2}{d_{min}\lambda^2T} \bigg)$. Here, we observe that the minimum degree ($d_{\text{min}}$), maximum degree ($d_{\text{max}}$), and spectral gap ($\lambda$) of the network graph are part of the result, so affects the convergence.
\end{corollary}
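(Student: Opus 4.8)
The plan is to prove the corollary by reading off Theorem~\ref{T1} and tracing precisely which network quantities are allowed to enter each factor of the rate. Every bound in Theorem~\ref{T1} — non-convex, convex, and strongly-convex — is expressed solely through the problem constants $F$, $R$, $L$, $\sigma$, $\zeta$, $\mu$, the horizon $T$, and the two aggregate parameters $A = H'+E$ and $\rho = \sumov (\DvtoD)^2$. The first step is therefore the observation that the spectral gap $\lambda$, the maximum degree $d_{\max}$, and the minimum degree $d_{\min}$ appear in none of these constants, so the \emph{only} route by which the graph can influence the rate is through $A$ and $\rho$. I also note that the claim ``synchronization is performed while nodes do local-SGD'' requires no separate argument: the simultaneity of Alg.~\ref{alg:Dec} and Alg.~\ref{alg:traverse} is already encoded in the model underlying Theorem~\ref{T1}, where local-SGD iterations proceed uninterrupted across a synchronization interval of length at most $H'$. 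Thus it suffices to show that neither $A$ nor $\rho$ carries any dependence on $\lambda$, $d_{\max}$, or $d_{\min}$.

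For $\rho$, I would argue that $\rho = \sumov (\DvtoD)^2$ is a pure function of the data-partition sizes $\{D_v\}$ and their total $D$; it is the data-concentration coefficient of Corollary~\ref{remark1}, lies in $[\tfrac{1}{V},1)$, and is entirely blind to \emph{which} nodes are adjacent. Hence $\rho$ encodes no connectivity, degree, or spectral information. For $A = H'+E$, the lag bound $E$ is an assumption on local computation delays (Assumption~\ref{as5}) and is by definition independent of the topology. Consequently the whole topological influence on the rate is funneled into the single synchronization-interval parameter $H'$, and the corollary reduces to controlling $H'$ by a topology-agnostic quantity.

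The crux — and the step I expect to be the main obstacle — is to show that $H'$ is bounded by a purely combinatorial cover-time/tree-range quantity rather than a spectral one. Here I would invoke the hypothesis that the network is coverable in $H$ iterations together with the deterministic DFS traversal of Alg.~\ref{alg:traverse}: a synchronization round is a depth-first walk that visits every node of $\mathcal{V}$ (single-stream), or of each $\mathcal{V}_m$ (multi-stream), and backtracks to the parent, so its length is governed by the number of nodes along the traversal and the depth of the tree, not by how fast a mixing matrix equilibrates. This yields $H' = H$ in the single-stream case and $H' = \max_v \sum_{m \in P_v^r} H_m$ in the multi-stream case, the latter being bounded by the root radius $R_r^G$ as established in the preceding corollary; in both cases $H'$ is exactly the traversal length that the coverability hypothesis caps at $H$. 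The decisive point to make explicit is that \ours never averages a node's model with its neighbors' models — each node simply adds its local update to a single circulating global model — so the gossip mixing matrix, which is the sole mechanism introducing $\lambda$ and the degree weights $d_{\max}/d_{\min}$, never arises. I would close by substituting the topology-free $A$ and $\rho$ back into Theorem~\ref{T1} to obtain a rate depending only on $F,R,L,\sigma,\zeta,\mu,T,H',E,\rho$, and contrast this with the cited non-convex bound of \cite{swarm}, which retains the factor $\tfrac{V d_{\max} L^2 H^3 G^2}{d_{\min}\lambda^2 T}$; the explicit presence of $d_{\max}$, $d_{\min}$, and $\lambda$ there, versus their total absence here, is precisely the advantage the corollary asserts.
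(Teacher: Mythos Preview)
Your proposal is correct and follows exactly the reasoning the paper intends: the corollary is stated without a separate proof and is meant to be read off directly from Theorem~\ref{T1}, observing that the rate depends on the graph only through $A=H'+E$ and $\rho$, neither of which encodes $\lambda$, $d_{\max}$, or $d_{\min}$. Your write-up simply makes explicit the dependency-tracing that the paper leaves implicit, and your point that \ours never forms a gossip mixing matrix is the right structural reason for the absence of spectral quantities.
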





\textbf{Sketch of Proof of Theorem \ref{T1}.} (The details of the proof is provided in the Appendix \ref{appendixb}.)  We define a virtual sequence $\{\xbar\}_{t\geq 0}$ as $\xbar =\xz - \sumov \sum_{z=0}^{t-1} \frac{D_v}{D} \eta_z \nabla \fiv[z](\xv[z])$ following a similar idea in \cite{Stich2019LSGD}.
Lemma \ref{lem1-main}, and \ref{lem2-main} indicates how the convergence criteria are related to $\EX\norm{\xbar - \xv}$, the deviation between virtual and actual sequences and we find an upper-bound for this term in Lemma \ref{lem3-main}. 
%
%

\begin{lemma}\label{lem1-main}
If assumptions \ref{as1}-\ref{as3}, \ref{as2} for $\mu > 0$ hold, and $\eta_t \leq \frac{1}{4L}$, then $\EX\norm{\xbar[t+1]  - \xstar} \leq (1-\mu\eta_t)\EX\norm{\xbar - \xstar} + \eta_t^2\rho\sigma^2 -\eta_t \EX(f(\xbar)-f(\xstar))+ 2L\eta_t\sumov \DvtoD \EX \norm{\xv-\xbar}$.
\end{lemma}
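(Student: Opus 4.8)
\textbf{Proof proposal for Lemma \ref{lem1-main}.} I would run a perturbed-iterate argument in the style of \cite{Stich2019LSGD}. The starting observation is that the virtual sequence obeys a clean one-step recursion: since $\xbar$ accumulates \emph{only} the gradients $\nabla\fiv[z](\xv[z])$ that node $v$ actually evaluates --- delays ($\lv[z]$) and global-model receptions touch only the physical iterates $\xv$, not the virtual ones --- one has $\xbar[t+1]=\xbar-\eta_t\gbar$ with $\gbar:=\sumov\DvtoD\nabla\fiv(\xv)$. Writing $\EX_t[\cdot]$ for the conditional expectation given everything before the time-$t$ samples $\{i^v_t\}_v$, each $\xv$ appearing in $\gbar$ is $\EX_t$-measurable, so $\EX_t\gbar=\gstar:=\sumov\DvtoD\nabla f^v(\xv)$. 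I will also use $f=\sumov\DvtoD f^v$ (hence $\nabla f(\xbar)=\sumov\DvtoD\nabla f^v(\xbar)$) and the decomposition $\gstar=\nabla f(\xbar)+\vec{e}_t$ with $\vec{e}_t:=\sumov\DvtoD\big(\nabla f^v(\xv)-\nabla f^v(\xbar)\big)$.

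Next I would expand $\norm{\xbar[t+1]-\xstar}=\norm{\xbar-\xstar}-2\eta_t\inpr{\gbar}{\xbar-\xstar}+\eta_t^2\normm{\gbar}^2$ and take $\EX_t$. The linear term becomes $-2\eta_t\inpr{\gstar}{\xbar-\xstar}$. For the quadratic term I would \emph{not} apply Jensen (which would give $\sigma^2$ instead of $\rho\sigma^2$) but the bias--variance identity $\EX_t\normm{\gbar}^2=\normm{\gstar}^2+\EX_t\normm{\gbar-\gstar}^2$; since the $i^v_t$ are drawn independently across nodes and each summand of $\gbar-\gstar$ is conditionally mean-zero, the cross terms cancel and $\EX_t\normm{\gbar-\gstar}^2=\sumov(\DvtoD)^2\EX_t\normm{\nabla\fiv(\xv)-\nabla f^v(\xv)}^2\le\sumov(\DvtoD)^2\sigma^2=\rho\sigma^2$ by Assumption \ref{as3}. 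This produces the $\eta_t^2\rho\sigma^2$ term.

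It then remains to bound the deterministic quantity $-2\eta_t\inpr{\gstar}{\xbar-\xstar}+\eta_t^2\normm{\gstar}^2$. Substituting $\gstar=\nabla f(\xbar)+\vec{e}_t$, I would group the ``consensus-point'' contribution $-2\eta_t\inpr{\nabla f(\xbar)}{\xbar-\xstar}+\eta_t^2\normm{\nabla f(\xbar)}^2$ and complete the square to $\norm{\xbar-\eta_t\nabla f(\xbar)-\xstar}-\norm{\xbar-\xstar}$, \ie treat it as a full gradient-descent step on $f$; applying $\mu$-strong convexity of $f$ (Assumption \ref{as2}) to $\inpr{\nabla f(\xbar)}{\xbar-\xstar}$ and the smoothness bound $\normm{\nabla f(\xbar)}^2\le 2L(f(\xbar)-f(\xstar))$ (valid since $f$, a convex combination of the $L$-smooth $f^v$, is $L$-smooth and is minimized at $\xstar$), this contribution is at most $-\mu\eta_t\norm{\xbar-\xstar}-2\eta_t(1-L\eta_t)(f(\xbar)-f(\xstar))$, which for $\eta_t\le\frac{1}{4L}$ is at most $-\mu\eta_t\norm{\xbar-\xstar}-\frac{3}{2}\eta_t(f(\xbar)-f(\xstar))$, leaving descent to spare. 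The remaining $\vec{e}_t$ cross terms are then dispatched by Young's inequality together with $\normm{\vec{e}_t}^2\le L^2\sumov\DvtoD\norm{\xv-\xbar}$ (Jensen plus the $L$-Lipschitz gradients, Assumption \ref{as1}); the spare descent, invoked via $f(\xbar)-f(\xstar)\ge\frac{\mu}{2}\norm{\xbar-\xstar}$ wherever a distance term must be reabsorbed, is what keeps the coefficient of $\norm{\xbar-\xstar}$ at exactly $1-\mu\eta_t$, while $\eta_t\le\frac{1}{4L}$ turns the $\eta_t^2L^2$ factors into $O(L\eta_t)$ and collects the consensus error into $2L\eta_t\sumov\DvtoD\norm{\xv-\xbar}$. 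A final application of the tower rule removes the conditioning and gives the claim.

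The step I expect to be the main obstacle is this last one. The conceptual difficulty is that $\gstar\ne\nabla f(\xbar)$ --- the local gradients are evaluated at the drifting physical iterates $\xv$, not at the virtual iterate $\xbar$ --- so strong convexity of $f$ cannot be applied to $\gstar$ directly and the drift error $\vec{e}_t$ must be threaded through every estimate. Getting the bookkeeping to land on precisely the stated coefficients $(1-\mu\eta_t)$, $-\eta_t$, and $2L\eta_t$ requires choosing the Young parameters and splitting the strong-convexity descent carefully (an alternative to the gradient-step completion above is to split the correlation $\inpr{\gstar}{\xbar-\xstar}$ node by node so that the drift enters at order $\eta_t$ rather than $\eta_t^2$, at the cost of using convexity of the individual $f^v$). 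Everything else is the routine algebra of perturbed-iterate analysis.
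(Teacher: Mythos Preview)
Your setup --- the virtual-sequence recursion, the bias--variance split yielding $\eta_t^2\rho\sigma^2$, and the identification of the drift $\vec{e}_t$ as the obstacle --- is correct and matches the paper. But the difficulty you flag at the end is real, and your primary ``complete the square at $\xbar$'' route does not quite close: after peeling off the consensus-point contribution you are left with the cross term $-2\eta_t\inpr{\vec{e}_t}{\xbar-\xstar}$, and any Young/Cauchy--Schwarz bound on it produces a $+c\,\eta_t\norm{\xbar-\xstar}$ that must be reabsorbed. The only budget available is the spare $\tfrac12\eta_t(f(\xbar)-f(\xstar))\ge\tfrac{\mu}{4}\eta_t\norm{\xbar-\xstar}$, which forces the Young parameter to scale like $1/\mu$ and lands a coefficient of order $\eta_t L^2/\mu$ --- not $2L\eta_t$ --- on the consensus error $\sumov\DvtoD\norm{\xv-\xbar}$. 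So the stated constants are out of reach along this line.

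The paper takes precisely the alternative you sketch in your last sentence: it does \emph{not} decompose $\gstar=\nabla f(\xbar)+\vec{e}_t$, but splits the inner product node by node,
\[
-2\eta_t\inpr{\gstar}{\xbar-\xstar}=-2\eta_t\sumov\DvtoD\Big(\inpr{\nabla f^v(\xv)}{\xv-\xstar}+\inpr{\nabla f^v(\xv)}{\xbar-\xv}\Big),
\]
applies $\mu$-strong convexity of each $f^v$ to the first piece (giving $-(f^v(\xv)-f^v(\xstar))-\tfrac{\mu}{2}\norm{\xv-\xstar}$, then Jensen on $-\norm{\cdot}$ to collapse the weighted sum to $-\tfrac{\mu}{2}\norm{\xbar-\xstar}$) and $L$-smoothness to the second (giving $f^v(\xv)-f^v(\xbar)+\tfrac{L}{2}\norm{\xv-\xbar}$). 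The $f^v(\xv)$ terms cancel and the drift enters directly with coefficient $L\eta_t$, no condition number. The quadratic $\eta_t^2\normm{\gstar}^2$ is bounded separately by $2L^2\eta_t^2\sumov\DvtoD\norm{\xv-\xbar}+4L\eta_t^2(f(\xbar)-f(\xstar))$ via an add-and-subtract through $\nabla f^v(\xbar)$ and $\nabla f(\xstar)$, and $\eta_t\le\tfrac{1}{4L}$ finishes the bookkeeping. In short: switch to your alternative (noting it actually needs per-node \emph{strong} convexity, not just convexity), and the constants fall out cleanly.
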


\begin{lemma}\label{lem2-main}
If assumptions \ref{as1}-\ref{as3} hold, and $\eta_t \leq \frac{1}{4L}$, then
$\EX f(\xbar[{t+1}]) \leq \EX f(\xbar) + \frac{\eta_t^2L\rho\sigma^2}{2} -\frac{\eta_t}{4} \EX\norm{\nabla f(\xbar)}+ L^2\eta_t\sumov \DvtoD \EX \norm{\xv-\xbar}$.
\end{lemma}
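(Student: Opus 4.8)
\textbf{Plan for proving Lemma~\ref{lem2-main}.} The idea is to treat the virtual iterate $\xbar$ as if it were produced by plain SGD on $f$, run the textbook smooth-descent argument, and push every algorithm-specific effect (gradient delays, global-model jumps, each node using only its own gradient, arbitrary topology) into the single quantity $\EX\norm{\xv-\xbar}$, which is exactly what Lemma~\ref{lem3-main} will bound. Since $f=\sumov\DvtoD f^v$ and each $f^v$ is $L$-smooth by Assumption~\ref{as1}, $f$ is $L$-smooth; and from the definition of the virtual sequence, $\xbar[t+1]=\xbar-\eta_t\g$ with $\g:=\sumov\DvtoD\nabla\fiv[t](\xv)$. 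The first step is the quadratic upper bound of smoothness at $\xbar$,
\[
f(\xbar[t+1]) \le f(\xbar) - \eta_t\inpr{\nabla f(\xbar)}{\g} + \tfrac{L\eta_t^2}{2}\norm{\g},
\]
followed by taking expectation conditioned on the history up to (but excluding) the step-$t$ random samples. Because each node draws its sample independently of that history and of the other nodes, this turns $\g$ into $\gbar:=\sumov\DvtoD\nabla f^v(\xv)$ in the linear term, i.e. $-\eta_t\inpr{\nabla f(\xbar)}{\gbar}$.

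Next I would bound the two remaining terms. For the quadratic term, a bias--variance split gives $\EX\norm{\g}=\norm{\gbar}+\EX\norm{\g-\gbar}$, and by cross-node independence and Assumption~\ref{as3}, $\EX\norm{\g-\gbar}=\sumov(\DvtoD)^2\,\EX\norm{\nabla\fiv[t](\xv)-\nabla f^v(\xv)}\le\rho\sigma^2$. For the linear term I would use the polarization identity $-\inpr{a}{b}=\tfrac12\norm{a-b}-\tfrac12\norm{a}-\tfrac12\norm{b}$ with $a=\nabla f(\xbar)$ and $b=\gbar$, then bound, via Jensen and the per-node Lipschitz property of Assumption~\ref{as1}, $\norm{\nabla f(\xbar)-\gbar}=\norm{\sumov\DvtoD(\nabla f^v(\xbar)-\nabla f^v(\xv))}\le\sumov\DvtoD\norm{\nabla f^v(\xbar)-\nabla f^v(\xv)}\le L^2\sumov\DvtoD\norm{\xbar-\xv}$. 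Collecting terms, the coefficient of $\norm{\gbar}$ equals $\tfrac{L\eta_t^2}{2}-\tfrac{\eta_t}{2}$, which is non-positive whenever $L\eta_t\le1$ — in particular under the hypothesis $\eta_t\le\tfrac1{4L}$ — so I would discard that term; taking a final total expectation then gives $\EX f(\xbar[t+1])\le\EX f(\xbar)+\tfrac{L\eta_t^2\rho\sigma^2}{2}-\tfrac{\eta_t}{2}\EX\norm{\nabla f(\xbar)}+\tfrac{L^2\eta_t}{2}\sumov\DvtoD\,\EX\norm{\xv-\xbar}$. This is already (marginally) stronger than the stated bound, and relaxing $\tfrac{\eta_t}{2}$ to $\tfrac{\eta_t}{4}$ and $\tfrac{L^2\eta_t}{2}$ to $L^2\eta_t$ recovers the claimed inequality verbatim.

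The only part that is not purely mechanical — and therefore the ``main obstacle'' — is making the conditioning step rigorous: with delayed gradient computations and with global-model receptions that may occur in the same slot, one must set up the filtration carefully so that the step-$t$ stochastic gradients are conditionally unbiased for $\nabla f^v(\xv)$ and conditionally independent across nodes. This is standard local-SGD bookkeeping rather than a genuine difficulty, and notably the diversity, synchronization-interval, and convexity assumptions are not used anywhere in this lemma; all the real work of the analysis is isolated in Lemma~\ref{lem3-main}'s control of $\EX\norm{\xv-\xbar}$.
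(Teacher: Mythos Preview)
Your argument is correct and follows the same overall template as the paper's proof: start from the $L$-smooth quadratic upper bound at $\xbar$, condition on the history to replace $\g$ by $\gbar$ in the linear term, and do a bias--variance split on $\EX\norm{\g}$ to extract the $\rho\sigma^2$ contribution. The one genuine difference is how you dispose of $\norm{\gbar}$. You use the exact polarization identity $-\inpr{a}{b}=\tfrac12\norm{a-b}-\tfrac12\norm{a}-\tfrac12\norm{b}$, which produces a spare $-\tfrac{\eta_t}{2}\norm{\gbar}$ that absorbs the $\tfrac{L\eta_t^2}{2}\norm{\gbar}$ coming from the quadratic term under $\eta_t L\le 1$. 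The paper instead writes $-\inpr{\nabla f(\xbar)}{\gbar}=-\norm{\nabla f(\xbar)}+\inpr{\nabla f(\xbar)}{\nabla f(\xbar)-\gbar}$ and applies Young's inequality, which loses that negative $\norm{\gbar}$ term; it must then separately bound $\norm{\gbar}\le 2\norm{\nabla f(\xbar)}+2L^2\sumov\DvtoD\norm{\xv-\xbar}$ inside the quadratic term. This is why the paper lands exactly on the constants $-\tfrac{\eta_t}{4}$ and $L^2\eta_t$, whereas your route gives the sharper $-\tfrac{\eta_t}{2}$ and $\tfrac{L^2\eta_t}{2}$ that you then relax. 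Your observation that Assumptions~\ref{as4}--\ref{as6} and~\ref{as2} are not needed here is also accurate and matches the paper. The filtration point you flag is handled in the paper exactly as you describe (conditioning on $i_0,\dots,i_{t-1}$), so there is no hidden obstacle.
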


\begin{lemma}[Bounding deviation]\label{lem3-main}
If assumptions \ref{as3}-\ref{as6} hold, $\eta_t=\eta \leq \frac{1}{30L(H'+E)}$, and $\omega_{t}$ is $(H'+E)$-slow increasing then $\sum_{t=0}^{T-1} \omega_t \sumov \DvtoD \EX \norm{\xbar - \xv} \leq \frac{1}{8L^2}\sum_{t=0}^{T-1} \omega_t  \EX\norm{\nabla f(\xbar)}+90\eta^2 (H'+E) \big( \zeta^2 (H'+E) + \sigma^2\big)  \sum_{t=0}^{T-1}  \omega_t$,
where in single-stream $H'=H$ and in multi-stream \ours $H'=\max_{u}\sum_{m\in p_u^v}H_m$.
\end{lemma}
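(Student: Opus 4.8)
\textbf{Proof plan for Lemma~\ref{lem3-main}.} The plan is to first bound the per-node, per-iteration deviation $\EX\norm{\xbar-\xv}$ by a short window of recent stochastic gradients, and then sum this bound over the nodes (with weights $\DvtoD$) and over $t$ (with weights $\omega_t$), absorbing the resulting self-referential term. For the first step I would combine the closed form of the virtual sequence, $\xbar=\xz-\sumov\DvtoD\sum_{z=0}^{t-1}\eta_z\nabla\fiv[z](\xv[z])$, with an unrolled form of the true iterate: writing $\tav$ for the last slot $\le t$ at which node $v$ received a global model, rule~(\ref{basic}) gives $\xv[t]=\xtild[\tav]-\sum_z\eta_z\nabla\fiv[z](\xv[z])$ with the sum over the local steps taken at $v$ during $[\tav,t)$, while the global-model recursion~(\ref{eq:globMod1}) (and line~15 of Alg.~\ref{alg:Dec_multi_st} in the multi-stream case) unrolls $\xtild[\tav]$ into $\xz$ minus a $\DvtoD$-weighted sum of each node's stochastic gradients, each per-node sum truncated at that node's own most recent visit. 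Subtracting, all ``old'' gradients cancel, leaving $\xbar-\xv$ equal to a combination (with node-$u$ coefficient bounded by $\DvtoD[u]$, plus an extra $\pm1$ for node $v$'s own unsynchronised steps, so the coefficients sum to $O(1)$) of stochastic gradients $\eta_z\nabla f_{i^u_z}(\vec x^u_z)$ whose iteration indices $z$ lie in a window $W_t$ of length $O(H'+E)$ ending at $t$. Here Assumption~\ref{as6} ($gap(\mathcal S^v_T)\le H$, or $\le H_m$ per stream) forces $\tav\ge t-H'$ and keeps every node's most recent visit within $O(H')$ of $t$, and Assumption~\ref{as5} ($\lv-t\le E$) moves the earliest relevant index back by at most $E$; for multi-stream \ours, chaining the per-stream intervals along the path from the root to $v$ is exactly why $H'=\max_{u}\sum_{m\in p_u^v}H_m$ enters the quantity $A=H'+E$.

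For the second step, split each stochastic gradient into its conditional mean $\nabla f^u(\vec x^u_z)$ and a zero-mean noise term. Because the noise terms are martingale differences, their cross terms vanish under $\EX\norm{\cdot}$, so via Assumption~\ref{as3} they contribute only $O(\eta^2(H'+E)\sigma^2)$ — a single power of $H'+E$. For the deterministic part, Jensen over the $O(H'+E)$ summands in $W_t$ (and over the nodes, whose weights sum to $O(1)$) costs one factor $H'+E$, after which Assumption~\ref{as4} gives $\norm{\nabla f^u(\vec x^u_z)}\le 2\zeta^2+2\norm{\nabla f(\vec x^u_z)}$ and $L$-smoothness of $f$ (from Assumption~\ref{as1}) gives $\norm{\nabla f(\vec x^u_z)}\le 2\norm{\nabla f(\xbar[z])}+2L^2\norm{\xbar[z]-\vec x^u_z}$. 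Collecting terms, one obtains absolute constants $c_1,\dots,c_4$ such that for every $t$,
\begin{align}
\sumov\DvtoD\EX\norm{\xbar-\xv}
&\le c_1\eta^2(H'+E)\sigma^2+c_2\eta^2(H'+E)^2\zeta^2\nonumber\\
&\quad+c_3\eta^2(H'+E)\sum_{z\in W_t}\EX\norm{\nabla f(\xbar[z])}\nonumber\\
&\quad+c_4L^2\eta^2(H'+E)\sum_{z\in W_t}\sumov\DvtoD\EX\norm{\xbar[z]-\xv[z]}.\nonumber
\end{align}

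For the third step, multiply this inequality by $\omega_t$, sum over $t=0,\dots,T-1$, and interchange the order of the double sums. Since each index $z$ lies in $W_t$ for only $O(H'+E)$ values of $t$, and since $\omega$ being $(H'+E)$-slow increasing gives $\omega_t\le c\,\omega_z$ on $W_t$, the last two lines each acquire one further factor $O(H'+E)$, yielding a term of order $\eta^2(H'+E)^2\sum_z\omega_z\EX\norm{\nabla f(\xbar[z])}$ and a self-referential term of order $L^2\eta^2(H'+E)^2\sum_z\omega_z\sumov\DvtoD\EX\norm{\xbar[z]-\xv[z]}$. The condition $\eta\le\frac1{30L(H'+E)}$ is precisely what makes the coefficient of the self-referential term $\le\frac12$ (so it moves to the left-hand side) and the coefficient of the gradient term $\le\frac1{8L^2}$; once the remaining (constant-in-$t$) terms are collected they total at most $90\eta^2(H'+E)\big(\zeta^2(H'+E)+\sigma^2\big)\sum_{t=0}^{T-1}\omega_t$, which is the claimed bound, with $H'=H$ for single-stream and $H'=\max_{u}\sum_{m\in p_u^v}H_m$ for multi-stream \ours.

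The main obstacle is the unrolling in the first step. In plain local-SGD, $\xbar$ is the uniform average of the node iterates, so the deviation collapses immediately to the last $O(H+E)$ local steps of each node; in \ours, $\xtild$ is instead assembled by a DFS walk that re-weights each node's local progress by $\DvtoD$ (lines~11--13 of Alg.~\ref{alg:Dec}; line~15 of Alg.~\ref{alg:Dec_multi_st}), so one has to track this recursion carefully enough to verify both that no gradient older than $O(H'+E)$ iterations survives after the cancellations and that every surviving coefficient is dominated by $\DvtoD$, so that the Jensen step in the second part does not leak a factor of $V$. The multi-stream case additionally requires keeping account of how the $|\mathcal M_v|$ streams concatenate along the path from the root, which is what replaces the single sync bound $H$ by $\max_{u}\sum_{m\in p_u^v}H_m$.
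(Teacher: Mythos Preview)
Your proposal is correct and follows essentially the same three-stage strategy as the paper's proof: reduce the deviation to a window of $O(H'+E)$ recent stochastic gradients, split each gradient into four pieces (noise, diversity, smoothness-to-$\xbar[z]$, and $\nabla f(\xbar[z])$), then sum with weights $\omega_t$, use the $(H'+E)$-slow-increasing property to exchange $\omega_t$ for $\omega_z$, and absorb the self-referential term via $\eta\le\tfrac{1}{30L(H'+E)}$.

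The one structural difference is in your first step. You propose to unroll $\xbar$ and $\xv$ directly back to $\xz$, subtract, and argue that all gradients older than $O(H'+E)$ cancel while the survivors carry per-node weights bounded by $\DvtoD[u]$ (plus the extra $\pm 1$ on node $v$'s own post-sync updates). The paper instead inserts two triangle inequalities, writing
\[
\norm{\xbar-\xv}\le 2\norm{\xv-\xtild[\tau_t^v]}+2\norm{\xbar-\xtild[\tau_t^v]},
\qquad
\norm{\xbar-\xtild[\tau_t^v]}\le 2\norm{\xbar-\xtild}+2\norm{\xtild-\xtild[\tau_t^v]},
\]
and bounds the resulting three pieces separately. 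Each piece then has a transparent structure: $\xv-\xtild[\tau_t^v]$ is purely node-$v$ local drift since its last sync; $\xbar-\xtild$ is the $\DvtoD[u]$-weighted drift of every node since its own last sync; and $\xtild-\xtild[\tau_t^v]$ is the contribution of nodes visited after $v$ in the current round. In each case the per-node window has length at most $H+E$ and the node weights are automatically convex, so the Jensen step is immediate. This buys cleanliness at the price of the constant blow-up $2\cdot(1+2\cdot 2)=10$ that eventually becomes the factors $80$ and $90$. Your direct-cancellation route avoids those splitting factors but, exactly as you flag in your last paragraph, requires careful bookkeeping of which of node $u$'s gradients are already folded into $\xtild[\tau_t^v]$ (a nested $\tau^u_{\tau_t^v}$-type index), and in the multi-stream case of how several streams' $\xtild[-1][m]$ interact in line~15 of Alg.~\ref{alg:Dec_multi_st}. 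Both routes work; the paper's is easier to write down without error, yours is more direct.
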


\section{Evaluation of \ours} \label{eval}

\begin{figure*}[t]
     \centering
     \begin{subfigure}[b]{0.325\textwidth}
         \centering
         \includegraphics[width=\textwidth]{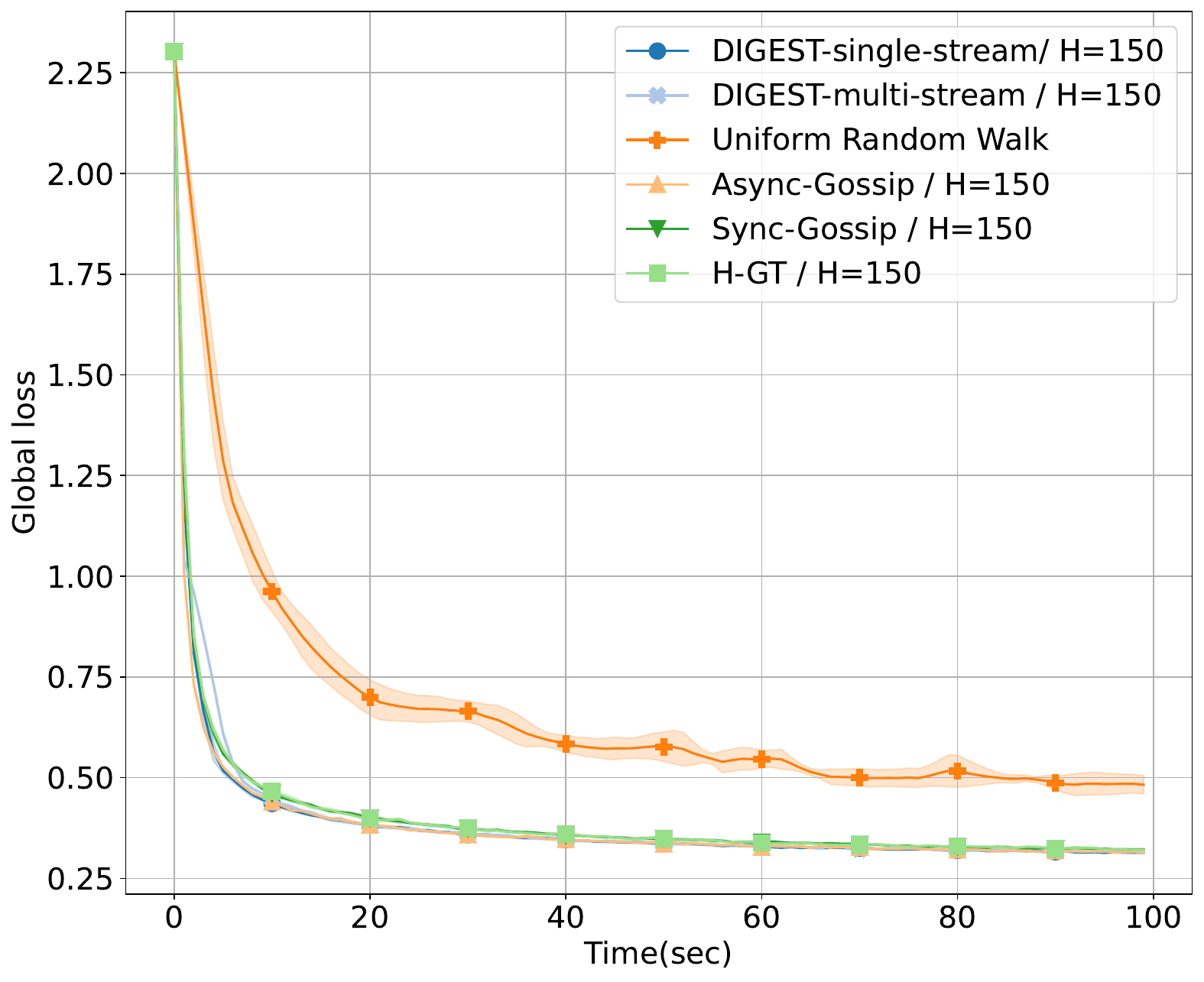}
         \caption{10-nodes / iid / balanced}
         \label{mnist/final_l_t_10_iid}
     \end{subfigure}
     \begin{subfigure}[b]{0.325\textwidth}
         \centering
         \includegraphics[width=\textwidth]{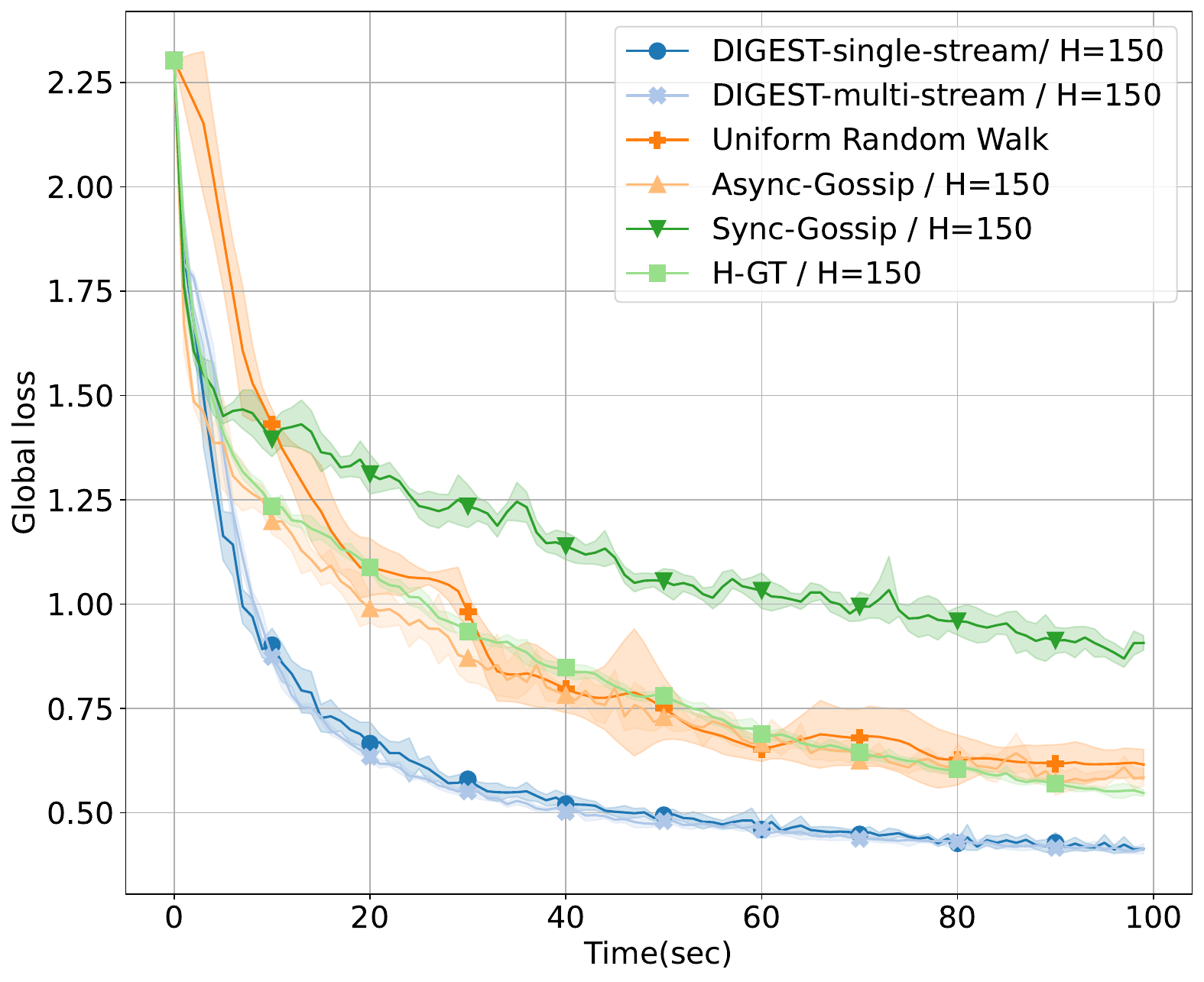}
         \caption{10-nodes / non-iid / unbalanced}
         \label{mnist/final_l_t_10_noniid}
     \end{subfigure}
     \begin{subfigure}[b]{0.325\textwidth}
         \centering
         \includegraphics[width=\textwidth]{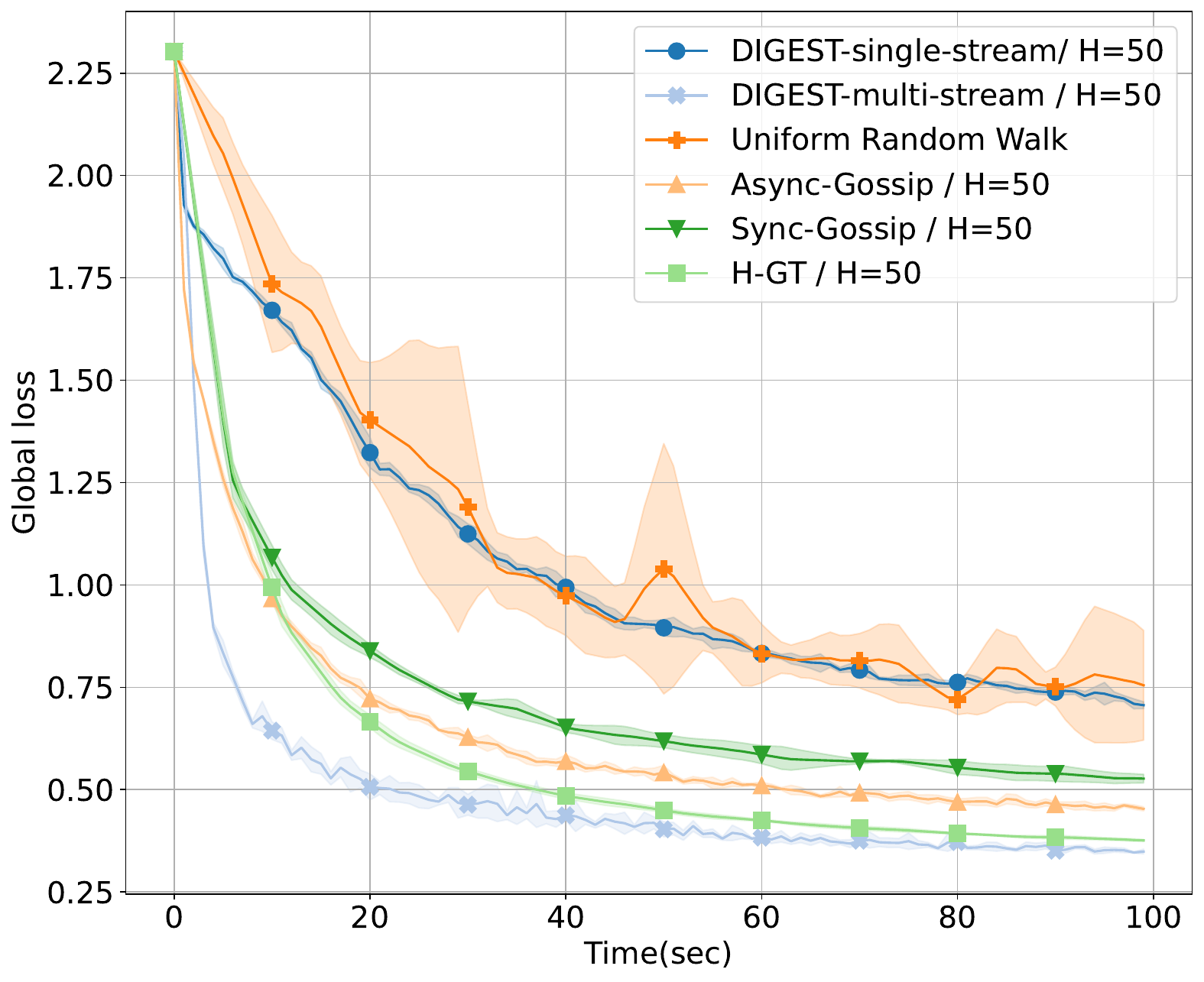}
         \caption{100-nodes / non-iid / unbalanced}
         \label{mnist/final_l_t_100_noniid}
     \end{subfigure}
     \vspace{-10pt}
        \caption{{Convergence results for \textit{MNIST} dataset in terms of global loss over wall-clock time.}}
        \label{mnist-10-100}
        \vspace{-5pt}
\end{figure*}

\begin{figure*}[t]
     \centering
     \begin{subfigure}[b]{0.325\textwidth}
         \centering
         \includegraphics[width=\textwidth]{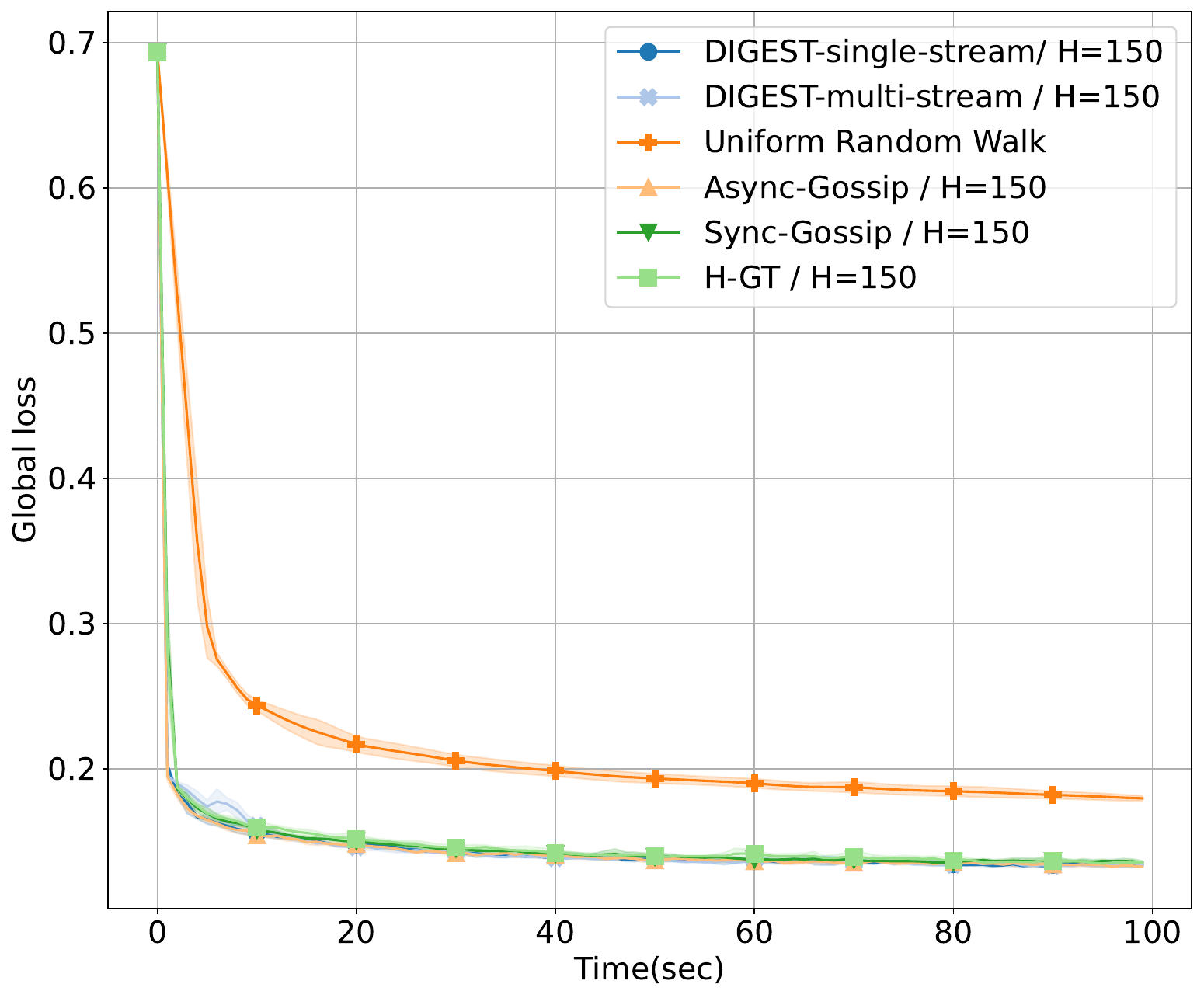}
         \caption{10-nodes / iid / balanced}
         \label{w8a/final_l_t_10_iid}
     \end{subfigure}
     \begin{subfigure}[b]{0.325\textwidth}
         \centering
         \includegraphics[width=\textwidth]{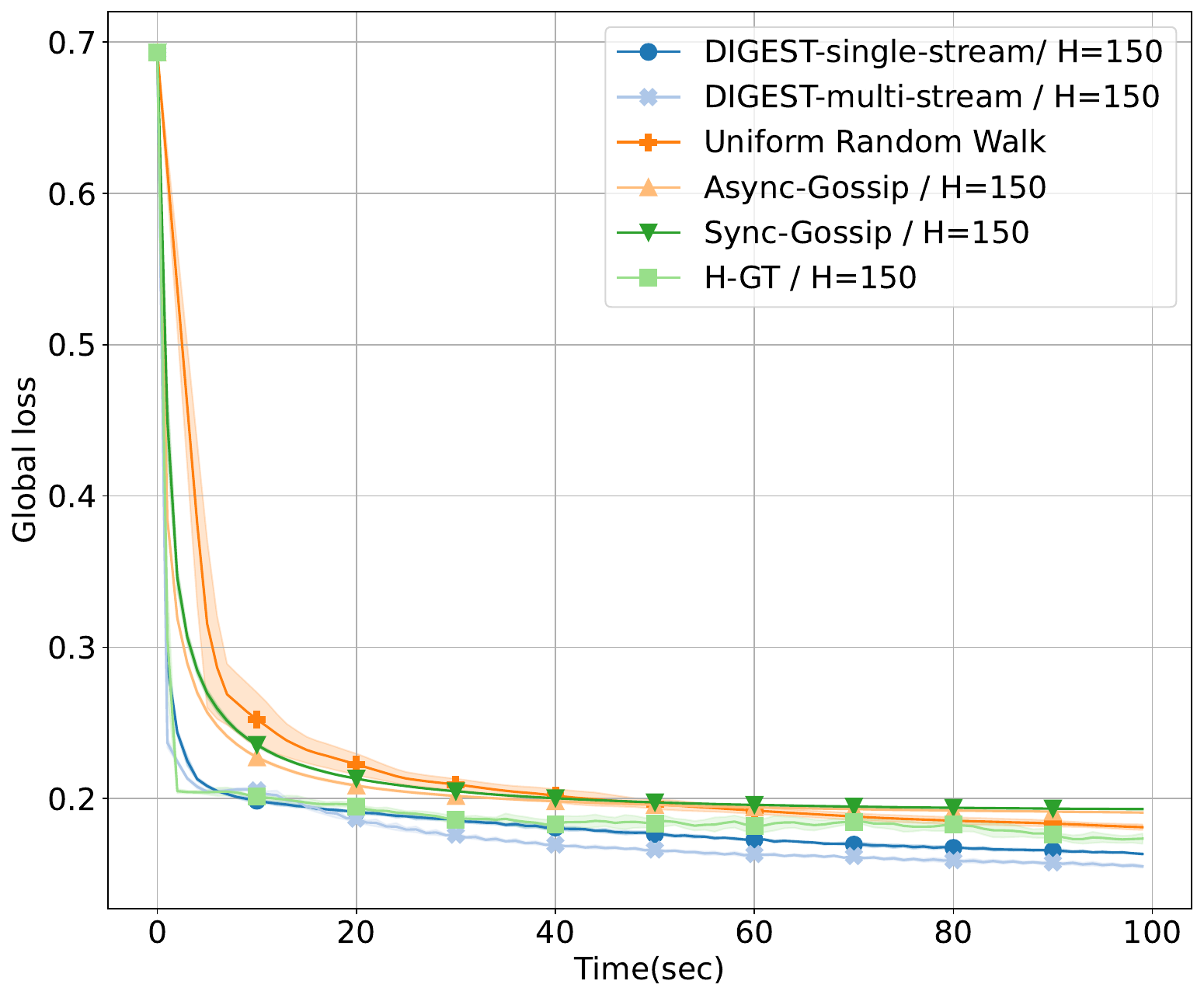}
         \caption{10-nodes / non-iid / unbalanced}
         \label{w8a/final_l_t_10_noniid}
     \end{subfigure}
     \begin{subfigure}[b]{0.325\textwidth}
         \centering
         \includegraphics[width=\textwidth]{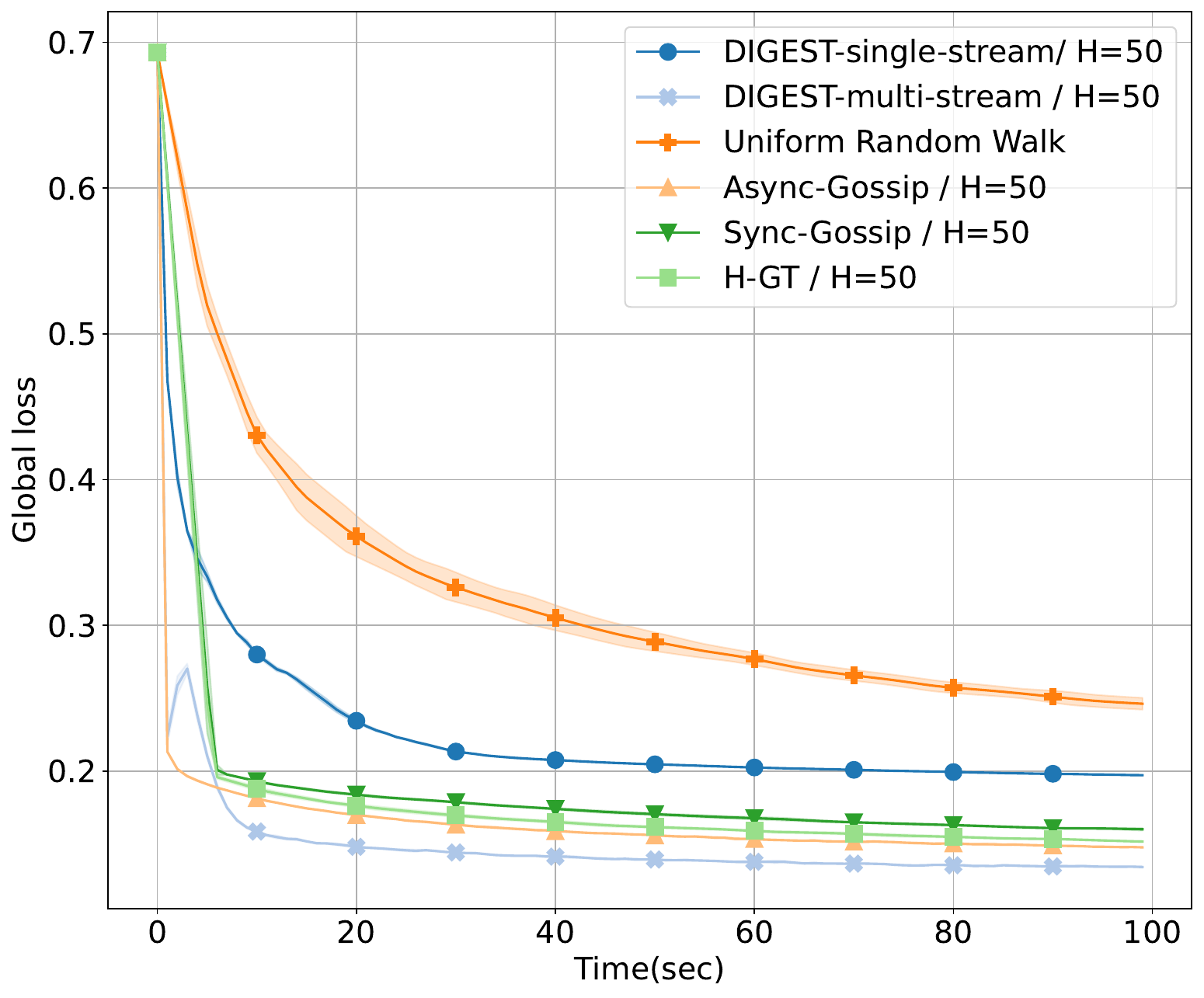}
         \caption{100-nodes / non-iid / unbalanced}
         \label{w8a/final_l_t_100_noniid}
     \end{subfigure}
     \vspace{-10pt}
        \caption{{Convergence results for \textit{w8a} dataset in terms of global loss over wall-clock time.}}
        \label{w8a-10-100}
        \vspace{-15pt}
\end{figure*}

We evaluate \ours as compared to baselines; 
(i) Uniform Random-Walk (URW) \cite{GhadirRW2021}: This is a random walk-based learning algorithm described in Fig. \ref{RW}; 
(ii) Gradient tracking (GT) with local-SGD \cite{liu2023GT}: It is an algorithm that is developed to overcome data heterogeneity across nodes in a decentralized optimization problem;
(iii) Async-\gos \cite{Lian2018Async} with local-SGD; (iv) (ii) Sync-\gos \cite{Lian2018Async} with local-SGD. Our codes are provided in \cite{digest-codes}.

 We consider two network topologies; an Erd\H{o}s–Rényi graph of $V = 10$ and $V= 100$ nodes with $0.3$ as the probability of connectivity.
{Each neighboring pair's communication delay is assumed to conform to an exponential distribution. The average delay is randomly chosen to span from 0 to 50 times the duration of the specific model's local SGD computation.}

We use two data distributions: (i) iid-balanced, and (ii) non-iid-unbalanced. In iid-balanced case, data is shuffled and equally divided and placed in nodes. Non-iid-unbalanced has two features: (i) Non-iid, which is realized by sorting data according to their labels and distributing them in the sorted order. Thus, the data distributed over nodes will be non-iid; (ii) Unbalanced, which means that each node may have a different amount of data. We use geometric series to realize unbalanced data across nodes. For example, if a node $u$ has $D_u = \delta$ data, the next nodes get $\delta \rho$, $\delta \rho^2$, etc. data, where $\rho$ is determined by taking into account the size of the total dataset $D$. 

\subsection{Logistic Regression}

We examine the convergence performance of logistic regression, \ie $ f(\vec{x})$ $=$ $\frac{1}{D}$ $ \sum_{i=1}^D \text{CrossEntropy}$ $\big(\text{softmax}(\vec{x}\vec{a_i}),b_i\big)+\frac{\lambda}{2}\norm{\vec{x}}$, 
where $\vec{a_i}\in \mathbb{R}^d$, and $b_i$ are the feature and label of the data sample $i$. The regularization parameter is considered $\lambda = \frac{1}{D}$.
We run the optimization using a tuned constant learning rate for each algorithm.
To grid-search the learning rate, we try the experiment by multiplying and dividing the learning rate by powers of two. This involves trying out both larger and smaller learning rates until we find the best result. 
We use datasets \textit{w8a} \cite{w8a} and \textit{MNIST} \cite{mnist}.
The numerical experiments were run on Ubuntu $20.04$  using $36$ Intel Core i9-10980XE processors. 
For each experiment, we repeat $50$ times and present the error bars associated with the randomness of the optimization.
In every figure, we include the average and standard deviation error bar.

Figs. \ref{mnist-10-100} and \ref{w8a-10-100} show the convergence behavior of our algorithms as well as the baselines for \textit{MNIST} and \textit{w8a} datasets in 10-nodes and 100-nodes topologies.
URW generally underperforms as compared to other methods due to its approach of conducting only one local-SGD operation per iteration on a single node. As a consequence, it does not have a linear speed-up with increasing number of nodes.
In certain situations involving non-iid data distribution, URW may exhibit better performance than some other methods as shown in Figs. \ref{mnist/final_l_t_10_noniid}, \ref{mnist/final_l_t_100_noniid}, \ref{w8a/final_l_t_100_noniid}, \ref{w8a/final_l_t_100_noniid}. This is because URW is not affected by non-iidness as it uniformly incorporates data from all nodes. \ours, Sync-\gos, and Async-\gos have similar performance in iid data distribution in Figs. \ref{mnist/final_l_t_10_iid}, \ref{w8a/final_l_t_10_iid}. On the other hand,  we observe that \gos based algorithms are suffering from slow convergence in non-iid setting as shown in Figs.
\ref{mnist/final_l_t_10_noniid}, \ref{mnist/final_l_t_100_noniid},
\ref{w8a/final_l_t_10_noniid}, \ref{w8a/final_l_t_100_noniid}. 
We also observe that GT algorithms enhance the performance of gossip-based algorithms by incorporating a mechanism to overcome non-iidness. 
However, this algorithm demands twice the communication overhead compared to sync-\gos, resulting in more communication overhead, which can degrade its convergence performance in terms of wall-clock time.
In comparison, \ours has better convergence behavior thanks to its very design of spreading information uniformly in the network to handle non-iidness. 
%
It is evident that when the network is larger, one-stream \ours method is unable to cover the entire network as quickly as required, highlighting the need to utilize multi-stream \ours to overcome this limitation. This observation is supported in Figs. \ref{mnist/final_l_t_100_noniid}, \ref{w8a/final_l_t_100_noniid}, where all streams have the same $H_m = H, m\in \mathcal{M}$ in multi-stream \ours. 

\subsection{Deep Neural Network (DNN)} In this section, we use ResNet-$20$ \cite{He2015resnet} as the DNN model. The dataset is  \textit{CIFAR-10} \cite{cifar10}. 
We have set the batch size to $36$ per node, and the learning rate is decayed by a constant factor after completing $50\%$ and $75\%$ of the training time. The initial value of the learning rate is separately tuned for each algorithm.
We have set the momentum value to $0.9$ and the weight decay to $10^{-4}$.
We observe that in the iid setting (Fig. \ref{cifar10/iid}), all algorithms except URW perform similarly. However, in non-iid settings, where communication and model distribution across the network become crucial, \ours outperforms \gos-based algorithms, Fig. \ref{cifar10/noniid}. 


\begin{figure}[t]
\vspace{-5pt}
     \centering
     \begin{subfigure}[b]{0.41\textwidth}
         \centering
         \includegraphics[width=\textwidth]{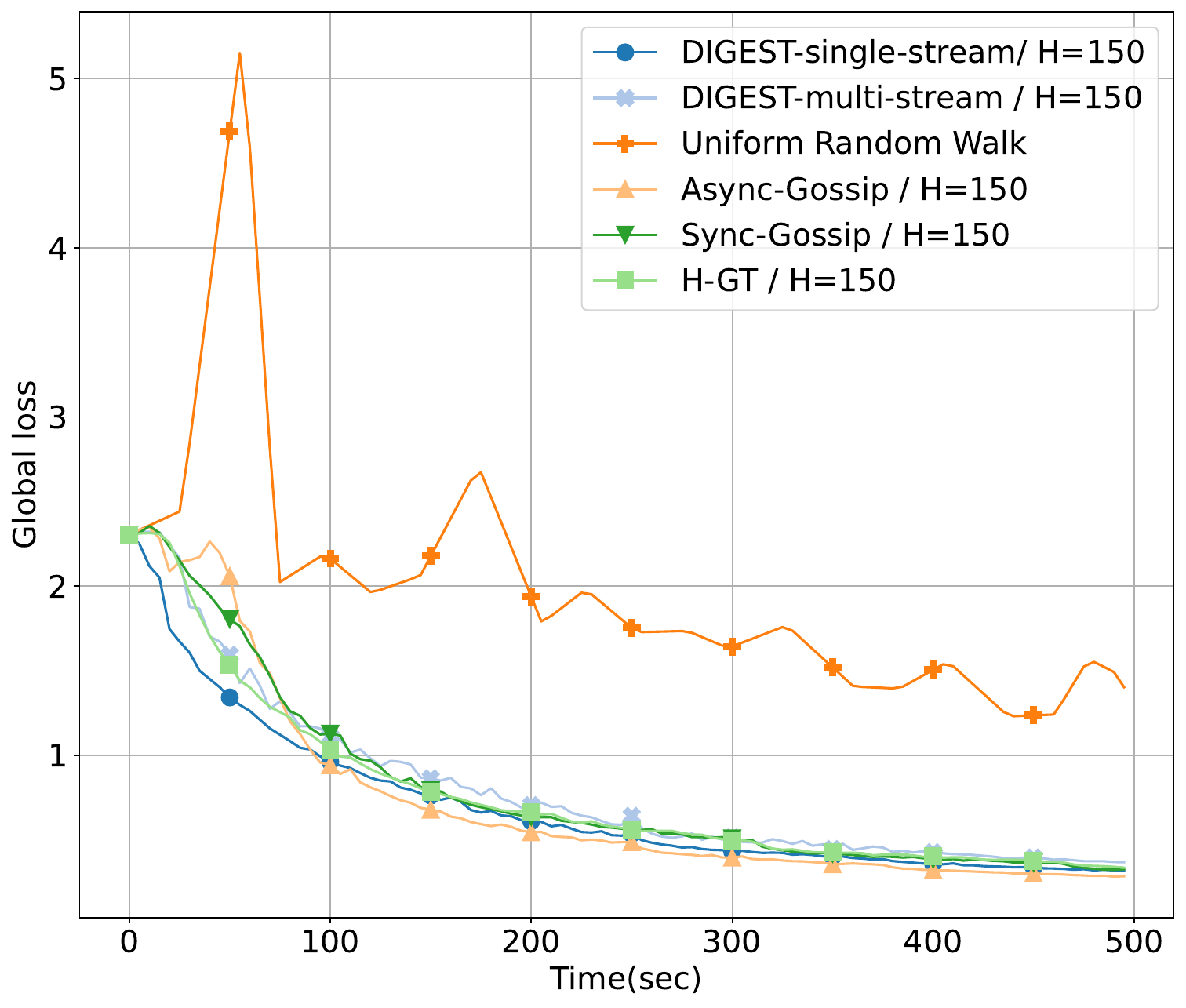}
         \caption{10-nodes / iid / balanced}
         \label{cifar10/iid}
     \end{subfigure}
     \hfill
     \begin{subfigure}[b]{0.41\textwidth}
         \centering
         \includegraphics[width=\textwidth]{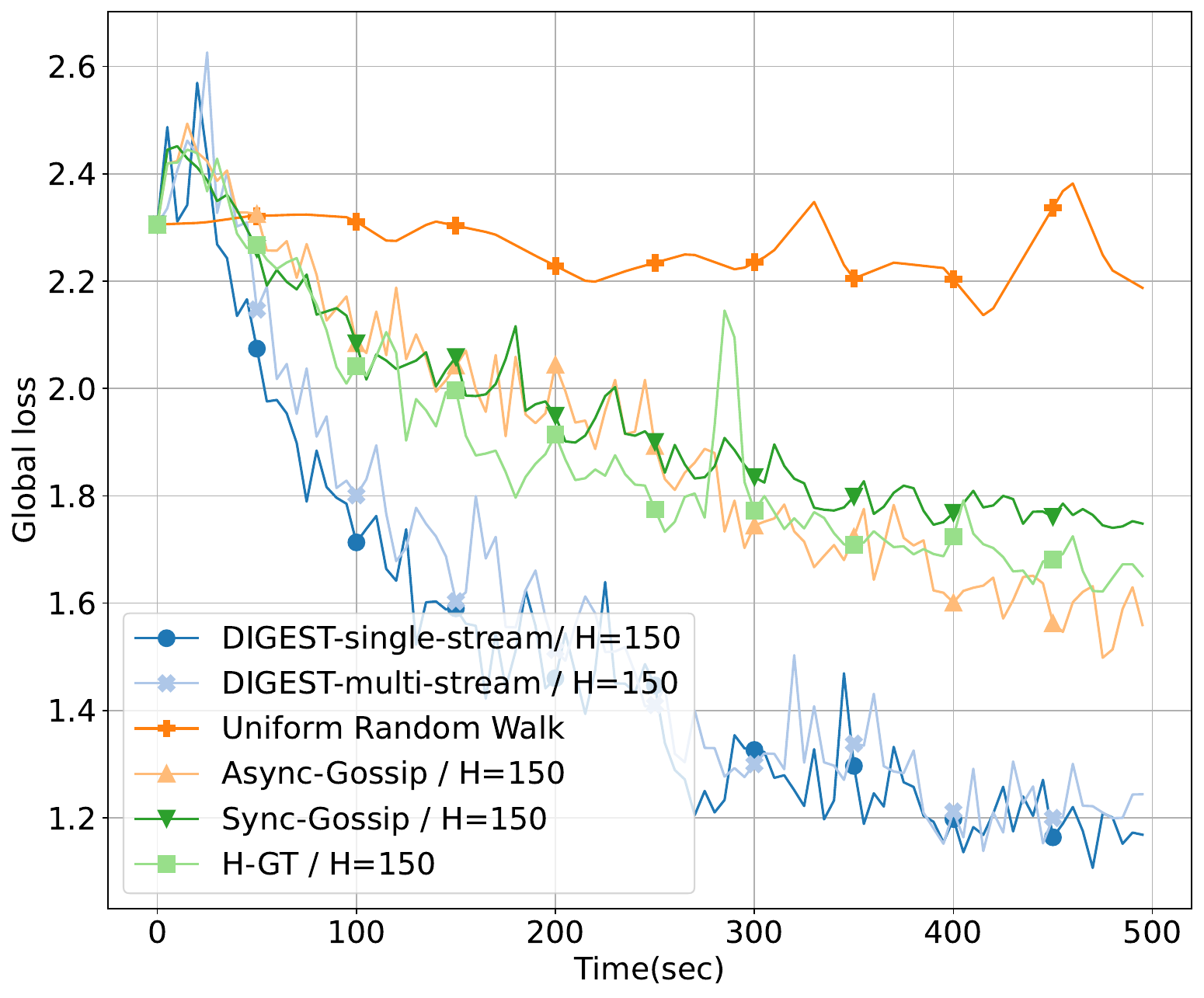}
         \caption{10-nodes / non-iid / unbalanced}
         \label{cifar10/noniid}
     \end{subfigure}
        \caption{{Convergence results for \textit{CIFAR-10} dataset in terms of global loss over wall-clock time.}}
        \label{cifar10}
        \vspace{-10pt}
\end{figure}


\subsection{Speed-up} In this section, we evaluate the speed up performance of our \ours algorithms as well as the baseline; centralized parallel SGD. We consider the following cost function
\begin{align} \label{eq:cost-speed-up}
    f(x) = \begin{cases}
       (x-1)^2 &\quad x\geq 1,\\
       \frac{(x-1)^2}{2} &\quad x< 1. \\ 
     \end{cases}
\end{align}
{to control the non-iidness and local variances. Note that we need to increase the number of nodes to generate speed-up curves, so we need to create a non-iid data distribution over the nodes. Creating a uniform non-iid distribution using real datasets when the number of nodes increases is very difficult. Thus, we use a pre-defined cost function in (\ref{eq:cost-speed-up}) to verify our theoretical results following a similar approach in \cite{spiridonoff2021communication}.}
In particular, we employ Local-SGD at node $v$ with gradients affected by a normal noise, i.e., $\nabla \fiv(\xv) = \nabla f(\xv) + n_t^v$, where $n_t^v\sim \mathcal{N}(\zeta_v,\sigma^2)$, $\sum_{v=1}^V \zeta_v = 0$.
To create the speed-up curve, we divide the expected error of a single node SGD by the expected error of each method at the last iteration $T$ for different number of nodes. As in linear speed-up, error decreases linearly with the increasing number of workers, so we expect to see a straight line on the graph. 
The speed-up curve is illustrated in Fig. \ref{speed-up}. The central parallel SGD averages all nodes` updates at every $H$ steps, and updates the model in all nodes. 
%
%
It is worth noting that the central parallel SGD with $H=1$ is the best speed-up that can be achieved in this scenario. 

We set the learning rate to $0.001$, and $|\zeta_v| = 5$ for $v \in \mathcal{V}$, $\sigma =5$, and $T = 10^4$. Note that in iid setting with a less restrictive constraint on $H$, larger $H$ can still lead to linear speed-up when compared to non-iid setting.
Moreover, it is seen that single stream \ours has linear speed-up to a certain limit; however, as the number of nodes increases and single-stream \ours cannot traverse the entire network fast enough, linear speed-up is not maintained. On the other hand, multi-stream \ours achieves linear speed up and achieves a very close performance to the best possible scenario, which is centralized parallel SGD with $H=1$. 

\begin{figure}[t]
     \centering
     \begin{subfigure}[b]{0.41\textwidth}
         \centering         \includegraphics[width=\textwidth]{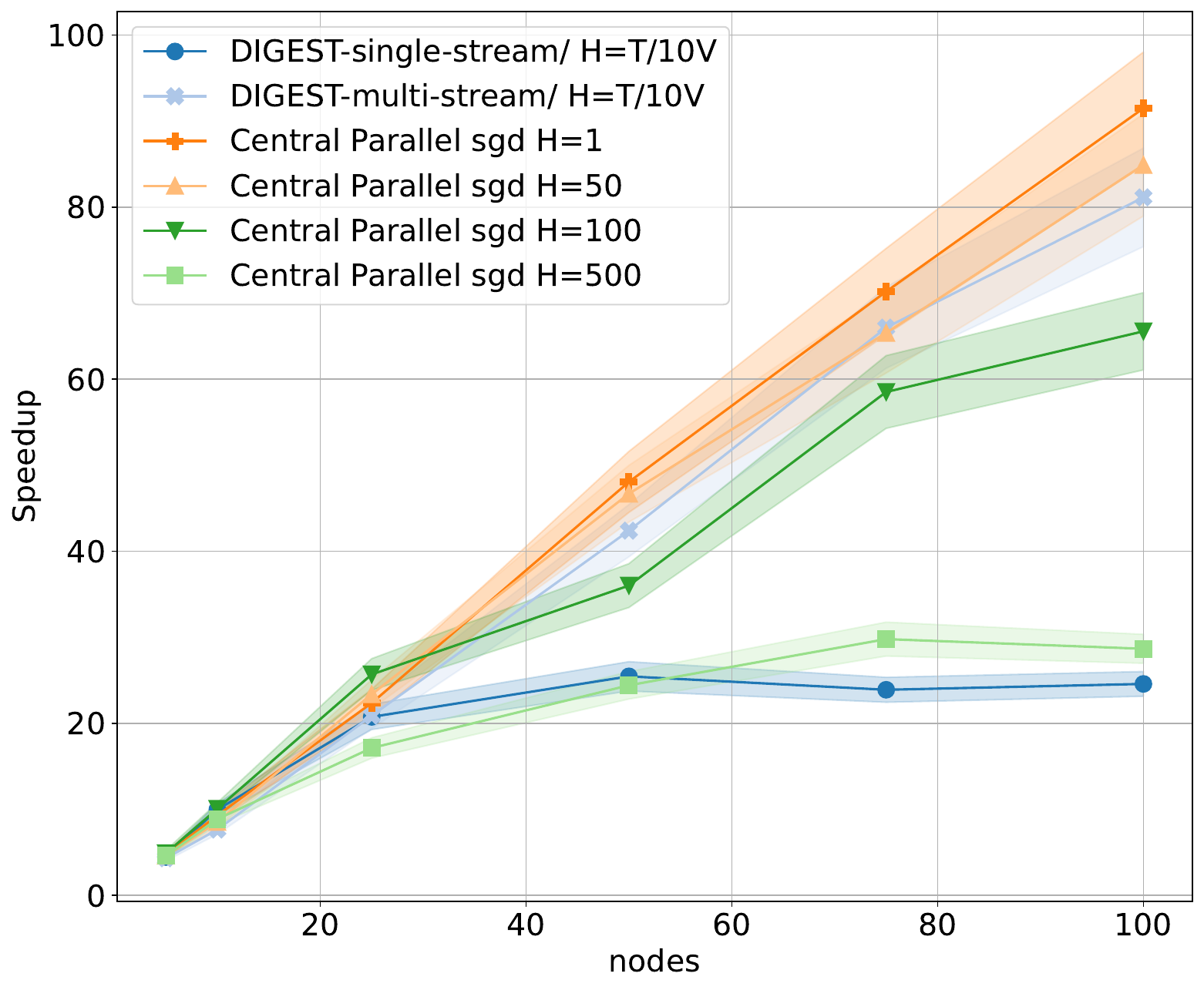}
         \caption{iid / balanced}
         \label{sppeed-up/iid}
     \end{subfigure}
     \hfill
     \begin{subfigure}[b]{0.41\textwidth}
         \centering
         \includegraphics[width=\textwidth]{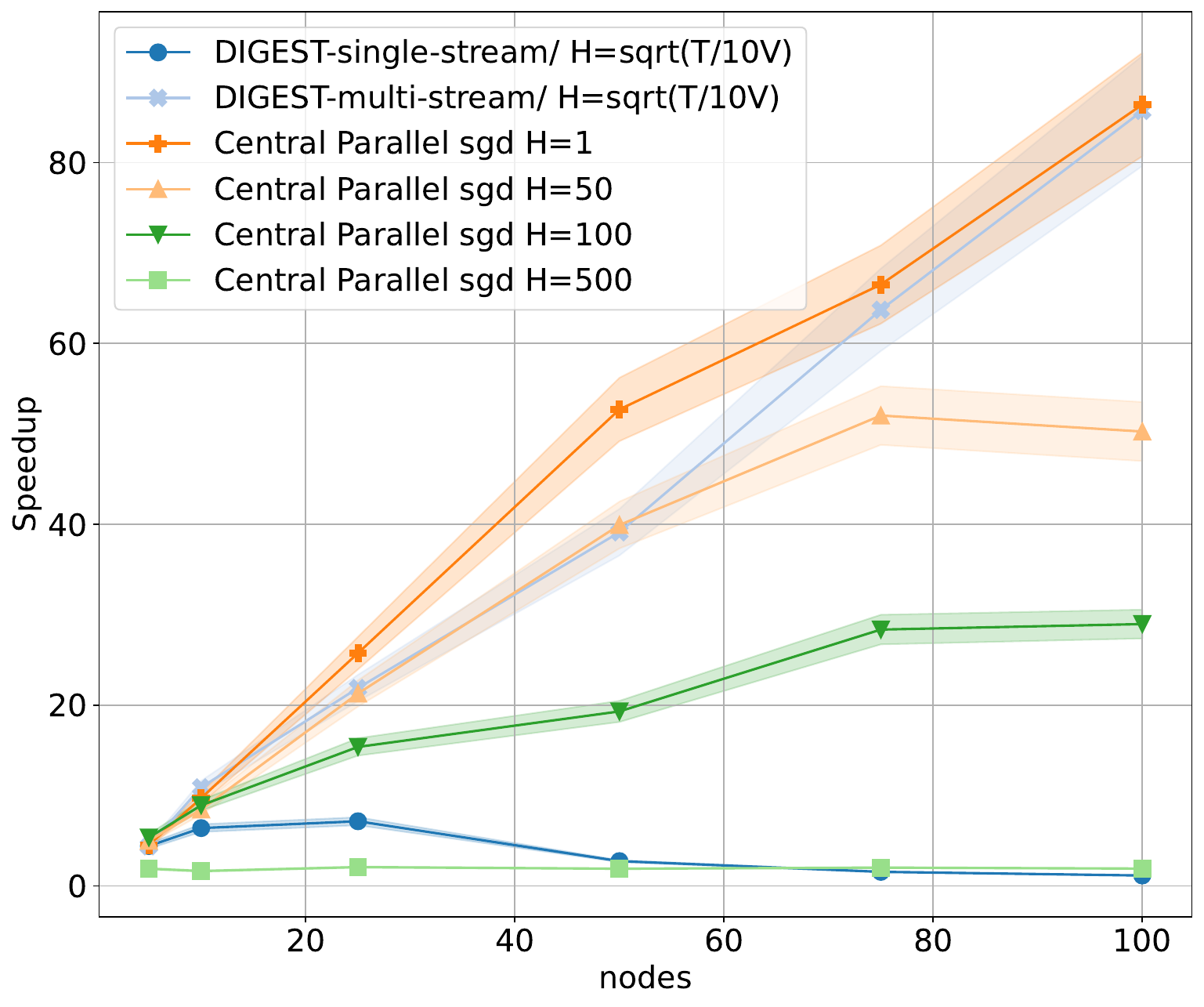}
         \caption{non-iid / unbalanced}
         \label{sppeed-up/noniid}
     \end{subfigure}
        \caption{Speed-up curves for \ours.}
        \label{speed-up}
        \vspace{-10pt}
\end{figure}

\section{Conclusion}

We designed a fast and communication-efficient decentralized learning mechanism; \ours by particularly focusing on stochastic gradient descent (SGD). We designed single- and multi-stream \ours to exploit the convergence rate and communication overhead tradeoff. We analyzed the convergence of single- and multi-stream \ours, and proved that both algorithms approach to the optimal solution asymptotically for both iid and non-iid data. The simulation results confirm that the communication cost of \ours is low as compared to the baselines, and  its convergence rate is better than or comparable to the baselines.

\bibliographystyle{IEEEtran}
\bibliography{main.bib}

\begin{IEEEbiography}[{\includegraphics[width=1in]{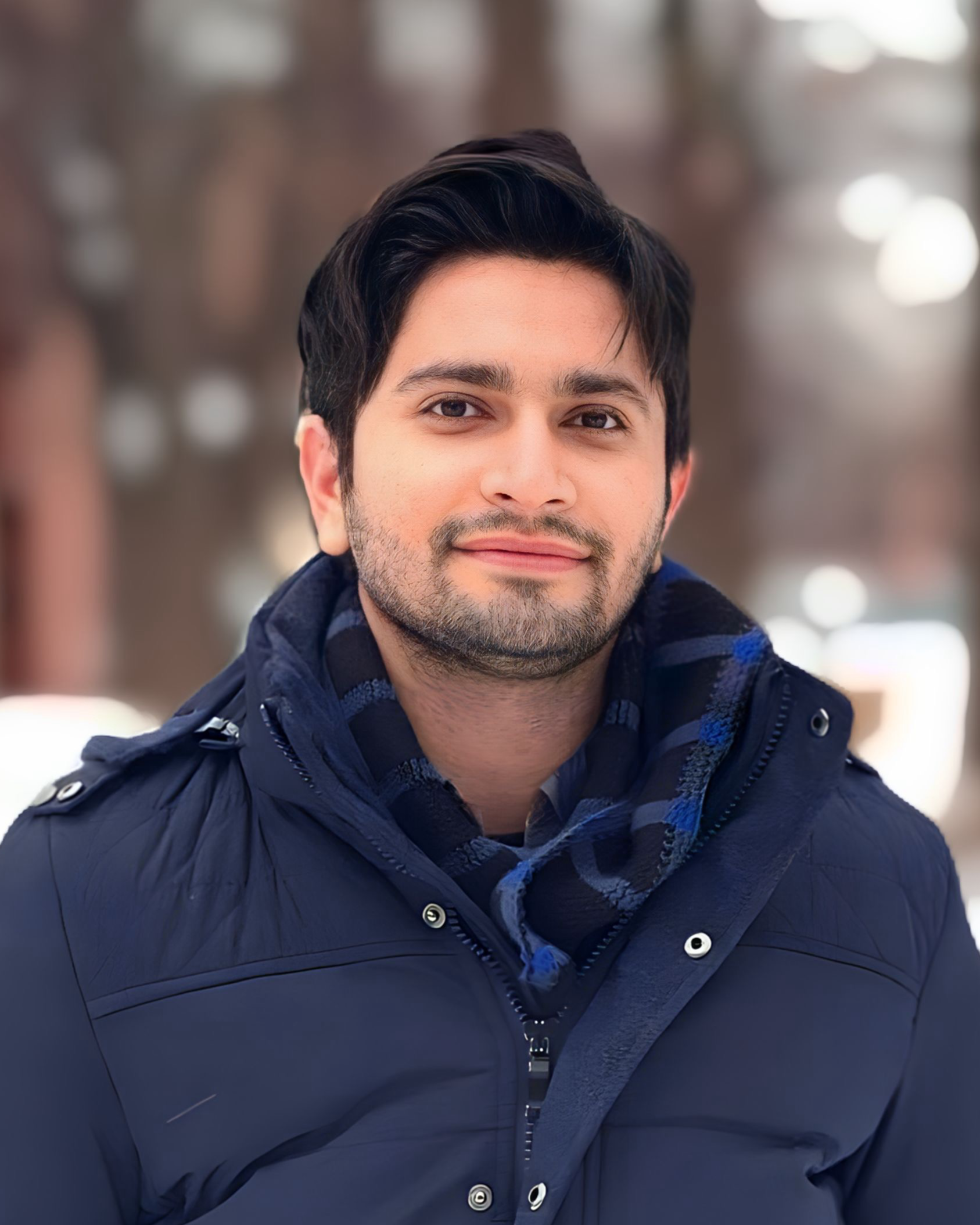}}]
{Peyman Gholami~} received his B.Sc. degree in Electrical Engineering from Iran University of Science and Technology (IUST) in 2018, and M.Sc. degree in Communication Systems from Sharif University of Technology (SUT) in 2021. Currently, he is a Ph.D student in the Department of Electrical and Computer Engineering, University of Illinois at Chicago, under supervision of Prof. Hulya Seferoglu.
\end{IEEEbiography}

\begin{IEEEbiography}[{\includegraphics[width=1in]{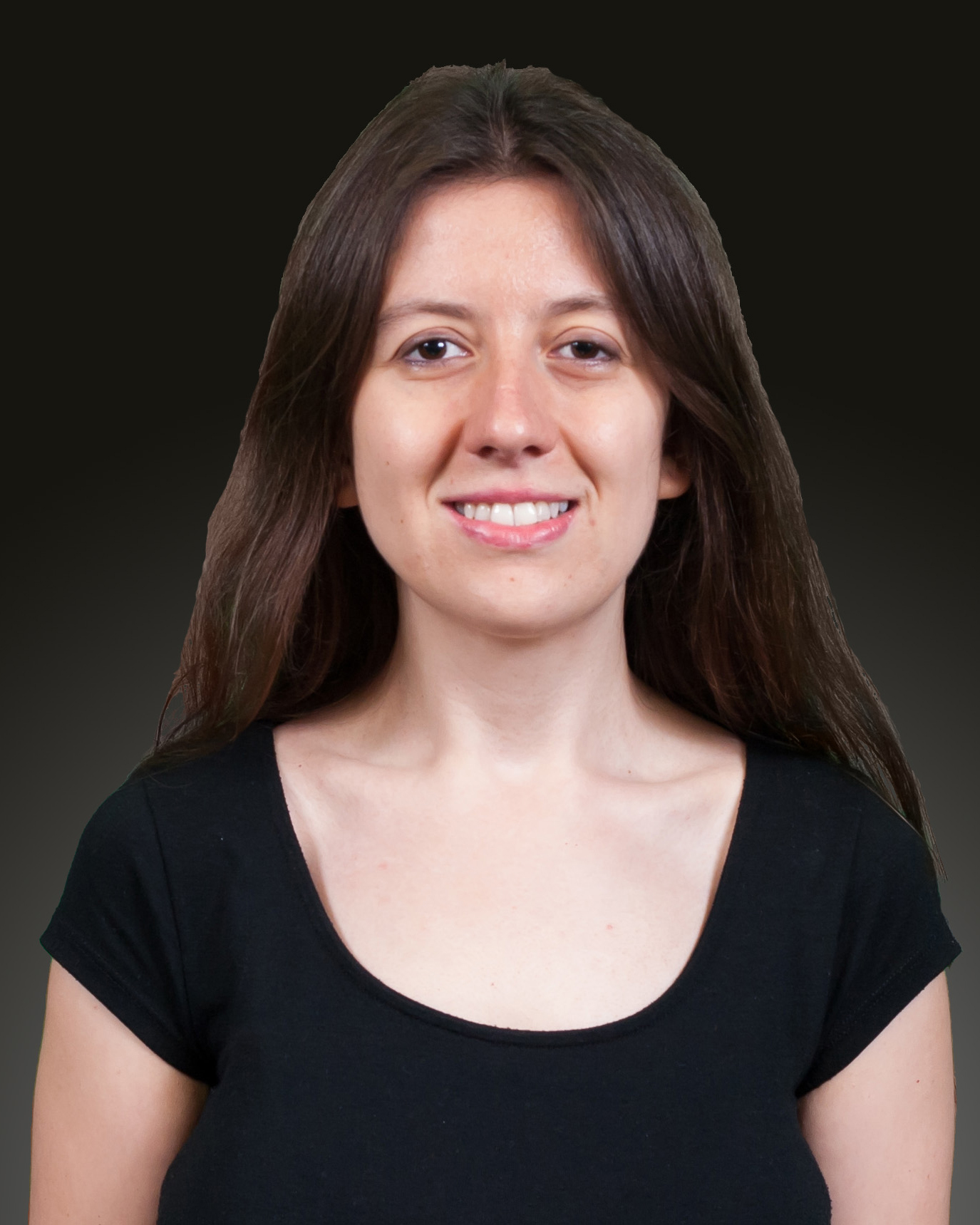}}]
{Hulya Seferoglu~} is an Associate Professor in the Electrical and Computer Engineering Department of University of Illinois at Chicago. 
Before joining University of Illinois at Chicago, she was a Postdoctoral Associate at Massachusetts Institute of Technology. 
She received her Ph.D. degree in Electrical and Computer Engineering from University of California, Irvine, M.S. degree in Electrical Engineering and Computer Science from Sabanci University, and B.S. degree in Electrical Engineering from Istanbul University. 
She has served as an associate editor for IEEE Transactions on Mobile Computing and IEEE/ACM Transactions on Networking. 
She received the NSF CAREER award in 2020. 
\end{IEEEbiography}

\newpage

\vfill\pagebreak
\newpage
\onecolumn

\appendix
\subsection{Notation Table }\label{appendixa}

\begin{tabularx}{0.95\textwidth} { 
  | >{\raggedright\arraybackslash}m{5em}
  | >{\raggedright\arraybackslash}X |}
 \hline
 $G=(\mathcal{V},\mathcal{E})$ & Graph representing the network\\
 \hline
 $V$ & Number of nodes\\
 \hline
 $\mathcal{D}$ & The whole dataset in the network with size $D$\\
 \hline
 $\mathcal{D}_v$ & Subset of $\mathcal{D}$ at node $v$ with size $D_v$\\
 \hline
 $f_i (\vec{x})$ & Loss function of model $\vec{x}$ associated with the data sample $i$ \\
 \hline
 $f(\vec{x})$ & Global loss function of model $\vec{x}$ \\
 \hline
 $f^v(\vec{x})$  & Local loss function of model $\vec{x}$ at node $v$ \\
 \hline
 $f^*$ & $\min_{\vec{x} \in \mathbb{R}^d}f(\vec{x})$\\
\hline
 $\xstar$ & $\arg \min_{\vec{x} \in \mathbb{R}^d}f(\vec{x})$\\
\hline
 $\xz$ & Initial model\\
 \hline
$T$ & Total number of iterations\\
 \hline
 $\eta_t$ & Learning rate at iteration $t$\\
\hline
$\lv$  & Completion time of local-SGD update started at $t$ in node $v$\\
\hline
${L}^v_T$ & Set of $\{\lv\}_{0\leq t <T}$\\
\hline
$\xv$ & Local model in node $v$ at $t$\\
\hline
$s_{t}^v$ & Binary variable that shows if node $v$ receives the global model at $t$ in single-stream \ours \\
\hline
$\mathcal{S}^v_T$ & Set of $\{\sv[t]\}_{0<t\leq T}$\\
\hline
 $H$ & Synchronization bound for single-stream \ours, \ie $gap(\mathcal{S}^v_T) \leq H$, $1 \leq v \leq V$ \\
\hline
$s_{t}^v[m]$ & Binary variable that shows if node $v$ receives the global model at $t$ from stream $m$ in multi-stream \ours \\
 \hline
$\mathcal{S}^v_T[m]$ & $\{\sv[t][m]\}_{0<t\leq T}$\\
\hline
 $H_m$ & Synchronization bound  of stream $m$ in multi-stream \ours, \ie $gap(\mathcal{S}^v_T[m]) \leq H_m$, $m\in \mathcal{M}_v$ \\
\hline
$visited$ & Set of nodes that are visited for the global model update in the most recent synchronization round for single-stream \ours\\
\hline
$visited[m]$ & Set of nodes that are visited for the semi-global model update in the most recent synchronization round in stream $m$ for multi-stream \ours\\
\hline
$\xtild[t]$ & The global model received by node $v$ at $t$ in single-stream \ours\\
\hline
$\xtild[t][m]$ & The global model received by node $v$ at $t$  from stream $m$  in multi-stream \ours\\
\hline
$pre\_node$ & The node that node $v$ receives the global model from in single-stream \ours\\
\hline
$pre\_node[m]$ & The node that node $v$ receives the semi-global model from in stream $m$ for multi-stream \ours\\
\hline
$p^v_m$ & The node that node $v$ receives the global model from for the first time in the current synchronization round in stream $m$\\
\hline
$d^G_{uv}$ & The distance between $u$ and $v$, the total delay of the links in the shortest path between $u$ and $v$\\
\hline
$R^G_v$ & The greatest distance from $v$, i.e., $R^G_v =\max_v d^G_{uv}$\\
\hline
$r$ & Root or the node with the minimum $R^G_v$, i.e., $r= \arg \min_v R^G_v$ \\
\hline
$ST_r$ & The shortest path tree rooted at r\\
\hline
$P^r_v$ & Set of streams working over the path from $v$ to $r$ in $ST_r$ \\
\hline
$M$ & Number of streams in multi-stream \ours \\
\hline
$\mathcal{M}_v$ & The set of streams passing node $v$ \\
\hline
$\mathcal{V}_m$ & The set of nodes in domain of stream $m$ \\
\hline
\end{tabularx}

\newpage

\subsection{Proof of Theorem \ref{T1}}\label{appendixb}

Motivated by \cite{Stich2019LSGD}, a virtual sequence $\{\xbar\}_{t\geq 0}$ is defined as follows.
\begin{equation}
\xbar =\xz - \sumov \sum_{z=0}^{t-1} \frac{D_v}{D} \eta_z \nabla \fiv[z](\xv[z]).
\end{equation}
We do not need to calculate this sequence in the algorithm explicitly and it is only used for the sake of the analysis.
We also define
\begin{align}
&\g=  \sumov \DvtoD \nabla \fiv(\xv), &\gbar =  \sumov \DvtoD \nabla f^v(\xv),&
\end{align}
where $f(\vec{x})$, $f^v(\vec{x})$ are global loss function and local loss function in node $v$, respectively.

Let us introduce $i_t=\{i_t^1,...i_t^V\}$ to denote the data samples selected randomly during time slot $t$ in all nodes.
Then, observe that $\gbar = \EX_{i_t} \g$.
We have $\xbar[t+1] = \xbar - \eta_t \g$. 

First, we illustrate how the virtual sequence, $\{\xbar\}_{t\geq 0}$, approaches to the optimal in Lemma \ref{lem1}, and Lemma \ref{lem2}. Second, we depict in Lemma \ref{lem3} that there is a little deviation from the virtual sequence in the actual iterates, $\xv$. Finally, the convergence rate is proved.

\begin{lemma}\label{lem1}
If $\fiv(\vec{x})$ is $L$-smooth, $f^v(\vec{x})$ is $\mu$-strongly convex, $\eta_t \leq \frac{1}{4L}$, and $\EX_{i^v_t} \norm{\nabla \fiv(\xv) - \nabla f^v(\xv)}\leq \sigma^2$ for $0 \leq t \leq T-1$, $1 \leq v \leq V$ then
\begin{align}
\EX\norm{\xbar[t+1]  - \xstar} \leq (1-\mu\eta_t)\EX\norm{\xbar - \xstar} + \eta_t^2\rho\sigma^2 -\eta_t \EX(f(\xbar)-f(\xstar))+ 2L\eta_t\sumov \DvtoD \EX \norm{\xv-\xbar}.
\end{align}
\end{lemma}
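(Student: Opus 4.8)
\textbf{Proof proposal for Lemma \ref{lem1}.}

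The plan is to mimic the standard one-step analysis of SGD for strongly convex objectives, but carried out on the virtual sequence $\{\xbar\}$ rather than on the actual iterates, and then to pay for the difference via the deviation term $\EX\norm{\xv - \xbar}$. First I would write $\xbar[t+1] = \xbar - \eta_t \g$ and expand
\begin{align}
\norm{\xbar[t+1] - \xstar} = \norm{\xbar - \xstar} - 2\eta_t \inpr{\g}{\xbar - \xstar} + \eta_t^2 \norm{\g}. \nonumber
\end{align}
Taking expectation over the fresh samples $i_t$ conditioned on the past, I would use $\EX_{i_t}\g = \gbar$ for the cross term, and for the last term split $\g = (\g - \gbar) + \gbar$ so that $\EX_{i_t}\norm{\g} = \EX_{i_t}\norm{\g - \gbar} + \norm{\gbar}$. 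The variance of $\g - \gbar$ is controlled using assumption \ref{as3} together with independence of the per-node samples: since $\g - \gbar = \sumov \DvtoD(\nabla\fiv(\xv) - \nabla f^v(\xv))$ with independent zero-mean summands, $\EX_{i_t}\norm{\g - \gbar} \leq \sumov (\DvtoD)^2 \sigma^2 = \rho\sigma^2$. This accounts for the $\eta_t^2\rho\sigma^2$ term.

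Next I would handle the inner-product and $\norm{\gbar}$ terms together. The idea is to bound $-2\eta_t\inpr{\gbar}{\xbar - \xstar} + \eta_t^2\norm{\gbar}$ and extract both a $-\mu\eta_t\norm{\xbar-\xstar}$ contraction and a $-\eta_t(f(\xbar) - f(\xstar))$ descent term. I would write $\gbar = \nabla f(\xbar) + (\gbar - \nabla f(\xbar))$ and use: (i) $L$-smoothness of $f$ to bound $\norm{\gbar}$ against $\norm{\nabla f(\xbar)}$ plus the discrepancy $\norm{\gbar - \nabla f(\xbar)}$, where the latter is in turn controlled by $\norm{\nabla f(\xbar) - \nabla f^v(\xv)} \le L\norm{\xbar - \xv}$ after using $\nabla f(\xbar) = \sumov \DvtoD \nabla f^v(\xbar)$ and Jensen; (ii) $\mu$-strong convexity and $L$-smoothness of $f$ in the standard way, namely $\inpr{\nabla f(\xbar)}{\xbar - \xstar} \ge f(\xbar) - f^* + \tfrac{\mu}{2}\norm{\xbar - \xstar}$ and $\norm{\nabla f(\xbar)} \le 2L(f(\xbar) - f^*)$; (iii) for the cross term involving $\gbar - \nabla f(\xbar)$, Cauchy--Schwarz / Young's inequality with a carefully chosen constant so that the resulting $\norm{\nabla f(\xbar)}$ piece is absorbed by the negative descent term and the resulting $\sumov \DvtoD\norm{\xbar - \xv}$ piece produces the final term. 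The condition $\eta_t \le \frac{1}{4L}$ is exactly what is needed to make these absorptions go through with the stated constants ($2L\eta_t$ in front of the deviation, coefficient $1$ in front of $-\eta_t(f(\xbar)-f(\xstar))$).

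The main obstacle I anticipate is bookkeeping of constants: there are several places where a smoothness/convexity inequality is applied and a Young's-inequality split is made, and the coefficients must be tuned so that the leftover $\norm{\nabla f(\xbar)}$ and $f(\xbar)-f^*$ terms combine into a single clean $-\eta_t(f(\xbar)-f(\xstar))$ without any positive residual, which is where $\eta_t \le \frac{1}{4L}$ gets used sharply. A secondary subtlety is making sure the delayed/stale nature of the updates does not interfere here: in this lemma the staleness is hidden inside the definition of $\xv$ (the actual iterate) versus $\xbar$, so it only shows up through the deviation term $\EX\norm{\xv - \xbar}$ and is not otherwise an issue — all the genuine work of controlling staleness and the synchronization gap is deferred to Lemma \ref{lem3-main}. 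Once the conditional bound is established, taking total expectation and using the tower property gives the statement.
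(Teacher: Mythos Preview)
Your overall strategy---expand the squared distance of the virtual iterate, isolate the variance, and relate the remaining deterministic part to $f(\xbar)-f^*$ and the deviation $\norm{\xv-\xbar}$---is the right shape, and your variance bound $\EX\norm{\g-\gbar}\le\rho\sigma^2$ matches the paper. The gap is in step~(iii).

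When you write $\gbar=\nabla f(\xbar)+(\gbar-\nabla f(\xbar))$ and apply strong convexity at $\xbar$, the inner product $-2\eta_t\inpr{\gbar}{\xbar-\xstar}$ leaves behind the cross term
\[
-2\eta_t\inpr{\gbar-\nabla f(\xbar)}{\xbar-\xstar}.
\]
Young's inequality on this term yields a piece proportional to $\norm{\xbar-\xstar}$ (not $\norm{\nabla f(\xbar)}$ as you wrote) together with $\norm{\gbar-\nabla f(\xbar)}\le L^2\sumov\DvtoD\norm{\xv-\xbar}$. The positive $\norm{\xbar-\xstar}$ contribution cannot be absorbed without either spoiling the contraction factor $(1-\mu\eta_t)$ or paying a $1/\mu$ in front of the deviation term; neither is compatible with the stated inequality, whose coefficient on the deviation is $2L\eta_t$ independent of $\mu$.

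The paper avoids this altogether by applying convexity \emph{per node at the local iterate} rather than globally at $\xbar$. Concretely, it splits
\[
\inpr{\xbar-\xstar}{\nabla f^v(\xv)}=\inpr{\xbar-\xv}{\nabla f^v(\xv)}+\inpr{\xv-\xstar}{\nabla f^v(\xv)},
\]
uses $\mu$-strong convexity of $f^v$ on the second piece (giving $f^v(\xv)-f^v(\xstar)+\tfrac{\mu}{2}\norm{\xv-\xstar}$) and $L$-smoothness on the first (giving $-f^v(\xv)+f^v(\xbar)-\tfrac{L}{2}\norm{\xv-\xbar}$). The $f^v(\xv)$ terms cancel, the sum over $v$ produces $f(\xbar)-f(\xstar)$ directly, and Jensen on $-\norm{\cdot}$ turns $-\mu\sumov\DvtoD\norm{\xv-\xstar}$ into $-\mu\norm{\xbar-\xstar}$. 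No $\inpr{\gbar-\nabla f(\xbar)}{\xbar-\xstar}$ term ever appears. The bound on $\eta_t^2\norm{\gbar}$ is done separately via $\norm{\gbar}\le 2L^2\sumov\DvtoD\norm{\xv-\xbar}+4L(f(\xbar)-f^*)$, and then $\eta_t\le\tfrac{1}{4L}$ gives the final constants. Your observations about staleness being hidden in the deviation term and about the tower property at the end are correct.
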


\begin{proof}
We have
\begin{align}
&\norm{\xbar[t+1]-\xstar} = \norm{\xbar-\eta_t \g - \xstar} = \norm{\xbar - \xstar - \eta_t \gbar + \eta_t \gbar - \eta_t \g}\\
&= \norm{\xbar - \xstar - \eta_t \gbar} + \eta^2_t \norm{\gbar -\g } + 2\eta_t \inpr{\xbar - \xstar - \eta_t \gbar}{\gbar - \g} \label{ap_eq8}
\end{align}

Then we apply expectation to get $\EX_{i_0,...,i_t}\norm{\xbar[t+1]-\xstar}$.
Based on the law of total expectation, for every two random variables $\alpha$, $\beta$ and a function $y$, $\EX_\alpha y(\alpha) = \EX_\beta\EX_\alpha[y(\alpha)|\beta]$.
Considering $\alpha = i_0,...,i_t$ and $\beta = i_0,...,i_{t-1}$, we get that
\begin{align}
\EX_{i_0,...,i_t}\inpr{\xbar - \xstar - \eta_t \gbar}{\gbar - \g} &= \EX_{i_0,...,i_{t-1}}\EX_{i_0,...,i_t}[\inpr{\xbar - \xstar - \eta_t \gbar}{\gbar - \g}|i_0,...,i_{t-1}]\\
&= \EX_{i_0,...,i_{t-1}}\inpr{\xbar - \xstar - \eta_t \gbar}{ \gbar - \EX_{i_t}\g} \label{ap_eq11}\\
&= \EX_{i_0,...,i_{t}}\inpr{\xbar - \xstar - \eta_t \gbar}{\gbar -\EX_{i_t}\g }\\
&= \EX_{i_0,...,i_{t}}\inpr{\xbar - \xstar - \eta_t \gbar}{\gbar-\gbar}\\
&= 0. \label{ap_eq12}
\end{align}
In (\ref{ap_eq11}), we used the fact that once we know $i_0,...,i_{t-1}$, the value of $\xv$, $1\leq v \leq V$, and therefore $\xbar$ and $\gbar$ are not random any more.
From now on, we drop the subscript $i_0,...,i_{t}$ for the ease of notation. Thus,
\begin{align}
\EX \norm{\xbar[t+1]-\xstar} \leq & \EX \norm{\xbar - \xstar - \eta_t \gbar }+\eta^2_t \EX \norm{\gbar - \g}. \label{ap_eq14}
\end{align}
We obtain
\begin{align}
\norm{\xbar - \xstar - \eta_t \gbar } &= \norm{\xbar - \xstar} + \eta_t^2 \norm{\gbar} -2 \eta_t \inpr{\xbar - \xstar}{\gbar}\\
&= \norm{\xbar - \xstar} + \eta_t^2 \norm{\gbar} - 2\eta_t \sumov \DvtoD \inpr{\xbar - \xv + \xv - \xstar}{\nabla f^v(\xv)}\\
&=  \norm{\xbar - \xstar} + \eta^2  \norm{\sumov \DvtoD\nabla f^v(\xv)} - 2\eta_t \sumov \DvtoD \inpr{\xv - \xstar}{\nabla f^v(\xv)}\label{eq18}\\
&& \mathllap{-2\eta \sumov \DvtoD \inpr{\xbar - \xv}{ \nabla f^v(\xv)}.}\nonumber
\end{align}
We also get
\begin{align}
  \norm{\sumov \DvtoD \nabla f^v(\xv)}&=\norm{\sumov \DvtoD (\nabla f^v(\xv)-\nabla f^v(\xbar)+\nabla f^v(\xbar)-\nabla f^v(\xstar))}\\
  &\leq 2\norm{\sumov \DvtoD (\nabla f^v(\xv)-\nabla f^v(\xbar))}+2\norm{\sumov \DvtoD (\nabla f^v(\xbar)-\nabla f^v(\xstar))}\label{eq20}\\
  &\leq 2\norm{\sumov \DvtoD (\nabla f^v(\xv)-\nabla f^v(\xbar))}+2\norm{\nabla f(\xbar)-\nabla f(\xstar)}\label{eq21}\\
  &\leq 2\sumov \DvtoD \norm{(\nabla f^v(\xv)-\nabla f^v(\xbar))}+2\norm{(\nabla f(\xbar)-\nabla f(\xstar))}\label{eq22}\\
  &\leq 2L^2\sumov \DvtoD \norm{\xv-\xbar}+4L(f(\xbar)-f(\xstar)),\label{eq23}
\end{align}
where (\ref{eq20}) is based on the following inequality.
\begin{align}
    \norm{\sum_{i=1}^n a_i} \leq n \sum_{i=1}^n \norm{a_i}.\label{ineq_sum}
\end{align}
In (\ref{eq21}) we have used the fact that $\sumov \DvtoD f^v(\vec{x}) = f(\vec{x})$.
(\ref{eq22}), and (\ref{eq23}) are due to the convexity of $\norm{\cdot}$ and L-smoothness, respectively.
Note that by $L$-smoothness we have
\begin{align}
\norm{\nabla f(\vec{x}) - \nabla f(\xstar)} \leq 2L(f(\vec{x}) - f^*).\label{eq24}
\end{align}
$\mu$-strong convexity provides us with
\begin{align}
    -\inpr{\xv - \xstar}{\nabla f^v(\xv)} \leq -(f^v(\xv)-f^v(\xstar)) - \frac{\mu}{2} \norm{\xv-\xstar}.\label{eq26}
\end{align}
Using $L$-smoothness to bound the last term in (\ref{eq18}), we have
\begin{align}
    -\inpr{\xbar - \xv}{\nabla f^v(\xv)} &\leq f^v(\xv) - f^v(\xbar) + \frac{L}{2} \norm{\xv - \xbar}\label{eq27}
\end{align}
We obtain the following result by applying (\ref{eq23}), (\ref{eq26}), and (\ref{eq27}) to (\ref{eq18}):
\begin{align}
\norm{\xbar - \eta_t \gstar - \xstar} \leq \norm{\xbar - \xstar} + 2L^2 \eta_t^2\sumov \DvtoD \norm{\xv-\xbar}+4L\eta_t^2(f(\xbar)-f(\xstar)) \\
&& \mathllap{+ 2\eta_t \sumov \DvtoD \big(-(f^v(\xv)-f^v(\xstar)) - \frac{\mu}{2} \norm{\xv-\xstar} + f^v(\xv) - f^v(\xbar) + \frac{L}{2} \norm{\xv - \xbar}\big)}.\nonumber
\end{align}
This can be rewritten as 
\begin{align}
\norm{\xbar - \eta_t \gstar - \xstar} \leq \norm{\xbar - \xstar} +2\eta(2L\eta_t-1)(f(\xbar)-f(\xstar))+ L \eta_t(1+2L \eta_t)\sumov \DvtoD \norm{\xv-\xbar}.\label{eq28}\\
&& \mathllap{-\mu\eta_t \sumov \DvtoD \norm{\xv-\xstar}.}\nonumber
\end{align}
Using concavity of $\beta\norm{\cdot}$ for $ \beta \leq 0$, we get
\begin{align}
    -\mu \sumov \DvtoD \norm{\xv-\xstar} \leq -\mu\norm{\xbar-\xstar},
\end{align}
so, we get
\begin{align}
\norm{\xbar - \eta_t \gstar - \xstar} \leq (1-\mu\eta_t)\norm{\xbar - \xstar} +2\eta(2L\eta_t-1)(f(\xbar)-f(\xstar))+ L \eta_t(1+2L \eta_t)\sumov \DvtoD \norm{\xv-\xbar}.\label{eq30}
\end{align}
We have $(2\eta_tL-1) \leq -\frac{1}{2}$, $(2\eta_tL+1) \leq 2$  as we assumed $\eta_t \leq \frac{1}{4L}$. So, we obtain
\begin{align}
\norm{\xbar - \eta_t \gstar - \xstar} \leq (1-\mu\eta_t)\norm{\xbar - \xstar} -\eta(f(\xbar)-f(\xstar))+ 2L\eta_t\sumov \DvtoD \norm{\xv-\xbar}.\label{eq31}
\end{align}

By definition, we have that
\begin{align}
\EX \norm{\g - \gbar} &=\EX \norm{\sumov \DvtoD (\nabla \fiv(\xv) - \nabla f^v(\xv))}\\
&= \sumov (\DvtoD)^2\EX \norm{(\nabla \fiv(\xv) - \nabla f^v(\xv)} \label{ap_eq34}\\
&=\sigma^2 \sumov(\DvtoD)^2 \\
&= \rho\sigma^2  \label{bound_var},
\end{align}
where (\ref{ap_eq34}) is based on the fact that variance of the sum of independent random variables equals sum of their variances.
Taking expectation of (\ref{eq31}) and applying it with (\ref{bound_var}) into (\ref{ap_eq14}) provides
\begin{align}
\EX\norm{\xbar[t+1]  - \xstar} \leq (1-\mu\eta_t)\EX\norm{\xbar - \xstar} + \eta_t^2\rho\sigma^2 -\eta_t\EX(f(\xbar)-f(\xstar))+ 2L\eta_t\sumov \DvtoD \EX \norm{\xv-\xbar}.\label{eq36}
\end{align}
\end{proof}

\begin{lemma}\label{lem2}
If $f^v(\vec{x})$ is $L$-smooth , $\eta_t \leq \frac{1}{4L}$, and $\EX_{i^v_t} \norm{\nabla \fiv(\xv) - \nabla f^v(\xv)}\leq \sigma^2$ for $0 \leq t \leq T-1$, $1 \leq v \leq V$ then
\begin{align}
\EX f(\xbar[{t+1}]) \leq \EX f(\xbar) + \frac{\eta_t^2L\rho\sigma^2}{2} -\frac{\eta_t}{4} \EX\norm{\nabla f(\xbar)}+ L^2\eta_t\sumov \DvtoD \EX \norm{\xv-\xbar}.
\end{align}
\end{lemma}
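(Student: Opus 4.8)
The plan is to establish Lemma \ref{lem2} as a standard descent-lemma argument applied to the virtual sequence $\{\xbar\}_{t \geq 0}$, mirroring the structure of the proof of Lemma \ref{lem1} but using only $L$-smoothness (not strong convexity) of $f$. First I would invoke the $L$-smoothness of $f$ (which follows from the $L$-smoothness of each $f^v$ and the fact that $f = \sumov \DvtoD f^v$) to write
\begin{align}
f(\xbar[t+1]) \leq f(\xbar) + \inpr{\nabla f(\xbar)}{\xbar[t+1] - \xbar} + \frac{L}{2}\norm{\xbar[t+1]-\xbar}.\nonumber
\end{align}
Then I would substitute $\xbar[t+1] - \xbar = -\eta_t \g$, take expectation over $i_t$ conditioned on the past, and use $\EX_{i_t}\g = \gbar$ together with the variance bound $\EX\norm{\g - \gbar} \leq \rho\sigma^2$ from (\ref{bound_var}) to handle the quadratic term, giving $\frac{L}{2}\EX\norm{\eta_t\g} \leq \frac{L\eta_t^2}{2}(\norm{\gbar} + \rho\sigma^2)$.

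The core of the argument is then to control the cross term $-\eta_t \inpr{\nabla f(\xbar)}{\gbar}$. I would split $\gbar = \nabla f(\xbar) + (\gbar - \nabla f(\xbar))$, so that $-\eta_t\inpr{\nabla f(\xbar)}{\gbar} = -\eta_t\norm{\nabla f(\xbar)} - \eta_t\inpr{\nabla f(\xbar)}{\gbar - \nabla f(\xbar)}$, and then apply Young's inequality ($2\inpr{a}{b} \leq \norm{a} + \norm{b}$, possibly with a scaling factor) to the last inner product, bounding $\norm{\gbar - \nabla f(\xbar)} = \norm{\sumov \DvtoD(\nabla f^v(\xv) - \nabla f^v(\xbar))} \leq L^2 \sumov \DvtoD \norm{\xv - \xbar}$ via convexity of $\norm{\cdot}$ and $L$-smoothness (the same chain of inequalities as (\ref{eq22})--(\ref{eq23})). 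For the remaining $\frac{L\eta_t^2}{2}\norm{\gbar}$ term I would similarly bound $\norm{\gbar} \leq 2\norm{\nabla f(\xbar)} + 2L^2\sumov\DvtoD\norm{\xv-\xbar}$ and absorb the $\norm{\nabla f(\xbar)}$ contribution into the main negative term using $\eta_t \leq \frac{1}{4L}$.

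The main obstacle — really just careful bookkeeping rather than a conceptual difficulty — is tracking the coefficients so that, after combining the cross-term bound and the quadratic-term bound, the coefficient of $\EX\norm{\nabla f(\xbar)}$ comes out to be at most $-\frac{\eta_t}{4}$ and the coefficient of the deviation term $\sumov\DvtoD\EX\norm{\xv-\xbar}$ comes out to be at most $L^2\eta_t$; this is where the smallness condition $\eta_t \leq \frac{1}{4L}$ is used to kill the $O(\eta_t^2 L^2)$ and $O(\eta_t^2 L^3)$ leftovers. I would conclude by taking total expectation over $i_0,\dots,i_t$ (the conditioning argument is identical to (\ref{ap_eq11})), which yields exactly the claimed inequality
\begin{align}
\EX f(\xbar[{t+1}]) \leq \EX f(\xbar) + \frac{\eta_t^2L\rho\sigma^2}{2} -\frac{\eta_t}{4} \EX\norm{\nabla f(\xbar)}+ L^2\eta_t\sumov \DvtoD \EX \norm{\xv-\xbar}.\nonumber
\end{align}
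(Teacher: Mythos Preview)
Your proposal is correct and follows essentially the same route as the paper's proof: start from the $L$-smoothness descent inequality for $f$ at $\xbar$, split $\g$ into $\gbar$ plus a zero-mean noise term (giving the $\rho\sigma^2$ contribution via (\ref{bound_var})), handle the cross term $-\eta_t\inpr{\nabla f(\xbar)}{\gbar}$ by writing $\gbar = \nabla f(\xbar) + (\gbar - \nabla f(\xbar))$ and applying Young's inequality together with the $L$-smoothness bound $\norm{\gbar - \nabla f(\xbar)} \leq L^2\sumov\DvtoD\norm{\xv-\xbar}$, bound the leftover $\frac{L\eta_t^2}{2}\norm{\gbar}$ via the same split, and finally absorb the $O(\eta_t^2 L)$ and $O(\eta_t^2 L^3)$ residuals using $\eta_t \leq \frac{1}{4L}$. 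The paper's proof (equations (\ref{eq39})--(\ref{eq49})) does exactly this, with the same coefficient bookkeeping you describe.
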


\begin{proof}
Based on the defenition of $\xbar$ and L-smoothness of $f^v(\vec{x})$ we have
\begin{align}
    f(\xbar[t+1]) &= f(\xbar - \eta_t \g) \\
    &\leq f(\xbar) + \eta_t \inpr{\nabla f(\xbar)}{-\g} + \frac{\eta_t^2L}{2} \norm{\g}. \label{eq39}
\end{align}
Let's take expectation of the second term on the right-hand side of (\ref{eq39})
\begin{align}
    \eta_t\EX \inpr{\nabla f(\xbar)}{-\g} &= \eta_t\EX \inpr{\nabla f(\xbar)}{\nabla f(\xbar)-\nabla f(\xbar)-\gbar }\\
    &= -\eta_t\EX \norm{\nabla f(\xbar)} + \eta_t \EX \inpr{\nabla f(\xbar)}{\nabla f(\xbar) - \gbar}\\
    & \leq -\eta_t\EX \norm{\nabla f(\xbar)} +\frac{\eta_t}{2}\EX \norm{\nabla f(\xbar)} + \frac{\eta_t}{2}  \EX\norm{\nabla f(\xbar)-\gbar} \label{eq43}\\
    & \leq -\frac{\eta_t}{2}\EX \norm{\nabla f(\xbar)} + \frac{\eta_t}{2}  \EX\norm {\sumov \DvtoD (\nabla f^v(\xbar)-\nabla f^v(\xv))} \\
    & \leq -\frac{\eta_t}{2}\EX \norm{\nabla f(\xbar)} + \frac{\eta_t}{2} \sumov \DvtoD \EX\norm { \nabla f^v(\xbar)-\nabla f^v(\xv)} \label{eq45} \\
    & \leq -\frac{\eta_t}{2}\EX \norm{\nabla f(\xbar)} + \frac{\eta_tL^2}{2}  \sumov \DvtoD\EX\norm { \xbar -\xv}.\label{eq46} 
\end{align}
(\ref{eq43}) is based on the fact that for any $\lambda > 0$,
\begin{align}
2\inpr{a}{b} \leq \lambda \norm{a} + \frac{1}{\lambda} \norm{b}.\label{lambda_ineq}
\end{align} 
(\ref{eq45}) is based on the convexity of $\norm{.}$ and (\ref{eq46}) is due to $L$-smoothness.
Let's take expectation of the last term on the right-hand side of (\ref{eq39}) as
\begin{align}
    \frac{\eta_t^2L}{2} \EX \norm{\g} &\leq   \frac{\eta_t^2L}{2} \EX \norm{\g - \gbar} +   \frac{\eta_t^2L}{2}  \EX\norm{\gbar}\\
    &\leq  \frac{\eta_t^2L}{2} \rho \sigma^2 +  \eta_t^2 L^3\sumov \DvtoD \EX\norm{\xv-\xbar}+ \eta_t^2L \EX\norm{\nabla f(\xbar)} , \label{eq49}
\end{align}
where (\ref{eq49}) is based on (\ref{eq23}), and (\ref{bound_var}).
 Putting everything together, we obtain
\begin{align}
    \EX f(\xbar[{t+1}]) \leq \EX f(\xbar) +\frac{\eta_t^2L\rho\sigma^2}{2} +\eta_t (\eta_t L - \frac{1}{2})\EX\norm{\nabla f(\xbar)}+ L^2\eta_t(\eta_t L + \frac{1}{2})\sumov \DvtoD \EX \norm{\xv-\xbar}.
\end{align}
Considering $\eta_t \leq \frac{1}{4L}$ we obtain
\begin{align}
\EX f(\xbar[{t+1}]) \leq \EX f(\xbar) + \frac{\eta_t^2L\rho\sigma^2}{2} -\frac{\eta_t}{4} \EX\norm{\nabla f(\xbar)}+ L^2\eta_t\sumov \DvtoD \EX \norm{\xv-\xbar}.
\end{align}
\end{proof}
Observe that Lemmas \ref{lem1}, \ref{lem2} hold regardless of how to synchronize the nodes, i.e., hold for both single and multi-stream \ours.


Before going to the next lemma, we define $\tau$-slow sequences \cite{Stich&Karimireddy}, $\{a_t\}_{t\geq 0}$ of positive values is $\tau$-slow decreasing for parameter $\tau\geq 1$ if
\begin{align}
    a_t \geq a_{t+1}, \qquad a_t \leq a_{t+1} (1 + \frac{1}{2\tau}), \qquad t\geq 0.
\end{align}
The sequence $\{a_t\}_{t\geq 0}$ is $\tau$-slow increasing if $\{a_t^{-1}\}_{t\geq 0}$
is $\tau$-slow decreasing.

\begin{lemma}[Bounding deviation]\label{lem3}
If $\max\{\lv-t\}\leq E$, $\EX_{i^v_t} \norm{\nabla \fiv(\xv) - \nabla f^v(\xv)}\leq \sigma^2$ ,$ \norm{\nabla f(\xv) - \nabla f^v(\xv)}\leq \zeta^2$, $\eta_t=\eta \leq \frac{1}{30L(H'+E)}$, and $\omega_{t}$ is $(H'+E)$-slow increasing for $0 \leq t \leq T-1$, $1 \leq v \leq V$, and for (i) single-stream \ours $gap(\mathcal{S}^v_T) \leq H$, $1 \leq v \leq V$,
(ii) multi-stream \ours, $gap(\mathcal{S}^v_T[m]) \leq H_m$ for $v$ that $m \in \mathcal{M}_v$,
\begin{align}
    \sum_{t=0}^{T-1} \omega_t \sumov \DvtoD \EX \norm{\xbar - \xv} \leq \frac{1}{8L^2}\sum_{t=0}^{T-1} \omega_t  \EX\norm{\nabla f(\xbar)}+90\eta^2 (H'+E) \big( \zeta^2 (H'+E) + \sigma^2\big)  \sum_{t=0}^{T-1}  \omega_t.\label{ap_eq_lem2}
\end{align}
where in single-stream $H'=H$ and in multi-stream \ours $H'=\max_{u}\sum_{m\in p_u^v}H_m$.
\end{lemma}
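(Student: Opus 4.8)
\textbf{Proof plan for Lemma \ref{lem3}.}
The plan is to bound, for each node $v$ and each time slot $t$, the deviation $\EX\norm{\xbar-\xv}$ by unrolling both sequences back to the most recent time the two were ``in sync.'' The key observation is that the virtual sequence $\xbar$ and the true iterate $\xv$ differ only because (i) $\xv$ may lag behind $\xbar$ by the delayed gradients not yet applied (governed by the bounded-lag assumption $E$), and (ii) between two consecutive global synchronizations at node $v$ (which occur at least every $H$ iterations in single-stream, or every $\sum_{m\in P^r_v} H_m$ iterations cumulatively in multi-stream), $\xv$ only sees its own local gradients while $\xbar$ sees a $\DvtoD$-weighted mixture of everyone's gradients. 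Concretely, I would first argue that at a synchronization slot $\tav$ we have a clean relationship between $\xv[\tav]$ and $\xbar[\tav]$ (the global model equals the virtual sequence up to the in-flight/delayed gradient terms), and then write
\begin{align}
\xbar-\xv = (\xbar - \xbar[\tav]) - (\xv - \xv[\tav]) + (\xbar[\tav] - \xv[\tav]),\nonumber
\end{align}
so that each piece is a sum of at most $H'+E$ gradient steps with step size $\eta$.

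The second step is to bound the squared norm of each such gradient sum. Using the inequality $\norm{\sum_{i=1}^n a_i}\le n\sum_{i=1}^n \norm{a_i}$ (eq. \eqref{ineq_sum}) with $n = O(H'+E)$, I would split each stochastic gradient $\nabla\fiv[z](\xv[z])$ as $(\nabla\fiv[z](\xv[z]) - \nabla f^v(\xv[z])) + (\nabla f^v(\xv[z]) - \nabla f(\xv[z])) + (\nabla f(\xv[z]) - \nabla f(\xbar[z])) + \nabla f(\xbar[z])$. Taking expectations: the first group contributes the variance term $\sigma^2$ (Assumption \ref{as3}), the second group the diversity term $\zeta^2$ (Assumption \ref{as4}), the third group is controlled by $L$-smoothness as $L^2\norm{\xv[z]-\xbar[z]}$ — which feeds back into the quantity we are bounding — and the last group produces the $\EX\norm{\nabla f(\xbar[z])}$ term that appears on the right-hand side of \eqref{ap_eq_lem2}. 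This yields a recursive inequality of the shape $\EX\norm{\xbar-\xv} \le c_1\eta^2(H'+E)^2 L^2 \cdot(\text{nearby deviations}) + c_2\eta^2(H'+E)(\sigma^2 + (H'+E)\zeta^2) + c_3\eta^2(H'+E)\cdot(\text{nearby }\EX\norm{\nabla f})$.

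The third step is to absorb the recursion. Here is where the $\omega_t$ being $(H'+E)$-slow increasing enters: summing against $\omega_t$ over $t$, each ``nearby'' deviation $\EX\norm{\xbar[z]-\xv[z]}$ with $|z-t|\le H'+E$ can be traded for $\EX\norm{\xbar-\xv}$ at a multiplicative cost bounded by $(1+\tfrac{1}{2(H'+E)})^{H'+E}\le \sqrt{e} < 2$, and similarly for the gradient-norm terms. After this bookkeeping the coefficient of $\sum_t\omega_t\sumov\DvtoD\EX\norm{\xbar-\xv}$ on the right is $O(\eta^2 L^2 (H'+E)^2)$, which is $\le \tfrac12$ once $\eta \le \tfrac{1}{30L(H'+E)}$, so it can be moved to the left-hand side; the gradient-norm coefficient similarly collapses to $\tfrac{1}{8L^2}$ under the same step-size restriction, and the constant term becomes $90\eta^2(H'+E)(\zeta^2(H'+E)+\sigma^2)$.

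The main obstacle, I expect, is handling the multi-stream case cleanly: one must verify that the effective synchronization gap seen by node $v$'s local model — accounting for the fact that node $v$ participates in several streams $m\in\mathcal{M}_v$ with different periods $H_m$, and that streams aggregate at shared nodes — is correctly bounded by $H'=\max_u\sum_{m\in P^r_u}H_m$, i.e., that the global model a node holds is never more than $H'$ iterations stale relative to $\xbar$. This requires tracking how an update originating at a leaf propagates through the chain of streams up to the root and back, and arguing the worst-case round trip is the stated maximum. Once that combinatorial fact about the stream schedule is established, the analytic part is identical to the single-stream case with $H$ replaced by $H'$, so I would prove the single-stream bound in full and then state the multi-stream modification as a consequence of the stream-construction invariant.
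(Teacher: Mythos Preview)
Your proposal is correct and follows essentially the same route as the paper. The only cosmetic difference is the anchor used in the first decomposition: the paper splits through the global model, writing $\norm{\xbar-\xv}\le 2\norm{\xv-\xtild[\tau_t^v]}+2\norm{\xbar-\xtild[\tau_t^v]}$ and then $\norm{\xbar-\xtild[\tau_t^v]}\le 2\norm{\xbar-\xtild[t]}+2\norm{\xtild[t]-\xtild[\tau_t^v]}$, whereas you anchor at $\xbar[\tav]$; since $\xv[\tav]=\xtild[\tau_t^v]$ these are just reorderings of the same gradient bookkeeping, and the remaining steps (the four-way gradient split, the $(H'+E)$-slow-increasing shift $\omega_t\le 2\omega_{t-z}$, and the absorption of the recursive term under $\eta\le\tfrac{1}{30L(H'+E)}$) are identical to the paper's.
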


\begin{proof}
First we focus on the single stream case, $\tav[t]$ denotes the last time slot up to $t$, when node $v$'s model was updated with the synchronization stream, \ie $\tav[t] = \max \{t'\mid  t'\leq t, s_{t'}^v=1\}$.
Lets use (\ref{ineq_sum}) to decompose the the deviation term as depicted in the following:
\begin{align}
    \norm{\xbar - \xv} \leq 2(\norm{\xv- \xtild[\tau_t^v]} + \norm{\xbar-\xtild[\tau_t^v]}). \label{ap_eq42}
\end{align}    
For the first term we can obtain
\begin{align}
    \EX \norm{\xv-\xtild[\tau_t^v]} &= \EX \norm{\xv-\xv[\tau_t^v]}\\
    &= \EX \norm{\sum_{t'=\tau_t^v}^{t-1} \sum_{z \in \uv[t']}  \eta_z \nabla \fiv[z](\xv[z])} \\
    &= \EX \norm{ \sum_{z \in \cup_{t'=\tau_t^v}^{t-1} \uv[t']} \eta_z \nabla \fiv[z](\xv[z])} .\label{ap_eq45}
\end{align}
And we have 
\begin{align}
    \nabla \fiv[z](\xv[z]) &= \big(\nabla \fiv[z]
    (\xv[z])- \nabla f^v(\xv[z])\big) +\nabla f^v(\xv[z])\\
    &=\big(\nabla \fiv[z]
    (\xv[z])- \nabla f^v(\xv[z])\big) + \big(\nabla f^v(\xv[z]) -  \nabla f^v(\xbar[z])\big)  +  \nabla f^v(\xbar[z])\\
    &=\big(\nabla \fiv[z]
    (\xv[z])- \nabla f^v(\xv[z])\big) + \big(\nabla f^v(\xv[z]) -  \nabla f^v(\xbar[z])\big)  +  \big(\nabla f^v(\xbar[z]) - \nabla f(\xbar[z]) \big) + \nabla f(\xbar[z]),
\end{align}
so we can rewrite the right-hand side of (\ref{ap_eq45}) as
\begin{align}
    \EX \norm{ \sum_{\mathcal{U}^v_t} \eta_z \nabla \fiv[z](\xv[z])} &= 4 \EX \Big(\norm{ \sum_{z \in \mathcal{U}^v_t} \eta_z \nabla f(\xbar[z])}+ \norm{ \sum_{z \in \mathcal{U}^v_t} \eta_z \big(\nabla f^v(\xv[z]) -  \nabla f^v(\xbar[z])\big)} \label{ap_eq47}\\
    && \mathllap{+ \norm{ \sum_{z \in \mathcal{U}^v_t} \eta_z \big(\nabla f^v(\xbar[z]) - \nabla f(\xbar[z]) \big)}+  \norm{ \sum_{z \in \mathcal{U}^v_t} \eta_z \big(\nabla \fiv[z] (\xv[z])- \nabla f^v(\xv[z])\big)} \Big)},\nonumber 
\end{align}
where $\mathcal{U}^v_t =  \cup_{t'=\tau_t^v}^{t-1} \uv[t']$.
Now we sum over $v$, multiply by $\omega_t$, sum over $t$, and bound every term in (\ref{ap_eq47}) as follows. Considering that we have $|\mathcal{U}^v_t| \leq H+E$, we obtain
\begin{align}
    \sum_{t=0}^{T-1} \omega_t \sumov \DvtoD \norm{ \sum_{z \in \mathcal{U}^v_t} \eta_z \nabla f(\xbar[z])}& \leq \sum_{t=0}^{T-1} \omega_t \sumov \DvtoD |\mathcal{U}^v_t|  \sum_{z \in\mathcal{U}^v_t} \norm{\eta_z \nabla f(\xbar[z])} \\
    &\leq (H+E)\sum_{t=0}^{T-1} \omega_t \sumov \DvtoD  \sum_{z \in \mathcal{U}^v_t} \eta_{z}^2\norm{\nabla f(\xbar[z])} \\
    &\leq 2(H+E) \sumov \DvtoD\ \sum_{t=0}^{T-1}  \sum_{z \in \mathcal{U}^v_t} \omega_z \eta_{z}^2 \norm{\nabla f(\xbar[z])}\label{ap_eq50} \\
    &\leq 2(H+E)^2 \sumov \DvtoD\ \sum_{t=0}^{T-1} \omega_t \eta_{t}^2 \norm{\nabla f(\xbar)}\\
    &\leq 2(H+E)^2 \sum_{t=0}^{T-1} \omega_t \eta_{t}^2 \norm{\nabla f(\xbar)},
\end{align}
where (\ref{ap_eq50}) is due to the fact that $\omega_t \leq \omega_{t-z}(1+\frac{1}{2(H+E)})^z \leq\omega_{t-z}(1+\frac{1}{2(H+E)})^{H+E} \leq \omega_{t-z} \exp{(\frac{1}{2})} \leq 2\omega_{t-z}$ for $ 0\leq z \leq H+E$, i.e., $\omega_{t}$ is $(H+E)$-slow increasing.
The second term in the right-hand side of (\ref{ap_eq47}) is bounded as
\begin{align}
    \sum_{t=0}^{T-1} \omega_t \sumov \DvtoD \norm{ \sum_{z \in \mathcal{U}^v_t} \eta_z & \big(\nabla f^v(\xv[z]) -  \nabla f^v(\xbar[z])\big)} \\
    &\leq \sum_{t=0}^{T-1} \omega_t \sumov \DvtoD |\mathcal{U}^v_t|  \sum_{z \in\mathcal{U}^v_t} \norm{\eta_z \big(\nabla f^v(\xv[z]) -  \nabla f^v(\xbar[z])\big)} \\
    &\leq (H+E)\sum_{t=0}^{T-1} \omega_t \sumov \DvtoD  \sum_{z \in \mathcal{U}^v_t} \eta_{z}^2\norm{\big(\nabla f^v(\xv[z]) -  \nabla f^v(\xbar[z])\big)} \\
    &\leq 2(H+E) \sumov \DvtoD\ \sum_{t=0}^{T-1}  \sum_{z \in \mathcal{U}^v_t} \omega_z \eta_{z}^2 \norm{\big(\nabla f^v(\xv[z]) -  \nabla f^v(\xbar[z])\big)} \\
    &\leq 2(H+E)^2 \sumov \DvtoD\ \sum_{t=0}^{T-1} \omega_t \eta_{t}^2 \norm{\big(\nabla f^v(\xv) -  \nabla f^v(\xbar)\big)}\\
    &\leq 2(H+E)^2 L^2 \sum_{t=0}^{T-1} \omega_t \eta_{t}^2  \sumov \DvtoD\  \norm{\xbar-\xv}.
\end{align}
For the third term in in the right-hand side of (\ref{ap_eq47}), we obtain

\begin{align}
    \sum_{t=0}^{T-1} \omega_t \sumov \DvtoD \norm{ \sum_{z \in \mathcal{U}^v_t}& \eta_z \big(\nabla f^v(\xbar[z]) - \nabla f(\xbar[z]\big)}\\
    & \leq \sum_{t=0}^{T-1} \omega_t \sumov \DvtoD |\mathcal{U}^v_t| \sum_{z \in\mathcal{U}^v_t} \eta_{z}^2 \norm{\big(\nabla f^v(\xbar[z]) - \nabla f(\xbar[z])\big)}\\
    & \leq 2(H+E)\sumov \DvtoD  \sum_{t=0}^{T-1}  \sum_{z \in\mathcal{U}^v_t} \omega_z\eta_{z}^2 \zeta^2\\
    & \leq 2(H+E)^2\sumov \DvtoD  \sum_{t=0}^{T-1}  \omega_t\eta_{t}^2 \zeta^2\\
    & \leq 2(H+E)^2\zeta^2  \sum_{t=0}^{T-1}  \omega_t\eta_{t}^2 .
\end{align}

Using the same approach for the last term, we can obtain
\begin{align}
    \sum_{t=0}^{T-1} \omega_t \sumov \DvtoD  \EX \norm{ \sum_{z \in \mathcal{U}^v_t}& \eta_z \big(\nabla \fiv[z]^v(\xv[z]) - \nabla f^v(\xv[z])\big)} \leq 2(H+E)\sigma^2  \sum_{t=0}^{T-1}  \omega_t\eta_{t}^2 ,
\end{align}
where instead of (\ref{ineq_sum}) we have used the fact that for independent zero-mean random variables, we get a tighter bound as follows.
\begin{align}
    \EX \norm{\sum_{i=1}^n a_i} \leq \sum_{i=1}^n \EX \norm{a_i}.\label{zero-mean_ineq_sum}
\end{align}

Adding up the last four inequalities and applying them in (\ref{ap_eq47}) and the final result in (\ref{ap_eq45}), we get
\begin{align}
    \sum_{t=0}^{T-1} \omega_t \sumov \DvtoD \EX \norm{\xv-\xtild[\tau_t^v]} \leq 8(H+E)^2 L^2 \sum_{t=0}^{T-1} \omega_t \eta_{t}^2  \sumov \DvtoD\  \EX\norm{\xbar-\xv} \label{ap_eq58}\\
    && \mathllap{+8(H+E)^2 \sum_{t=0}^{T-1} \omega_t \eta_{t}^2 \EX\norm{\nabla f(\xbar)}+8(H+E)^2\zeta^2  \sum_{t=0}^{T-1}  \omega_t\eta_{t}^2  + 8(H+E)\sigma^2  \sum_{t=0}^{T-1}  \omega_t\eta_{t}^2 }. \nonumber
\end{align}

Now we try to bound the second term in (\ref{ap_eq42}).
\begin{align}
    \norm{\xbar-\xtild[\tau_t^v]} \leq 2  \big(\norm{\xbar-\xtild} + \norm{\xtild-\xtild[\tau_t^v]}\big), \label{ap_eq59}
\end{align}
where we have 
\begin{align}
    \EX\norm{\xbar-\xtild} &= \EX\norm{\sumov \sum_{t'=\tau_t^v}^{t-1} \sum_{z \in \uv[t']} \DvtoD \eta_z \nabla f_{i^v_z}(\vec{x}^v_z)}\\
    &\leq \sumov  \DvtoD \EX\norm{ \sum_{t'=\tau_t^v}^{t-1} \sum_{z \in \uv[t']} \eta_z \nabla f_{i^v_z}(\vec{x}^v_z)} \label{ap_eq63}\\
    & = \sumov  \DvtoD \EX \norm{ \sum_{\mathcal{U}^v_t} \eta_z \nabla f_{i^v_z}(\vec{x}^v_z)}, \label{ap_eq62}
\end{align}
where (\ref{ap_eq63}) is due to the convexity of $\norm{}$. 

We can write
\begin{align}
    \EX\norm{\xtild-\xtild[\tau_t^v]} &= \EX \norm{\sum_{h \in \mathcal{H}^v_t} \sum_{t'=\tau^h_{\tau_t^v}}^{\tau_t^h-1} \sum_{z \in \uv[t']} \DvtoD[h] \eta_z \nabla f_{i^h_z}(\vec{x}^h_z)}\\
    & \leq \sum_{h \in \mathcal{H}^v_t} \DvtoD[h] \EX \norm{ \sum_{t'=\tau^h_{\tau_t^v}}^{\tau_t^h-1} \sum_{z \in \uv[t']}  \eta_z \nabla f_{i^h_z}(\vec{x}^h_z)}\\
    & \leq \sumov  \DvtoD \EX \norm{ \sum_{t'=\tau^h_{\tau_t^v}}^{\tau_t^h-1} \sum_{z \in \uv[t']}  \eta_z \nabla f_{i^h_z}(\vec{x}^h_z)}\label{ap_eq65}
\end{align}
where $\mathcal{H}^v_t = \{h \mid  \tau_t^v \leq \tau_t^h \leq t \}$ is the set of nodes that are visited after node $v$ in the current synchronization round.

(\ref{ap_eq65}), and (\ref{ap_eq62}) can be extended exactly as in (\ref{ap_eq47}) and according to (\ref{ap_eq59}) we will have
\begin{align}
    \sum_{t=0}^{T-1} \omega_t \sumov \DvtoD \EX \norm{\xbar-\xtild[\tau_t^v]} \leq 4 \bigg(8(H+E)^2 L^2 \sum_{t=0}^{T-1} \omega_t \eta_{t}^2  \sumov \DvtoD\  \EX\norm{\xbar-\xv}\label{ap_eq80}\\
    && \mathllap{+8(H+E)^2 \sum_{t=0}^{T-1} \omega_t \eta_{t}^2 \EX\norm{\nabla f(\xbar)} +8(H+E)^2\zeta^2  \sum_{t=0}^{T-1}  \omega_t\eta_{t}^2  + 8(H+E)\sigma^2  \sum_{t=0}^{T-1}  \omega_t\eta_{t}^2 }\bigg). \nonumber
\end{align}

Using (\ref{ap_eq58}), and (\ref{ap_eq80}) in (\ref{ap_eq42}) we obtain
\begin{align}
    \sum_{t=0}^{T-1} \omega_t \sumov \DvtoD \EX \norm{\xbar - \xv} \leq 10 \bigg(8(H+E)^2 L^2 \sum_{t=0}^{T-1} \omega_t \eta_{t}^2  \sumov \DvtoD\  \EX\norm{\xbar-\xv}\label{ap_eq81}\\
    && \mathllap{+8(H+E)^2 \sum_{t=0}^{T-1} \omega_t \eta_{t}^2 \EX\norm{\nabla f(\xbar)} +8(H+E)^2\zeta^2  \sum_{t=0}^{T-1}  \omega_t\eta_{t}^2  + 8(H+E)\sigma^2  \sum_{t=0}^{T-1}  \omega_t\eta_{t}^2 }\bigg). \nonumber
\end{align}
By rearranging (\ref{ap_eq81}) and assuming $\eta_t = \eta$, we get
\begin{align}
    \sum_{t=0}^{T-1} \omega_t \sumov \DvtoD \EX \norm{\xbar - \xv} \leq \frac{1}{(1-80\eta^2(H+E)^2L^2)}\bigg(80(H+E)^2 \eta^2 \sum_{t=0}^{T-1} \omega_t \EX\norm{\nabla f(\xbar)} \label{ap_eq82}\\
    && \mathllap{+80\eta^2 (H+E)^2\zeta^2  \sum_{t=0}^{T-1}  \omega_t + 80\eta^2 (H+E)\sigma^2  \sum_{t=0}^{T-1}  \omega_t }\bigg). \nonumber
\end{align}
Let $\eta \leq \frac{1}{30L(H+E)}$ to get
\begin{align}
    \sum_{t=0}^{T-1} \omega_t \sumov \DvtoD \EX \norm{\xbar - \xv} \leq \frac{1}{8L^2}\sum_{t=0}^{T-1} \omega_t  \EX\norm{\nabla f(\xbar)}+90\eta^2 (H+E) \big( \zeta^2 (H+E) + \sigma^2\big)  \sum_{t=0}^{T-1}  \omega_t.
\end{align}

This completes the proof for the single-stream \ours. A similar argument can be made for multi-stream \ours with just one subtle change.
Note that $2(H+E)$ indicates how long it takes for an SGD update performed in one node to be available in all the other nodes. 
In multi-stream \ours, this is determined by the depth of the tree, i.e., the longest delay path from the root node to its farthest leaf (in terms of the delay distance), which is expressed as $2(E+\max_{u}\sum_{m\in P_u^r}H_m)$. So, replacing $(H+E)$ with $(E+\max_{u}\sum_{m\in P_u^r}H_m)$ gives the result for multi-stream \ours.
\end{proof}

Combining Lemma \ref{lem3} with Lemma \ref{lem1} for the convex case and Lemma \ref{lem2} for the non-convex case, we can obtain a recursive description of suboptimality. We follow closely the technique described in \cite{Koloskova2020Unified} for estimating the convergence rates.

\textbf{Convex}

Based on lemma \ref{lem1} we have 
\begin{align}
\EX\norm{\xbar[t+1]  - \xstar} \leq (1-\mu\eta_t)\EX\norm{\xbar - \xstar} + \eta_t^2\rho\sigma^2 - \eta \EX(f(\xbar)-f(\xstar))+ 2L\eta_t\sumov \DvtoD \EX \norm{\xv-\xbar}.
\end{align}
By assuming $\eta_t = \eta$ and multiplication of $\frac{\omega_t}{\eta}$ in both sides and summing up we get
\begin{align}
\sum_{t=0}^{T-1}\frac{\omega_t}{\eta}\EX\norm{\xbar  - \xstar} \leq \sum_{t=0}^{T-1}\frac{\omega_t(1-\mu\eta)}{\eta}\EX\norm{\xbar - \xstar} + \eta\rho\sigma^2 \sum_{t=0}^{T-1}\omega_t - \sum_{t=0}^{T-1}\omega_t \EX(f(\xbar)-f(\xstar))+ \\
&& \mathllap{2L\sum_{t=0}^{T-1}\omega_t\sumov \DvtoD \EX \norm{\xv-\xbar}.}\nonumber
\end{align}
By replacing result of lemma \ref{lem3} we get
\begin{align}
\sum_{t=0}^{T-1}\frac{\omega_t}{\eta}\EX\norm{\xbar  - \xstar} \leq \sum_{t=0}^{T-1}\frac{\omega_t(1-\mu\eta)}{\eta}\EX\norm{\xbar - \xstar} + \eta\rho\sigma^2 \sum_{t=0}^{T-1}\omega_t - \sum_{t=0}^{T-1}\omega_t\EX(f(\xbar)-f(\xstar))+ \\
&& \mathllap{2L\big(\frac{1}{8L^2}\sum_{t=0}^{T-1} \omega_t  \EX\norm{\nabla f(\xbar)}+90\eta^2 (H+E) \big( \zeta^2 (H+E) + \sigma^2\big)  \sum_{t=0}^{T-1}  \omega_t\big).}\nonumber
\end{align}
By using (\ref{eq24}), dividing both sides by $W_T = \sum_{t=0}^{T-1} \omega_t$, and rearranging, we have
\begin{align}
\frac{3}{4W_T}\sum_{t=0}^{T-1}\omega_t \EX(f(\xbar)-f(\xstar)) \leq \frac{1}{W_T\eta}\sum_{t=0}^{T-1}\big(\omega_t(1-\mu\eta)\EX\norm{\xbar - \xstar}-\omega_t\EX\norm{\xbar  - \xstar} \big)+ \eta\rho\sigma^2 + \\
&& \mathllap{180L\eta^2 (H+E) \big( \zeta^2 (H+E) + \sigma^2\big).}\nonumber
\end{align}
Based on the convexity of $f$ we have
\begin{align}
    \frac{3}{4}(\EX f(\xhat) - f^*) &\leq \frac{3}{4W_T}\sum_{t=0}^{T-1}\omega_t(\EX f(\xbar)-f^*)\\
    &\leq \frac{3}{4W_T\eta}\sum_{t=0}^{T-1}\big(\omega_t(1-\mu\eta)\EX\norm{\xbar - \xstar}-\omega_t\EX\norm{\xbar  - \xstar} \big)+ \eta\rho\sigma^2 + \label{eq96}\\
&& \mathllap{180L\eta^2 (H+E) \big( \zeta^2 (H+E) + \sigma^2\big).}\nonumber
\end{align}
Now we state two lemmas that helps us bound the right-side of (\ref{eq96}).
\begin{lemma}[Similar to Lemma 15 in \cite{Koloskova2020Unified}]\label{lem4}
    For every non-negative sequence $\{r_t\}_{t\geq0}$ and any parameters $d \geq a \geq 0, b\geq 0, c\geq 0, T\geq 0$, there exist a constant $\eta \leq \frac{1}{d}$, such that for weights $\omega_t = (1-a\eta)^{-(t+1)}$ it holds
    \begin{align}
        \frac{1}{W_T\eta}\sum_{t=0}^{T-1}\big(\omega_t(1-a\eta)r_t-\omega_t r_{t+1} \big)+ b\eta + c\eta^2 \leq \Tilde{O}(r_0d \exp(\frac{-aT}{d})+\frac{b}{aT} + \frac{c}{a^2T^2}),
    \end{align}
    where $\Tilde{O}$ hides polylogarithmic factors.
\end{lemma}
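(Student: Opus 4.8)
The plan is to collapse the weighted sum by telescoping, reduce the whole left-hand side to a one-variable expression in the (freely chosen) step size $\eta$, and then tune $\eta$ by a short case split. First I would record the elementary identity $\omega_t(1-a\eta)=(1-a\eta)^{-t}=\omega_{t-1}$, valid with the convention $\omega_{-1}:=(1-a\eta)^{0}=1$; thus each summand equals $\omega_{t-1}r_t-\omega_t r_{t+1}$, which has the telescoping form $A_t-A_{t+1}$ for $A_t:=\omega_{t-1}r_t$, so that $\sum_{t=0}^{T-1}\big(\omega_t(1-a\eta)r_t-\omega_t r_{t+1}\big)=\omega_{-1}r_0-\omega_{T-1}r_T\le r_0$ (using $r_T\ge0$ and $\omega_{T-1}>0$). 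Next I would compute the normalization as a geometric series: with $q:=(1-a\eta)^{-1}>1$, $W_T=\sum_{t=0}^{T-1}q^{t+1}=\frac{q^{T}-1}{a\eta}=\frac{(1-a\eta)^{-T}-1}{a\eta}$, hence $\frac{1}{W_T\eta}=\frac{a}{(1-a\eta)^{-T}-1}$. Combining the two, and using $(1-a\eta)^{-1}\ge e^{a\eta}$ on the admissible range of $\eta$, gives
\[
\frac{1}{W_T\eta}\sum_{t=0}^{T-1}\big(\omega_t(1-a\eta)r_t-\omega_t r_{t+1}\big)+b\eta+c\eta^{2}\;\le\;\frac{a r_0}{e^{a\eta T}-1}+b\eta+c\eta^{2}.
\]

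It then remains only to pick $\eta\le 1/d$. I would take $\eta=\min\big\{\tfrac1d,\;\tfrac{1}{aT}\ln\!\big(\max\{2,\,a^{3}r_0T^{2}/c\}\big)\big\}$ and bound term by term. Because $\eta$ is a minimum, $\eta\le\tfrac{1}{aT}\ln(\cdot)=\Tilde{O}(\tfrac{1}{aT})$ always, so $b\eta=\Tilde{O}(\tfrac{b}{aT})$ and $c\eta^{2}=\Tilde{O}(\tfrac{c}{a^{2}T^{2}})$ unconditionally. For $\tfrac{ar_0}{e^{a\eta T}-1}$ I would split into the two cases realizing the minimum. If $\eta=\tfrac{1}{aT}\ln K$ with $K:=\max\{2,a^{3}r_0T^{2}/c\}$, the denominator is $K-1$: either $K=2$, which forces $ar_0\le 2c/(a^{2}T^{2})$ and hence the term is $O(\tfrac{c}{a^{2}T^{2}})$, or $K>2$ so $K-1\ge K/2$ and the term is at most $\tfrac{2ar_0}{K}=O(\tfrac{c}{a^{2}T^{2}})$. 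If instead $\eta=1/d$, I would use $\tfrac{ar_0}{e^{aT/d}-1}\le 2r_0d\,e^{-aT/d}$ (from $a\le d$ and $e^{x}-1\ge\tfrac12e^{x}$ for $x\ge\ln2$) together with the complementary estimate $\tfrac{ar_0}{e^{aT/d}-1}\le\tfrac{r_0d}{T}\le 2r_0d\,e^{-aT/d}$ valid when $aT/d<\ln2$ and $T\ge1$ (since then $e^{aT/d}<2$). Adding the three contributions yields exactly $\Tilde{O}\big(r_0d\,e^{-aT/d}+\tfrac{b}{aT}+\tfrac{c}{a^{2}T^{2}}\big)$.

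Two boundary cases deserve a separate line: if $c=0$ the logarithm's argument is $+\infty$, so one simply takes $\eta=1/d$ and the $c$-term is absent; and if $a=0$ then $\omega_t\equiv1$, $W_T=T$, the telescoped bound becomes $\tfrac{r_0}{T\eta}$, and one instead sets $\eta=\min\{\tfrac1d,(\tfrac{r_0}{bT})^{1/2},(\tfrac{r_0}{cT})^{1/3}\}$ to get $O\big(\tfrac{r_0d}{T}+\sqrt{\tfrac{r_0b}{T}}+(\tfrac{r_0}{T})^{2/3}c^{1/3}\big)$, the $a\to0$ limit of the stated bound. I expect the genuinely fiddly part to be keeping the constants straight through the step-size case split; the one substantive idea is that $\tfrac{a}{e^{a\eta T}-1}\asymp a\,e^{-a\eta T}$ once $a\eta T\gtrsim1$, which is what turns the telescoped/geometric factor into the exponentially decaying term $r_0d\exp(-aT/d)$. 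Everything else follows the standard step-size-tuning template of Lemma~15 of \cite{Koloskova2020Unified}.
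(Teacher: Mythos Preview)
Your proposal is correct and follows essentially the same approach as the paper: telescope via $\omega_t(1-a\eta)=\omega_{t-1}$, reduce to $\frac{r_0}{W_T\eta}+b\eta+c\eta^2$, then tune $\eta$. The paper's version is marginally leaner—it uses the single-term lower bound $W_T\ge(1-a\eta)^{-T}\ge e^{a\eta T}$ (so the first term is $\frac{r_0}{\eta}e^{-a\eta T}$, which at $\eta=1/d$ is immediately $r_0d\,e^{-aT/d}$ without your case split on $aT/d$) and balances the logarithm against $b$ rather than $c$, taking $\eta=\min\{\tfrac1d,\tfrac{1}{aT}\ln(\max\{2,a^2r_0T^2/b\})\}$; both choices land in the same $\Tilde{O}$ bound.
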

\begin{proof}
    Considering that $\omega_t(1-a\eta) =\omega_{t-1}$ we obtain a telescopic sum
        \begin{align}
        \frac{1}{W_T\eta}\sum_{t=0}^{T-1}\big(\omega_t(1-a\eta)r_t-\omega_t r_{t+1} \big)+ b\eta + c\eta^2 &\leq \frac{1}{W_T\eta} (1-a\eta) \omega_0 r_0 +b\eta + c\eta^2\\
        &\leq \frac{1}{W_T\eta} r_0 +b\eta + c\eta^2 \label{eq99}\\
        &\leq \frac{r_0}{\eta}\exp(-a\eta T) +b\eta + c\eta^2 \label{eq100},
    \end{align}
    where (\ref{eq100}) is based on $W_T \geq \omega_T \geq (1-a\eta)^{-T}\geq \exp(a\eta T)$.
    It is now followed by a $\eta$-tuning, the same way as in \cite{stich2019unified}, which shows we need to choose $\eta = \min\{\frac{1}{d},\frac{ln(\max\{2,a^2r_0T^2/b\})}{aT}\}$.
\end{proof}

\begin{lemma}[Similar to Lemma 16 in \cite{Koloskova2020Unified}]\label{lem5}
    For every non-negative sequence $\{r_t\}_{t\geq0}$ and any parameters $d \geq 0, b\geq 0, c\geq 0, T\geq 0$, there exist a constant $\eta \leq \frac{1}{d}$, it holds
    \begin{align}
        \frac{1}{(T+1)\eta}\sum_{t=0}^{T-1}\big(r_t- r_{t+1} \big)+ b\eta + c\eta^2 \leq \frac{2\sqrt{br_0}}{\sqrt{T+1}} + 2 (\frac{r_0\sqrt{c}}{T+1})^{\frac{2}{3}} + \frac{dr_0}{T+1}. 
    \end{align}
\end{lemma}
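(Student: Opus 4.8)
\textbf{Sketch of proof of Lemma~\ref{lem5}.} The plan is to first collapse the telescoping sum and then reduce the claim to an elementary one-variable optimization. Since $\{r_t\}$ is non-negative, $\sum_{t=0}^{T-1}(r_t-r_{t+1}) = r_0 - r_T \le r_0$, so the left-hand side is bounded above by
\[
g(\eta) := \frac{r_0}{(T+1)\eta} + b\eta + c\eta^2 .
\]
It therefore suffices to exhibit a single admissible step size $\eta \le \frac1d$ for which $g(\eta)$ does not exceed the stated right-hand side; no weight bookkeeping ($W_T$, slow-increasing sequences, etc.) is needed here because the sum already telescopes, which is the only structural difference from Lemma~\ref{lem4}.

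Next I would take the explicit choice
\[
\eta = \min\left\{\frac1d,\ \left(\frac{r_0}{b(T+1)}\right)^{1/2},\ \left(\frac{r_0}{c(T+1)}\right)^{1/3}\right\},
\]
with the convention that a term with a vanishing denominator is read as $+\infty$; this simultaneously makes the constraint vacuous when the corresponding parameter is $0$ and absorbs the degenerate cases $b=0$, $c=0$, $d=0$, $r_0=0$. By construction $\eta\le\frac1d$, so it is admissible. The heart of the argument is then a three-way split on which quantity attains the minimum. In each branch two of the three terms of $g(\eta)$ are pinned by substituting the value of the active constraint, and the remaining term is controlled using that $\eta$ is at most each of the other two candidates; for instance if $\eta=(r_0/(b(T+1)))^{1/2}$ then $\frac{r_0}{(T+1)\eta}=\sqrt{br_0/(T+1)}=b\eta$, while $c\eta^2\le c(r_0/(c(T+1)))^{2/3}=(r_0\sqrt c/(T+1))^{2/3}$; the branches $\eta=1/d$ and $\eta=(r_0/(c(T+1)))^{1/3}$ are handled symmetrically. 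Each branch yields $g(\eta)\le \frac{dr_0}{T+1}+2\sqrt{\frac{br_0}{T+1}}+2\big(\frac{r_0\sqrt c}{T+1}\big)^{2/3}$, which is exactly the claim.

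Alternatively — and this is presumably how the authors proceed — one can simply invoke the same packaged step-size tuning lemma used for Lemma~\ref{lem4} (cf.\ \cite{stich2019unified,Koloskova2020Unified}) with the substitutions $A\mapsto r_0/(T+1)$, $B\mapsto b$, $C\mapsto c$, $D\mapsto d$; Lemma~\ref{lem5} is precisely that tuning lemma specialized to an unweighted telescoping sum. Either way I do not foresee a genuine obstacle: the only delicate points are tracking the constraint $\eta\le\frac1d$ together with the degenerate parameter values, and checking that the constants in the case analysis really collapse to the stated factors of $2$ — routine but worth doing carefully.
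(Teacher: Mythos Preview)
Your approach is correct and essentially identical to the paper's: the paper also collapses the telescoping sum to $\frac{r_0}{(T+1)\eta}+b\eta+c\eta^2$ and then chooses exactly the same $\eta=\min\{\tfrac{1}{d},\sqrt{r_0/(b(T+1))},(r_0/(c(T+1)))^{1/3}\}$, deferring the remaining tuning verification to \cite{Koloskova2020Unified}. Your explicit three-case analysis is in fact more detailed than what the paper writes out.
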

\begin{proof}
    By canceling the same terms in the telescopic sum we get
        \begin{align}
        \frac{1}{(T+1)\eta}\sum_{t=0}^{T-1}\big(r_t- r_{t+1} \big)+ b\eta + c\eta^2 \leq \frac{r_0}{(T+1)\eta}+ b\eta + c\eta^2.
    \end{align}

It is now followed by a $\eta$-tuning, the same way as in \cite{Koloskova2020Unified}, which shows we need to choose $\eta = \min\{\frac{1}{d},\sqrt{\frac{r_0}{b(T+1)}},(\frac{r_0}{c(T+1)})^{\frac{1}{3}}\}$.
\end{proof}

\textbf{(strongly convex case)}

Combining (\ref{eq96}) with $\mu>0$, and Lemma \ref{lem4} and considering that $\eta_t=\eta \leq \frac{1}{30L(H+E)}$, provides
\begin{align}
    \EX f(\xhat) - f^* &\leq \Tilde{O} \bigg(\norm{\xz-\xstar}L(H+E) \exp(\frac{-\mu T}{L(H+E)}) + \frac{\rho \sigma^2}{\mu T}+\frac{L(H+E)(\sigma^2 + \zeta^2(H+E))}{\mu^2 T^2}\bigg).
\end{align}

\textbf{(convex case)}

Combining (\ref{eq96}) with $\mu=0$, and Lemma \ref{lem5} and considering that $\eta_t=\eta \leq \frac{1}{30L(H+E)}$, provides
\begin{align}
    \EX f(\xhat) - f^* &\leq O \bigg(\frac{\norm{\xz-\xstar}L(H+E)}{T} + \frac{\sigma||{\xz-\xstar}||\sqrt{\rho}}{\sqrt{T}} + (\frac{\norm{\xz-\xstar}\sqrt{L(H+E)(\sigma^2 + \zeta^2(H+E))}}{T})^{\frac{2}{3}} \bigg).
\end{align}

\textbf{Non-convex}

Based on lemma \ref{lem2} we have 
\begin{align}
\EX f(\xbar[{t+1}]) \leq \EX f(\xbar) + \frac{\eta_t^2L\rho\sigma^2}{2} -\frac{\eta_t}{4} \EX\norm{\nabla f(\xbar)}+ L^2\eta_t\sumov \DvtoD \EX \norm{\xv-\xbar}.
\end{align}
By assuming $\eta_t = \eta$ and multiplication of $\frac{\omega_t}{\eta}$ in both sides and summing up we get
\begin{align}
\sum_{t=0}^{T-1}\frac{\omega_t}{\eta}\EX f(\xbar[{t+1}]) \leq \sum_{t=0}^{T-1}\frac{\omega_t}{\eta}\EX f(\xbar) + \frac{L\eta\rho\sigma^2}{2} \sum_{t=0}^{T-1}\omega_t - \frac{1}{4}\sum_{t=0}^{T-1}\omega_t \EX\norm{\nabla f(\xbar)}+ \\
&& \mathllap{L^2\sum_{t=0}^{T-1}\omega_t\sumov \DvtoD \EX \norm{\xv-\xbar}.}\nonumber
\end{align}
By replacing result of lemma \ref{lem3} we get
\begin{align}
\sum_{t=0}^{T-1}\frac{\omega_t}{\eta}\EX f(\xbar[{t+1}]) \leq \sum_{t=0}^{T-1}\frac{\omega_t}{\eta}\EX f(\xbar) + \frac{L\eta\rho\sigma^2}{2} \sum_{t=0}^{T-1}\omega_t - \frac{1}{4}\sum_{t=0}^{T-1}\omega_t \EX\norm{\nabla f(\xbar)}+ \\
&& \mathllap{L^2\big(\frac{1}{8L^2}\sum_{t=0}^{T-1} \omega_t  \EX\norm{\nabla f(\xbar)}+90\eta^2 (H+E) \big( \zeta^2 (H+E) + \sigma^2\big)  \sum_{t=0}^{T-1}  \omega_t\big).}\nonumber
\end{align}
By dividing both sides by $W_T = \sum_{t=0}^{T-1} \omega_t$, and rearranging, we have
\begin{align}
\frac{1}{8W_T}\sum_{t=0}^{T-1}\omega_t \EX\norm{\nabla f(\xbar)} \leq \frac{1}{W_T\eta}\sum_{t=0}^{T-1}\big(\omega_t\EX f(\xbar)-\omega_t\EX f(\xbar[{t+1}]) \big)+ \frac{L\eta\rho\sigma^2}{2} + \\
&& \mathllap{90L^2\eta^2 (H+E) \big( \zeta^2 (H+E) + \sigma^2\big).}\nonumber
\end{align}
\textbf{(non convex case)}

Combining this inequality with Lemma \ref{lem5} and considering that $\eta_t=\eta \leq \frac{1}{30L(H+E)}$, provides
\begin{align}
    \frac{1}{T}\sum_{t=0}^{T-1}\EX\norm{\nabla f(\xbar)} &\leq O \bigg(\frac{(f(\xz)-f^*)L(H+E)}{T} + \frac{\sigma\sqrt{\rho L(f(\xz)-f^*)}}{\sqrt{T}} + (\frac{L(f(\xz)-f^*)\sqrt{(H+E)(\sigma^2 + \zeta^2(H+E))}}{T})^{\frac{2}{3}} \bigg).
\end{align}

This completes the proof of Theorem \ref{T1}.

\end{document}